\documentclass[10pt]{article}
\usepackage[utf8]{inputenc}
\usepackage[margin=1in]{geometry}
\usepackage{rotating}
\usepackage{booktabs}
\usepackage{array}
\usepackage{amsmath, amsfonts}
\usepackage{color}
\usepackage{hyperref}
\usepackage{bbm}
\usepackage{algorithm}
\usepackage[noend]{algpseudocode}
\usepackage{subcaption}

\usepackage{mathtools}
\mathtoolsset{showonlyrefs}

\usepackage[
    backend=biber,
    style=ieee,
    citestyle=numeric-comp,
    sorting=none,
    giveninits=true,
    natbib,
    hyperref,
    maxbibnames=99,
    doi=true,isbn=false,url=true,eprint=false
]{biblatex}

\addbibresource{bibliography.bib}

\usepackage{amsthm}
\usepackage[normalem]{ulem}
\usepackage{soul}

\newtheorem{theorem}{Theorem} 
\newtheorem{prop}{Proposition}

\newtheorem{definition}{Definition}
\newtheorem{remark}{Remark}

\usepackage{chngcntr}
\usepackage{apptools}
\AtAppendix{\counterwithin{theorem}{section}}
\AtAppendix{\counterwithin{prop}{section}}

\usepackage{mathtools}
\DeclarePairedDelimiter{\ceil}{\lceil}{\rceil}


\setlength{\parskip}{0.5em}
\usepackage[dvipsnames]{xcolor}

\def\Ical{\mathcal{I}}
\def\I{\mathbb{I}}

\def\R{\mathbb{R}}
\def\E{\mathbb{E}}

\def\Tau{\mathcal{T}}
\newcommand{\Y}{\mathcal{Y}}
\newcommand{\X}{\mathcal{X}}

\newcommand{\T}{\mathcal{T}}
\newcommand{\G}{\mathcal{G}}
\renewcommand{\S}{\mathcal{S}}

\newcommand{\lhat}{\hat{\lambda}}
\newcommand{\Rhat}{\widehat{R}}
\newcommand{\ind}[1]{\mathbbm{1}_{\left\{#1\right\}}}
\newcommand{\triangleq}{\stackrel{\triangle}{=}}

\makeatletter
\def\blfootnote{\xdef\@thefnmark{}\@footnotetext}
\makeatother


\title{Distribution-Free, Risk-Controlling Prediction Sets}
\author{Stephen Bates\thanks{equal contribution}, Anastasios Angelopoulos\footnotemark[1], Lihua Lei\footnotemark[1], Jitendra Malik, Michael I.~Jordan}
\date{\today}

\begin{document}

\maketitle

\begin{abstract}
While improving prediction accuracy has been the focus of machine learning in recent years, this alone does not suffice for reliable decision-making. 
Deploying learning systems in consequential settings also requires calibrating and communicating the uncertainty of predictions.
To convey instance-wise uncertainty for prediction tasks, we show how to generate set-valued predictions from a black-box predictor that control the expected loss on future test points at a user-specified level.
Our approach provides explicit finite-sample guarantees for any dataset by using a holdout set to calibrate the size of the prediction sets. 
This framework enables simple, distribution-free, rigorous error control for many tasks, and we demonstrate it in five large-scale machine learning problems: 
(1) classification problems where some mistakes are more costly than others; 
(2) multi-label classification, where each observation has multiple associated labels; 
(3) classification problems where the labels have a hierarchical structure; 
(4) image segmentation, where we wish to predict a set of pixels containing an object of interest; and
(5) protein structure prediction.
Lastly, we discuss extensions to uncertainty quantification for ranking, metric learning and distributionally robust learning.
\end{abstract}

\section{Introduction}
\label{sec:introduction}

Black-box predictive algorithms have begun to be deployed in many real-world decision-making settings. Problematically, however, these algorithms are rarely accompanied by reliable uncertainty quantification. Algorithm developers often depend on the standard training/validation/test paradigm to make assertions of accuracy, stopping short of any further attempt to indicate that an algorithm's predictions should be treated with skepticism. Thus, prediction failures will often be silent ones, which is particularly alarming in high-consequence settings. \blfootnote{see project website at \href{http://www.angelopoulos.ai/blog/posts/rcps/}{\textcolor{blue}{angelopoulos.ai/blog/posts/rcps/}}}

While one reasonable response to this problem involves retreating from black-box prediction, such a retreat raises many unresolved problems, and it is clear that black-box prediction will be with us for some time to come.  A second response is to modify black-box prediction procedures so that they provide reliable uncertainty quantification, thereby supporting a variety of post-prediction activities, including risk-sensitive decision-making, audits, and protocols for model improvement.

We introduce a method for modifying a black-box predictor to return a set of plausible responses that  limits the frequency of costly errors to a level chosen by the user. Returning a set of responses is a useful way to represent uncertainty, since such sets can be readily constructed from any existing predictor and, moreover, they are often interpretable. We call our proposed technique \emph{risk-controlling prediction sets} (RCPS).  The idea is to produce prediction sets that provide distribution-free, finite-sample control of a general loss.

As an example, consider classifying MRI images as in Figure~\ref{fig:mri_ex}.
Each image can be classified into one of several diagnostic categories. 
We encode the consequence (\emph{loss}) of making a mistake on an image as 100 for the most severe mistake (class \texttt{stroke}) and as 0.1 for the least severe mistake (class \texttt{normal}).
Our procedure returns a set of labels, such as those denoted by the red, blue, and green brackets in Figure~\ref{fig:mri_ex}. This output set represents the plausible range of patient diagnoses, accounting for their respective severities.  Our procedure returns sets that are guaranteed to keep average loss (\emph{risk}) on future data below a user-specified level, under a set of assumptions that we make explicit. To do this, the size of the output set is chosen based on the accuracy of the classifier and the desired risk level---a lower accuracy classifier or a more strict risk level will require larger sets to guarantee risk control.
Because of the explicit guarantee on our output, a doctor could safely exclude diagnoses outside the set and test for those within.

Formally, for a test point with features $X \in \X$, a response $Y \in \Y$, we consider set-valued predictors $\T(X) : \X \to \Y'$ where $\Y'$ is some space of sets; we take $\Y' = 2^\Y$ in the MRI above example and for most of this work. We then have a loss function on set-valued predictions $L: \Y \times \Y' \to \R$ that encodes our notion of consequence, and seek a predictor $\T$, that controls the risk $R(\T)=\E\big[L(Y,\T(X))]$.
For example, in our MRI setting, if the first argument is a label $y \notin \T(X)$, and the second argument is $\T(X)$, the loss function outputs the cost of {\em not} predicting $y$.
Our goal in this work is to create set-valued predictors from training data that have risk that is below some desired level $\alpha$, with high probability. Specifically, we seek the following:
\begin{definition}[Risk-controlling prediction sets]
Let $\T$ be a random function taking values in the space of functions $\X \to \Y'$ (e.g., a functional estimator trained on data). We say that $\T$ is a \emph{$(\alpha,\delta)$-risk-controlling prediction set} if, with probability at least $1-\delta$, we have $R(\T) \le \alpha$.
\label{def:rcps}
\end{definition}
\noindent The error level $(\alpha, \delta)$ is chosen in advance by the user. The reader should think of 10\% as a representative value of $\delta$; the choice of $\alpha$ will vary with the choice of loss function.

\begin{figure}[t]
    \centering
    \includegraphics[width=0.9\textwidth]{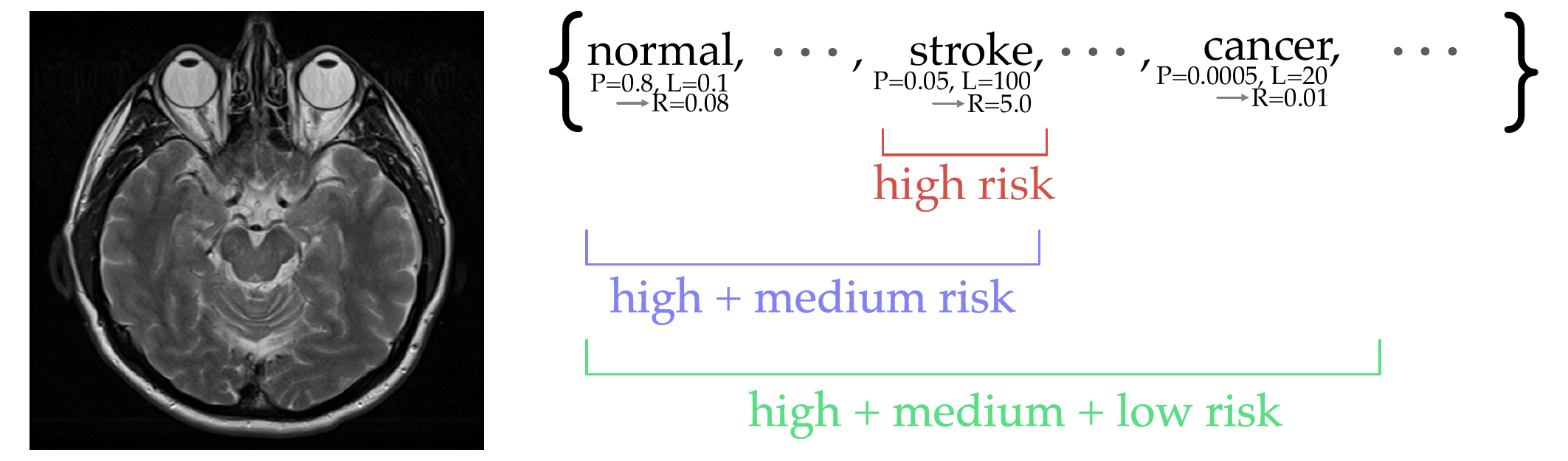}
    \caption{\textbf{A stylized example of risk-controlling prediction sets.} Here, ``P'' gives the estimated probability for each class, the loss per class is labeled as ``L,'' and the loss times the probability is the estimated risk, labeled as ``R.'' The red, blue, and green brackets represent possible sets of labels that our procedure may output.}
    \label{fig:mri_ex}
\end{figure}

\subsection*{Related work}
Prediction sets have a long history in statistics, going back at least to tolerance regions in the 1940s \cite{wilks1941, wilks1942, wald1943, tukey1947}. Tolerance regions are sets that contain a desired fraction of the population distribution with high probability. For example, one may ask for a region that contains 90\% of future test points with probability 99\% (over the training data). See \cite{krishnamoorthy2009statistical} for an overview of tolerance regions. Recently, tolerance regions have been instantiated to form prediction sets for deep learning models \cite{Park2020PAC, park2021pac}.
In parallel, conformal prediction \cite{vovk1999machine, vovk2005algorithmic} has been recognized as an attractive way of producing predictive sets with finite-sample guarantees. A particularly convenient form of conformal prediction, known as \emph{split conformal prediction} \cite{papadopoulos2002inductive, lei2013conformal}, uses data splitting to generate prediction sets in a computationally efficient way; see also \cite{vovk2015cross, barber2019jackknife} for generalizations that re-use data for improved  statistical efficiency. Conformal prediction is a generic approach, and much recent work has focused on designing specific conformal procedures to have good performance according to metrics such as small set sizes \cite{Sadinle2016LeastAS}, approximate coverage in all regions of feature space \cite{barber2019limits, romano2019conformalized, izbicki2019flexible, romano2020classification, cauchois2020knowing, guan2020conformal, angelopoulos2020sets}, and errors balanced across classes \cite{lei2014classification, Sadinle2016LeastAS, hechtlinger2018cautious, guan2019prediction}. Further extensions of conformal prediction address topics such as distribution estimation \cite{vovk2019conformal}, causal inference \cite{lei2020conformal}, and handling or testing distribution shift \cite{tibshirani2019conformal, cauchois2020robust, hu2020distributionfree}.
As an alternative to conformal prediction and tolerance regions, there is also a set of techniques that approach the tradeoff between small sets and high coverage by defining a utility function balancing these two considerations and finding the set-valued predictor that maximizes this utility \cite[e.g.,][]{grycko1993classification, delcoz2009learning, mortier2020efficient}. The present work concerns the construction of tolerance regions with a user-specified coverage guarantee, and we do not pursue this latter formulation here.

In the current work, we expand the notion of tolerance regions to apply to a wider class of losses for set-valued predictors. Our development is inspired by the nested set interpretation of conformal prediction articulated in \cite{gupta2020nested}, and our proposed algorithm is somewhat similar to split conformal prediction. Unlike conformal prediction, however, we pursue the high-probability error guarantees of tolerance regions and thus rely on entirely different proof techniques---see \cite{vovk2012conditional} for a discussion of their relationship. As one concrete instance of this framework, we introduce a family of set-valued predictors that generalizes those of \cite{Sadinle2016LeastAS} to produce small set-valued predictions in a wide range of settings.

\subsection*{Our contribution}

The central contribution of this work is a procedure to calibrate prediction sets to have finite-sample control of any loss satisfying a certain monotonicity requirement.
The calibration procedure applies to any set-valued predictor, but we also show how to take any standard (non-set-valued) predictor and turn it into a set-valued predictor that works well with our calibration procedure.
Our algorithm includes the construction of tolerance regions as special case, but applies to many other problems; this work explicitly considers classification with different penalties for different misclassification events, multi-label classification, classification with hierarchically structured classes, image segmentation, prediction problems where the response is a 3D structure, ranking, and metric learning.

\section{Upper Confidence Bound Calibration}

This section introduces our proposed method to calibrate any set-valued predictor so that it is guaranteed to have risk below a user-specified level, i.e., so that it satisfies Definition~\ref{def:rcps}.

\begin{figure}[t]
    \centering
    \includegraphics[width = 3in, trim = 0 0cm 0cm 0, clip]{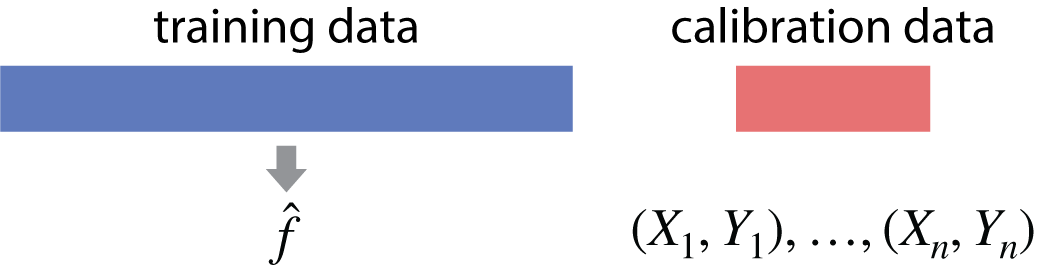}
    \caption{\textbf{Sample-splitting setup.} The training data is used to fit a predictive model $\hat{f}$. The remaining data is used to calibrate a set-valued predictor (based on $\hat{f}$) to control risk, as described in this work.}
    \label{fig:data_split}
\end{figure}

\subsection{Setting and notation}

Let $(X_i, Y_i)_{i = 1,\dots,m}$ be an independent and identically distributed (i.i.d.) set of variables, where the features vectors $X_i$ take values in $\X$ and the response $Y_i$ take values in $\Y$. To begin, split our data into a \emph{training set} and a \emph{calibration set}. Formally, let $\{\Ical_{\textnormal{train}}, \Ical_{\textnormal{cal}}\}$ form a partition of $\{1,\dots,m\}$, and let $n = |\Ical_{\textnormal{cal}}|$. Without loss of generality, we take $\Ical_{cal} = \{1,\dots,n\}$.
We allow the researcher to fit a predictive model on the training set $\Ical_{\textnormal{train}}$ using an arbitrary procedure, calling the result $\hat{f}$, a function from $\X$ to some space $\mathcal{Z}$.
The remainder of this paper shows how to subsequently create set-valued predictors from $\hat{f}$ that control a certain statistical error notion, regardless of the quality of the initial model fit or the distribution of the data.  For this task, we will only use the calibration points $(X_1,Y_1),\dots, (X_n,Y_n)$. See Figure~\ref{fig:data_split} for a visualization of our setting.

Next, let $\T : \X \to \Y'$ be a set-valued function (a \emph{tolerance region}) that maps a feature vector to a set-valued prediction. This function would typically be constructed from the predictive model, $\hat{f}$, which was fit on the training data---see the example in Figure~\ref{fig:mri_ex}. We will describe one possible construction in detail in Section~\ref{sec:making_sets}.
We further suppose we have a collection of such set-valued predictors indexed by a one-dimensional parameter $\lambda$ taking values in a closed set $\Lambda \subset \R \cup \{\pm\infty\}$ that are {\em nested}, meaning that larger values of $\lambda$ lead to larger sets:
\begin{equation}
\label{eq:nested_sets}
    \lambda_1 < \lambda_2 \implies \T_{\lambda_1}(x) \subset \T_{\lambda_2}(x).
\end{equation}

To capture a notion of error, let $L(y, \S) : \Y \times \Y' \to \R_{\ge 0}$ be a {\em loss function} on prediction sets. For example, we could take $L(y, \S) = \ind{y \notin \S}$, which is the loss function corresponding to classical tolerance regions. We require that the loss function respects the following nesting property:
\begin{equation}
\S \subset \S' \implies L(y, \S) \ge L(y, \S').
\label{eq:monotone_loss}
\end{equation}
That is, larger sets lead to smaller loss. 
We then define the {\em risk} of a set-valued predictor $\T$ to be
\begin{equation*}
R\big(\T) = \E\big[L(Y, \T(X))\big].
\end{equation*}
Since we will primarily be considering the risk of the tolerance functions from the family $\{\T_\lambda\}_{\lambda \in \Lambda}$, we will use the notational shorthand $R(\lambda)$ to mean $R(\T_\lambda)$. We further assume that there exists an element $\lambda_{\max}\in \Lambda$ such that $R(\lambda_{\max}) = 0$.

\subsection{The procedure}
Recalling Definition~\ref{def:rcps}, our goal is to find a set function whose risk is less than some user-specified threshold $\alpha$. 
To do this, we search across the collection of functions $\{\T_\lambda\}_{\lambda \in \Tau}$ and estimate their risk on data not used for model training, $\Ical_{cal}$.
We then show that by choosing the value of $\lambda$ in a certain way, we can guarantee that the procedure has risk less than $\alpha$ with high probability. 

We assume that we have access to a pointwise upper confidence bound (UCB) for the risk function for each $\lambda$:
\begin{equation}
    P\bigg(R(\lambda) \le \underbrace{\widehat{R}^+(\lambda)}_{\text{UCB}} \bigg) \ge 1 - \delta,
\label{eq:general_bound}
\end{equation}
where $\widehat{R}^+(\lambda)$ may depend on $(X_1, Y_1), \dots, (X_n, Y_n)$. 
We will present a generic strategy to obtain such bounds by inverting a concentration inequality as well as concrete bounds for various settings in Section~\ref{sec:concentration}. We choose $\lhat$ as the smallest value of $\lambda$ such that the entire confidence region to the right of $\lambda$ falls below the target risk level $\alpha$: 
\begin{equation}
\lhat \triangleq \inf \left\{\lambda \in \Lambda : \Rhat^+(\lambda') < \alpha, \,\, \forall \lambda' \ge \lambda \right\}.
\label{eq:lambda_hat_def}
\end{equation}
See Figure~\ref{fig:diffwidths} for a visualization.

\begin{figure}[ht]
    \centering
    \includegraphics[width=4in]{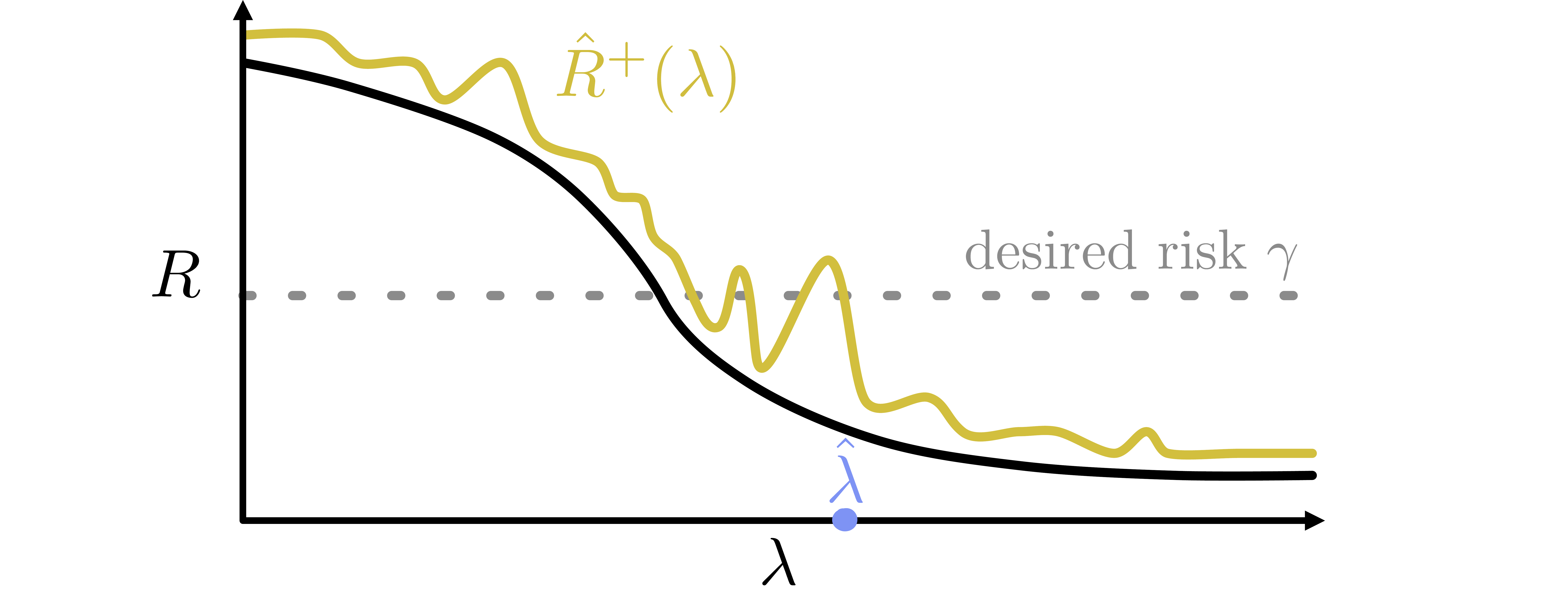}
    \caption{\textbf{Visualization of UCB calibration.}}
    \label{fig:diffwidths}
\end{figure}

This choice of $\lambda$ results in a set-valued predictor that controls the risk with high probability:
\begin{theorem}[Validity of UCB calibration]
    \label{thm:abstract_control}
    Let $(X_i, Y_i)_{i = 1,\dots,n}$ be an i.i.d.\  sample, let $L(\cdot, \cdot)$ be a loss satisfying the monotonicity condition in \eqref{eq:monotone_loss}, and let $\{\T_\lambda \}_{\lambda \in \Lambda}$ be a collection of set predictors satisfying the nesting property in \eqref{eq:nested_sets}.
    Suppose \eqref{eq:general_bound} holds pointwise for each $\lambda$, and that $R(\lambda)$ is continuous.
    Then for $\hat{\lambda}$ chosen as in \eqref{eq:lambda_hat_def},
    \begin{equation}
    P\left(R(\T_{\hat{\lambda}}) \le \alpha\right) \ge 1 - \delta.
    \end{equation}
    That is, $\T_{\hat{\lambda}}$ is a $(\alpha, \delta)$-RCPS.
\end{theorem}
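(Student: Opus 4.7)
The plan is to exploit the monotone structure of $R(\lambda)$ so that a single pointwise application of the UCB at a suitably chosen population threshold suffices to control the risk of $\T_{\lhat}$.

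First, I would verify that $R(\lambda)$ is non-increasing in $\lambda$: combining the nesting property \eqref{eq:nested_sets} with the loss monotonicity \eqref{eq:monotone_loss} gives $L(Y, \T_{\lambda_1}(X)) \ge L(Y, \T_{\lambda_2}(X))$ pointwise for any $\lambda_1 < \lambda_2$, and taking expectations yields $R(\lambda_1) \ge R(\lambda_2)$. Next, define the population threshold $\lambda^* = \inf\{\lambda \in \Lambda : R(\lambda) \le \alpha\}$. This set is nonempty because $R(\lambda_{\max}) = 0$; if it equals all of $\Lambda$ the theorem is immediate, so I would proceed under the assumption that it is a proper subset. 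Then by continuity of $R$ together with the minimality of $\lambda^*$, one obtains $R(\lambda^*) = \alpha$ exactly.

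The key step is to apply the pointwise UCB \eqref{eq:general_bound} at the single value $\lambda = \lambda^*$. This produces an event $E$ with $P(E) \ge 1 - \delta$ on which $\widehat{R}^+(\lambda^*) \ge R(\lambda^*) = \alpha$. On $E$, the defining condition of $\lhat$ in \eqref{eq:lambda_hat_def}---namely $\widehat{R}^+(\lambda') < \alpha$ for every $\lambda' \ge \lambda$---cannot be satisfied for any $\lambda \le \lambda^*$, since it already fails at $\lambda' = \lambda^*$; hence $\lhat \ge \lambda^*$ on $E$. The monotonicity established in the first step then gives $R(\T_{\lhat}) \le R(\lambda^*) = \alpha$ throughout $E$, delivering the claimed probability guarantee.

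The main obstacle is securing the exact identity $R(\lambda^*) = \alpha$ rather than the weaker inequality $R(\lambda^*) \le \alpha$; a strict inequality would leave room for $\widehat{R}^+(\lambda^*) < \alpha$ on the good event, which would invalidate the deduction that $\lhat \ge \lambda^*$. The continuity assumption on $R$ is exactly what rules this out, by precluding a downward jump of $R$ across $\lambda^*$. Beyond that the proof is bookkeeping: all probabilistic content is encapsulated in the assumed UCB, which is invoked only at the single deterministic point $\lambda^*$, so no union bound or uniform concentration argument over $\Lambda$ is needed.
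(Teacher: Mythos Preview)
Your proposal is correct and essentially identical to the paper's proof: both define $\lambda^*=\inf\{\lambda:R(\lambda)\le\alpha\}$, use continuity to obtain $R(\lambda^*)=\alpha$, apply the pointwise UCB \eqref{eq:general_bound} only at $\lambda^*$, and then combine the definition of $\lhat$ with monotonicity of $R$. The sole stylistic difference is that the paper argues by contraposition (assume $R(\lhat)>\alpha$ and show this forces $\Rhat^+(\lambda^*)<\alpha$, an event of probability at most $\delta$), whereas you argue directly on the good event; the logical content is the same.
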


All proofs are presented in Appendix~\ref{app:proofs}. 
Note that we are able to turn a pointwise convergence result into a result on the validity of a data-driven choice of $\lambda$.  This is due to the monotonicity of the risk function; without the monotonicity, we would need a uniform convergence result on the empirical risk in order to get a similar guarantee. Next, we will show how to get the required concentration in \eqref{eq:general_bound} for cases of interest, so that we can carry out the UCB calibration algorithm. Later, in Section~\ref{sec:experiments}, we will introduce several concrete loss functions and empirically evaluate the performance of the UCB calibration algorithms in a variety of prediction tasks.

\begin{remark}
Upper confidence bound calibration holds in more generality than the concrete instantiation above. The result holds for any monotone $R(\lambda)$ with a pointwise upper confidence bound $\hat{R}^+(\lambda)$. We present the general statement in Appendix~\ref{app:proofs}.
\end{remark}

\begin{remark}
The above result also implies that UCB calibration gives an RCPS even if the data used to fit the initial preditive model comes from a different distribution. The only requirement is that the calibration data and the test data come from the same distribution.
\end{remark}

\begin{remark}
We assumed that $R(\cdot)$ is continuous for simplicity, but this condition can be removed with minor modifications. The upper confidence bound is not assumed to be continuous.
\end{remark}

\section{Concentration Inequalities for the Upper Confidence Bound}
\label{sec:concentration}
In this section, we develop upper confidence bounds as in \eqref{eq:general_bound} under different conditions on the loss function, which will allow us to use the UCB calibration procedure for a variety of prediction tasks. In addition, for settings for which no finite-sample bound is available, we give an asymptotically valid upper confidence bound. Software implementing the upper confidence bounds is available in this project's \href{https://github.com/aangelopoulos/rcps}{\textcolor{blue}{public GitHub repository}} along with code to exactly reproduce our experimental results.

\subsection{Bounded losses}
We begin with the case where our loss is bounded above, and without loss of generality we take the bound to be one. We will present several upper confidence bounds and compare them in numerical experiments. The confidence bound of Waudby-Smith and Ramdas \cite{waudby2020variance} is the clear winner, and ultimately we recommend this bound for use in all cases with bounded loss.

\subsubsection{Illustrative case: the simplified Hoeffding bound}
\label{sec:hoeffding}
It is natural to construct an upper confidence bound for $R(\lambda)$ based on the empirical risk, the average loss of the set-valued predictor $\T_\lambda$ on the calibration set: 
\begin{equation*}
\widehat{R}(\lambda) \triangleq \frac{1}{n} \sum_{i=1}^n L\left(Y_i, \T_\lambda(X_i)\right).
\end{equation*}
As a warm-up, recall the following simple version of Hoeffding's inequality:
\begin{prop}[Hoeffding's inequality, simple version  \cite{hoeffding1963}]
Suppose the loss is bounded above by one. Then,
\begin{equation*}
P\left( \widehat{R}(\lambda) - R(\lambda) \le -x\right) \le \exp\{-2nx^2\}.
\label{eq:hoeffding_bound}
\end{equation*}
\end{prop}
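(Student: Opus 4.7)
The plan is to apply the standard Chernoff--Hoeffding argument to the i.i.d.\ sequence $Z_i \triangleq R(\lambda) - L(Y_i, \T_\lambda(X_i))$. By assumption each loss lies in $[0,1]$, so $Z_i$ is mean zero and supported on an interval of length one, and moreover $\widehat{R}(\lambda) - R(\lambda) = -\frac{1}{n}\sum_{i=1}^n Z_i$. The probability of interest is therefore $P\!\left(\sum_i Z_i \ge nx\right)$, which I will bound exponentially in $nx^2$.

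First, I would pass to an exponential moment via Markov's inequality: for any $t > 0$,
\begin{equation*}
P\!\left(\sum_{i=1}^n Z_i \ge nx\right) \;\le\; e^{-tnx}\,\E\!\left[e^{t\sum_{i=1}^n Z_i}\right] \;=\; e^{-tnx}\prod_{i=1}^n \E[e^{tZ_i}],
\end{equation*}
where the factorization uses independence of the calibration observations. Next, I would invoke Hoeffding's lemma, which states that a mean-zero random variable supported on an interval of length one has moment generating function bounded by $e^{t^2/8}$. Applying it termwise gives $\prod_i \E[e^{tZ_i}] \le e^{nt^2/8}$, so the overall bound reduces to $\exp(-tnx + nt^2/8)$.

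Finally, I would optimize over $t > 0$. Setting the derivative of the exponent to zero yields $t^\ast = 4x$, and substituting back produces exponent $-2nx^2$, giving the claimed bound $e^{-2nx^2}$.

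The only nontrivial ingredient is Hoeffding's lemma itself, whose classical proof dominates $e^{tz}$ by its linear interpolant on the support interval (using convexity) and then controls the log moment generating function by a Taylor expansion. Since the proposition explicitly cites \cite{hoeffding1963}, I would treat the lemma as a black box; beyond it, every step is routine and I foresee no real obstacle.
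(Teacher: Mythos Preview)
Your argument is correct and is the standard Chernoff--Hoeffding proof. The paper does not actually prove this proposition: it is stated as a classical result with a citation to \cite{hoeffding1963} and used as a black box, so there is no ``paper's own proof'' to compare against beyond noting that what you wrote is precisely the textbook derivation underlying that citation.
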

This implies an upper confidence bound 
\begin{equation}\label{eq:Rhat_sim_hoeffding}
\widehat{R}^+_{{\rm sHoef}}(\lambda) = \widehat{R}(\lambda) + \sqrt{\frac{1}{2n}\log\left(\frac{1}{\delta}\right)}.
\end{equation}
Applying Theorem~\ref{thm:abstract_control} with 
\begin{align}
\lhat = \hat{\lambda}^{\textnormal{sHoef}} 
&\triangleq \inf \left\{\lambda \in \Lambda : \Rhat_{{\rm sHoef}}^+(\lambda') < \alpha, \,\, \forall \lambda' \ge \lambda \right\} \nonumber \\
&= \inf \left\{\lambda \in \Lambda : \widehat{R}(\lambda) < \alpha - \sqrt{\frac{1}{2n}\log\left(\frac{1}{\delta}\right)} \right\},
\label{eq:lhat_hoeffding}
\end{align}
we can generate an RCPS, which we record formally below.
\begin{theorem}[RCPS from Hoeffding's inequality]
\label{thm:hoeff}
In the setting of Theorem~\ref{thm:abstract_control}, assume additionally that the loss is bounded by one. Then, $\T_{\hat{\lambda}^{\textnormal{sHoef}}}$ is a $(\alpha, \delta)$-RCPS.
\end{theorem}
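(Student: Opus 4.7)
The plan is to obtain Theorem~\ref{thm:hoeff} as a direct corollary of the abstract calibration guarantee in Theorem~\ref{thm:abstract_control}; the only substantive step is to verify that $\widehat{R}^+_{{\rm sHoef}}(\lambda)$ from \eqref{eq:Rhat_sim_hoeffding} is a valid pointwise upper confidence bound in the sense of \eqref{eq:general_bound}.

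First, I would fix an arbitrary $\lambda \in \Lambda$. Because the loss is bounded in $[0,1]$, the variables $L(Y_i, \T_\lambda(X_i))$ for $i \in \Ical_{\textnormal{cal}}$ are i.i.d., take values in $[0,1]$, and have mean $R(\lambda)$; thus $\widehat{R}(\lambda)$ is their empirical mean and the simplified Hoeffding inequality stated just above applies verbatim. Plugging in $x = \sqrt{(2n)^{-1}\log(1/\delta)}$ gives
\[
P\!\left(\widehat{R}(\lambda) - R(\lambda) \le -\sqrt{\tfrac{1}{2n}\log(1/\delta)}\right) \le \delta,
\]
which on rearrangement is exactly $P(R(\lambda) \le \widehat{R}^+_{{\rm sHoef}}(\lambda)) \ge 1 - \delta$, i.e.\ the pointwise UCB property required by \eqref{eq:general_bound}.

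Second, I would note that the remaining hypotheses of Theorem~\ref{thm:abstract_control} are inherited from the phrase ``in the setting of Theorem~\ref{thm:abstract_control}'': the calibration data is i.i.d., the loss satisfies the monotonicity condition \eqref{eq:monotone_loss}, the family $\{\T_\lambda\}$ is nested as in \eqref{eq:nested_sets}, and $R(\lambda)$ is continuous. Invoking Theorem~\ref{thm:abstract_control} with $\widehat{R}^+ = \widehat{R}^+_{{\rm sHoef}}$ and with $\hat\lambda = \hat\lambda^{\textnormal{sHoef}}$ defined as in \eqref{eq:lhat_hoeffding} immediately yields $P(R(\T_{\hat\lambda^{\textnormal{sHoef}}}) \le \alpha) \ge 1 - \delta$, i.e.\ the $(\alpha, \delta)$-RCPS conclusion.

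There is essentially no obstacle here beyond bookkeeping; all the content is already in Hoeffding's inequality and in Theorem~\ref{thm:abstract_control}. The one spot where one might briefly pause is justifying the alternate formula for $\hat\lambda^{\textnormal{sHoef}}$ on the second line of \eqref{eq:lhat_hoeffding}, which drops the ``$\forall \lambda' \ge \lambda$'' quantifier in favor of a single pointwise condition on $\widehat{R}(\lambda)$. This equivalence follows because the Hoeffding slack $\sqrt{(2n)^{-1}\log(1/\delta)}$ does not depend on $\lambda$ and $\widehat{R}(\lambda)$ inherits monotonicity in $\lambda$ from \eqref{eq:nested_sets}–\eqref{eq:monotone_loss}; but this simplification is not needed to conclude $(\alpha,\delta)$-risk control, it only describes how $\hat\lambda^{\textnormal{sHoef}}$ is computed in practice.
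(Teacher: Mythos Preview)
Your proposal is correct and matches the paper's approach: the paper does not give a separate proof for this theorem in the appendix but instead states inline (just before the theorem) that it follows by ``Applying Theorem~\ref{thm:abstract_control}'' with the Hoeffding-based $\widehat{R}^+_{\rm sHoef}$, which is precisely what you spell out. Your additional remarks verifying \eqref{eq:general_bound} from Hoeffding's inequality and explaining the simplification in the second line of \eqref{eq:lhat_hoeffding} are correct elaborations of what the paper leaves implicit.
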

In view of \eqref{eq:lhat_hoeffding}, UCB calibration with this version of Hoeffding's bound results in a procedure that is simple to state---one selects the smallest set size such that the empirical risk on the calibration set is below $\alpha - \sqrt{\log(1/\delta) / 2n}$. This result is only presented for illustration purposes, however. Much tighter  concentration results are available, so in practice we recommend using the better bounds described next.

\subsubsection{Hoeffding--Bentkus bound}
\label{sec:hbb}
In general, an upper confidence bound can be obtained if the lower tail probability of $\widehat{R}(\lambda)$ can be controlled, in the following sense:
\begin{prop}\label{prop:generic}
Suppose $g(t; R)$ is a nondecreasing function in $t\in \mathbb{R}$ for every $R$:
\begin{equation}\label{eq:generic}
P(\widehat{R}(\lambda)\le t)\le g(t; R(\lambda)).
\end{equation}
Then, $\widehat{R}^{+}(\lambda) = \sup\left\{R: g(\widehat{R}(\lambda); R)\ge \delta\right\}$ satisfies \eqref{eq:general_bound}.
\end{prop}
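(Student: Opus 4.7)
The plan is to bound the complementary event and show that $P(R(\lambda) > \widehat{R}^+(\lambda)) \le \delta$. Write $R^* = R(\lambda)$ for brevity. The first step is to unpack the definition of the supremum: if $R^* > \widehat{R}^+(\lambda) = \sup\{R : g(\widehat{R}(\lambda); R) \ge \delta\}$, then $R^*$ cannot lie in that set (any element is bounded above by the supremum), so $g(\widehat{R}(\lambda); R^*) < \delta$. Hence
\begin{equation*}
P\bigl(R^* > \widehat{R}^+(\lambda)\bigr) \le P\bigl(g(\widehat{R}(\lambda); R^*) < \delta\bigr).
\end{equation*}

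The second step converts the right-hand side into a statement about the CDF of $\widehat{R}(\lambda)$ using the monotonicity of $g(\cdot; R^*)$, and then applies the hypothesis \eqref{eq:generic}. Let $t^* = \sup\{t : g(t; R^*) < \delta\}$. Because $g(\cdot; R^*)$ is nondecreasing, the event $\{g(\widehat{R}(\lambda); R^*) < \delta\}$ is contained in either $\{\widehat{R}(\lambda) \le t^*\}$ or $\{\widehat{R}(\lambda) < t^*\}$, depending on whether $g(t^*; R^*)$ is itself below $\delta$. In the first case, \eqref{eq:generic} at $t = t^*$ gives probability at most $g(t^*; R^*) < \delta$. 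In the second, take any $t < t^*$; monotone convergence of probability yields $P(\widehat{R}(\lambda) < t^*) = \sup_{t < t^*} P(\widehat{R}(\lambda) \le t) \le \sup_{t < t^*} g(t; R^*) \le \delta$. Either way, the bound $\delta$ is attained, and combining with the first step finishes the proof.

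The main (and only) subtlety I expect is the edge-case bookkeeping around $t^*$: the supremum defining $\widehat{R}^+(\lambda)$ need not be attained and $g$ is not assumed to be left-continuous in $t$, so strict versus non-strict inequalities have to be tracked carefully. This is entirely dispatched by monotone continuity of probability, and no extra regularity on $g$ is needed. Conceptually, the proposition is an instance of the standard duality between one-sided hypothesis tests and confidence bounds: inverting a monotone tail-probability bound on the test statistic $\widehat{R}(\lambda)$ automatically yields a valid upper confidence bound for the parameter $R(\lambda)$.
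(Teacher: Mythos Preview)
Your proof is correct and follows essentially the same route as the paper: both begin by observing that $R(\lambda) > \widehat{R}^+(\lambda)$ forces $g(\widehat{R}(\lambda); R(\lambda)) < \delta$, and then bound the probability of this event by $\delta$. The only difference is in how the second step is executed. The paper introduces the true CDF $G$ of $\widehat{R}(\lambda)$, uses the pointwise bound $G \le g(\cdot; R(\lambda))$ to pass to $\{G(\widehat{R}(\lambda)) < \delta\}$, and finishes with a probability-integral-transform argument. You instead work directly with the monotone function $g(\cdot; R^*)$, defining $t^* = \sup\{t : g(t; R^*) < \delta\}$ and splitting on whether $g(t^*; R^*) < \delta$. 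Your version is slightly more self-contained and makes the edge-case bookkeeping around the supremum explicit; the paper's version is a bit slicker but leans on the reader recognizing the integral-transform step. Either way, the argument is the same in substance.
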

This result shows how a tail probability bound can be inverted to yield an upper confidence bound. Put another way, $g(\widehat{R}(\lambda); R)$ is a conservative p-value for testing the one-sided null hypothesis $H_0: R(\lambda)\ge R$. From this perspective, Proposition \ref{prop:generic} is simply a restatement of the duality between p-values and confidence intervals.

The previous discussion of the simple Hoeffding bound is a special case of this proposition, but stronger results are possible. The rest of this section develops a sharper tail bound that builds on two stronger concentration inequalities. 

We begin with a tighter version of Hoeffding's inequality. 
\begin{prop}[Hoeffding's inequality, tighter version \cite{hoeffding1963}]\label{prop:hoeffding}
Suppose the loss is bounded above by one. Then for any $t < R(\lambda)$,
\begin{equation}
P\left(\widehat{R}(\lambda)\le t\right) \le \exp\{-nh_1(t; R(\lambda))\},
\label{eq:hoeffding_bound_tight}
\end{equation}
where $h_1(t; R) = t \log(t / R) + (1 - t) \log((1-t) / (1- R))$.
\end{prop}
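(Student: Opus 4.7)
The statement is the standard Chernoff-style bound for sums of bounded i.i.d.\ random variables, recognising $h_1(t;R)$ as the Kullback--Leibler divergence between Bernoulli$(t)$ and Bernoulli$(R)$. My plan is the textbook Chernoff argument: exponentiate, use Markov, factorise by independence, bound each MGF by the Bernoulli MGF, then optimise the tilt parameter.

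Concretely, write $L_i = L(Y_i, \T_\lambda(X_i)) \in [0,1]$ with $\E[L_i] = R(\lambda) =: R$, and note that $\widehat{R}(\lambda) = \tfrac1n \sum_{i=1}^n L_i$ is a mean of i.i.d.\ variables by the i.i.d.\ assumption. For any $s > 0$, Markov's inequality applied to $e^{-s\widehat{R}(\lambda)}$ gives
\begin{equation}
P\!\left(\widehat{R}(\lambda) \le t\right) = P\!\left(e^{-s\widehat{R}(\lambda)} \ge e^{-st}\right) \le e^{st}\, \E\!\left[e^{-s\widehat{R}(\lambda)}\right].
\end{equation}
By independence, $\E[e^{-s\widehat{R}(\lambda)}] = \prod_{i=1}^n \E[e^{-(s/n) L_i}]$. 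Since $x \mapsto e^{-ux}$ is convex on $[0,1]$ for any $u>0$, we have the pointwise bound $e^{-ux} \le (1-x) + x e^{-u}$ for $x \in [0,1]$; taking expectations yields $\E[e^{-uL_i}] \le 1 - R + R e^{-u}$. Setting $u = s/n$ and combining,
\begin{equation}
P\!\left(\widehat{R}(\lambda) \le t\right) \le \bigl(e^{(s/n)t}\,(1 - R + R e^{-s/n})\bigr)^n.
\end{equation}

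The remaining step is to optimise the right-hand side in $u = s/n > 0$. Differentiating the log of $f(u) := e^{ut}(1-R+Re^{-u})$ and setting the derivative to zero gives the stationary point $e^{u^\star} = \frac{R(1-t)}{t(1-R)}$, which is strictly greater than $1$ precisely because $t < R$; hence the optimiser lies in the admissible range $u > 0$. Substituting back, $1 - R + Re^{-u^\star} = (1-R)/(1-t)$, so
\begin{equation}
\log f(u^\star) = t\log\!\frac{R}{t} + (1-t)\log\!\frac{1-R}{1-t} = -h_1(t;R),
\end{equation}
and raising to the $n$-th power yields the claim $P(\widehat{R}(\lambda) \le t) \le \exp\{-n h_1(t; R)\}$.

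There is no real obstacle: the convexity bound on $\E[e^{-uL_i}]$ is the only place the hypothesis $L_i \in [0,1]$ is used, and the optimisation is a one-line calculus exercise whose KKT conditions automatically place $u^\star > 0$ under the assumption $t < R(\lambda)$. A minor care-point is to verify that the Bernoulli MGF bound is valid for \emph{all} $u > 0$ (not just small $u$), which is immediate from convexity; this is what distinguishes the present sharper form from the symmetric subgaussian bound of Section~\ref{sec:hoeffding}.
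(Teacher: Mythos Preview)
Your proof is correct and is precisely Hoeffding's original Chernoff argument. The paper does not supply its own proof of this proposition; it is stated as a cited result from \cite{hoeffding1963}, so there is nothing to compare against beyond noting that your derivation reproduces the classical one.
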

The weaker Hoeffding inequality is implied by Proposition \ref{prop:hoeffding} using the fact that $h_1(t; R)\ge 2(t - R)^2$. Another strong inequality is the Bentkus inequality, which implies that the Binomial distribution is the worst case up to a small constant.
The Bentkus inequality is nearly tight if the loss function is binary, in which case $n\Rhat(\lambda)$ is binomial. 

\begin{prop}[Bentkus inequality \cite{bentkus2004hoeffding}]\label{prop:bentkus}
Suppose the loss is bounded above by one. Then,
\begin{equation}
P\left( \widehat{R}(\lambda) \le t\right) \le eP\left({\rm Binom}(n,R(\lambda)) \le \lceil nt \rceil\right),
\label{eq:bentkus_bound}
\end{equation}
where ${\rm Binom}(n,p)$ denotes a binomial random variable with sample size $n$ and success probability $p$.
\end{prop}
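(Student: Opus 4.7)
The plan is to invoke Bentkus's inequality from \cite{bentkus2004hoeffding} applied to the i.i.d.\ random variables $L_i = L(Y_i, \mathcal{T}_\lambda(X_i))$, which take values in $[0,1]$ with mean $R(\lambda)$. Since $\widehat{R}(\lambda) = \frac{1}{n}\sum_{i=1}^n L_i$, the claimed inequality is exactly the statement of Bentkus's lower-tail bound for bounded i.i.d.\ averages; there is nothing to prove beyond citing that paper. The remainder of this sketch outlines the two ingredients underlying that result, to make clear why the binomial distribution plays the role of a worst case (up to the factor $e$).

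The first ingredient is a convex-ordering fact going back to Hoeffding: among distributions on $[0,1]$ with fixed mean $p$, the Bernoulli$(p)$ distribution is maximal in the convex order, so for any convex $\phi$,
\begin{equation*}
\mathbb{E}\bigl[\phi\bigl(n\widehat{R}(\lambda)\bigr)\bigr] \le \mathbb{E}\bigl[\phi\bigl(\mathrm{Binom}(n, R(\lambda))\bigr)\bigr].
\end{equation*}
The second is a Markov-style sandwich: for any nonnegative $\phi$ nonincreasing on $(-\infty, k]$,
\begin{equation*}
\mathbf{1}\{S_n \le k\} \le \frac{\phi(S_n)}{\inf_{s \le k} \phi(s)},
\end{equation*}
where $S_n = n\widehat{R}(\lambda)$ and $k = \lceil nt \rceil$. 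Taking $\phi$ convex, nonnegative, and decreasing on $(-\infty,k]$, and combining the two steps yields
\begin{equation*}
P\bigl(\widehat{R}(\lambda) \le t\bigr) \le \frac{\mathbb{E}[\phi(\mathrm{Binom}(n, R(\lambda)))]}{\inf_{s \le k}\phi(s)}.
\end{equation*}
The natural strategy is now to optimize over such $\phi$ and compare the resulting bound to $P(\mathrm{Binom}(n, R(\lambda)) \le k)$.

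The main obstacle, and the heart of Bentkus's contribution, is showing that this optimization loses at most a multiplicative factor of $e$ uniformly in $n$, $p$, and $k$. This step relies on a careful combinatorial analysis of ratios of adjacent binomial probabilities and on identifying a near-optimal piecewise-polynomial $\phi$ that aligns with the descending staircase of the binomial CDF. Since this sharp constant is precisely the content of \cite{bentkus2004hoeffding}, I would simply cite that paper rather than reproduce the combinatorial estimate, exactly as the authors do here.
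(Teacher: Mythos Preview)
Your proposal is correct and matches the paper's treatment: the proposition is simply a citation of Bentkus's result, with no proof given in the paper itself. Your sketch of the two ingredients (Hoeffding's convex-ordering extremality of the Bernoulli and the optimized Markov bound losing at most a factor $e$) is accurate and goes beyond what the paper provides.
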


Putting Proposition \ref{prop:hoeffding} and  \ref{prop:bentkus} together, we obtain a lower tail probability bound for $\Rhat(\lambda)$:
\begin{align}
g^{\mathrm{HB}}(t; R(\lambda)) \triangleq \min\Bigg(& \exp\{-nh_1(t; R(\lambda))\}, \,\, eP\left({\rm Binom}(n,R(\lambda)) \le \lceil nt \rceil\right)\Bigg).\label{eq:HB_g}
\end{align}
By Proposition \ref{prop:generic}, we obtain a $(1 - \delta)$ upper confidence bound for $R(\lambda)$ as 
\begin{equation}\label{eq:Rhat_HB}
\Rhat_{\mathrm{HB}}^+(\lambda) = \sup\Big\{R: g^{\mathrm{HB}}(\Rhat(\lambda); R)\ge \delta\Big\}.
\end{equation}
We obtain $\lhat^{{\rm HB}}$ from $\Rhat_{\mathrm{HB}}^+(\lambda)$ as in \eqref{eq:lambda_hat_def} and conclude the following:
\begin{theorem}[RCPS from the Hoeffding--Bentkus bound]\label{thm:HB}
    In the setting of Theorem~\ref{thm:abstract_control}, assume additionally that the loss is bounded by one. Then, $\T_{\lhat^{{\rm HB}}}$ is a $(\alpha,\delta)$-RCPS.
\end{theorem}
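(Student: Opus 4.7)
The plan is to reduce Theorem~\ref{thm:HB} to Theorem~\ref{thm:abstract_control} by verifying that $\Rhat_{\mathrm{HB}}^+(\lambda)$ defined in \eqref{eq:Rhat_HB} is a valid pointwise $(1-\delta)$ upper confidence bound for $R(\lambda)$, i.e.\ that it satisfies \eqref{eq:general_bound}. Once this is done, the nesting property \eqref{eq:nested_sets} and monotonicity of the loss \eqref{eq:monotone_loss} (which together imply monotonicity of $R$), combined with the assumed boundedness, put us exactly in the hypothesis of Theorem~\ref{thm:abstract_control}, and the conclusion $P(R(\T_{\lhat^{\rm HB}})\le \alpha)\ge 1-\delta$ follows immediately.

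To verify the pointwise UCB property, I would invoke Proposition~\ref{prop:generic} with $g = g^{\mathrm{HB}}$. Two things must be checked: (i) the lower tail bound $P(\Rhat(\lambda)\le t)\le g^{\mathrm{HB}}(t; R(\lambda))$, and (ii) $t\mapsto g^{\mathrm{HB}}(t;R)$ is nondecreasing for every $R$. Claim (i) is immediate from the definition of $g^{\mathrm{HB}}$ in \eqref{eq:HB_g} as the minimum of the Hoeffding bound from Proposition~\ref{prop:hoeffding} and the Bentkus bound from Proposition~\ref{prop:bentkus}: if each factor individually upper-bounds $P(\Rhat(\lambda)\le t)$, so does their minimum. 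For values of $t$ where the Hoeffding expression is not formally defined (e.g.\ $t\ge R(\lambda)$), the Bentkus term in the minimum still provides a valid bound, so no case analysis is lost.

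For claim (ii), the Bentkus factor $eP({\rm Binom}(n,R)\le \lceil nt\rceil)$ is nondecreasing in $t$ because it is a constant multiple of a CDF of a random variable evaluated at $\lceil nt\rceil$. For the Hoeffding factor, note that $h_1(t;R)$ is the Kullback--Leibler divergence between $\mathrm{Bernoulli}(t)$ and $\mathrm{Bernoulli}(R)$; differentiating gives $\partial_t h_1(t;R) = \log\frac{t(1-R)}{R(1-t)}$, which is negative for $t<R$, so $h_1(\cdot;R)$ is decreasing on the relevant range and $\exp\{-n h_1(t;R)\}$ is correspondingly nondecreasing in $t$. The minimum of two nondecreasing functions is nondecreasing, giving (ii).

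With (i) and (ii) in hand, Proposition~\ref{prop:generic} yields $P(R(\lambda)\le \Rhat_{\mathrm{HB}}^+(\lambda))\ge 1-\delta$ for each fixed $\lambda$, which is exactly \eqref{eq:general_bound}. Plugging this UCB into Theorem~\ref{thm:abstract_control} with $\lhat = \lhat^{\rm HB}$ chosen as in \eqref{eq:lambda_hat_def} completes the proof. There is no serious obstacle here: the only care needed is the monotonicity check for $g^{\mathrm{HB}}$ and the handling of the region $t\ge R(\lambda)$ where the Hoeffding inequality as stated in Proposition~\ref{prop:hoeffding} is vacuous, both of which are routine.
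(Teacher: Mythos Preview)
Your proposal is correct and mirrors the paper's own (implicit) argument: the paper simply observes that Proposition~\ref{prop:generic} applied to $g^{\mathrm{HB}}$ gives a valid pointwise UCB and then invokes Theorem~\ref{thm:abstract_control}, without spelling out the monotonicity check you supply. One small caveat: for $t\ge R$ the Hoeffding factor $\exp\{-n h_1(t;R)\}$ is \emph{not} a valid tail bound (and is decreasing in $t$ there), so ``the Bentkus term in the minimum still provides a valid bound'' does not quite work as stated---the clean fix is to read the Hoeffding factor as $1$ on $\{t\ge R\}$, after which both your claim (i) and claim (ii) hold globally.
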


\begin{remark}
 The Bentkus inequality is closely related to an exact confidence region for the mean of a binomial distribution. In the special where the loss takes values only in $\{0,1\}$, this exact binomial result gives the most precise upper confidence bound and should always be used; see Appendix~\ref{app:binary_loss}.
\end{remark}

\subsubsection{Waudby-Smith--Ramdas bound}
Although the Hoeffding--Bentkus bound is nearly tight for binary loss functions, for non-binary loss functions, it can be very loose because it does not adapt to the variance of $L(Y_i, \T_\lambda(X_i))$. As an example, consider the extreme case where $\mathrm{Var}(L(Y_i, \T_\lambda(X_i))) = 0$, then $\widehat{R}(\lambda) = R(\lambda)$ almost surely, and hence $\widehat{R}^+(\lambda)$ can be set as $\widehat{R}(\lambda)$. In general, when $\mathrm{Var}(L(Y_i, \T_\lambda(X_i)))$ is small, the tail probability bound can be much tighter than that given by the Hoeffding--Bentkus bound. We next present a bound that is adaptive to the variance and improves upon the previous result in most settings.

The most well-known concentration result incorporating the variance is Bernstein's inequality \citep{bernstein1924modification}. To accommodate the case where the variance is unknown and must be estimated, \cite{maurer2009empirical} proposed an empirical Bernstein inequality which replaces the variance by the empirical variance estimate. This implies the following upper confidence bound for $R(\lambda)$:
\begin{equation}\label{eq:eBern}
\Rhat_{\mathrm{eBern}}^+(\lambda) = \Rhat(\lambda) +  \hat{\sigma}(\lambda)\sqrt{\frac{2\log (2 / \delta)}{n}} + \frac{7\log (2 / \delta)}{3(n - 1)}, \quad \text{where }\hat{\sigma}^2(\lambda) = \frac{1}{n-1}\sum_{i=1}^{n}(L(Y_i, T_\lambda(X_i)) - \Rhat(\lambda))^2.
\end{equation}
However, the constants in the empirical Bernstein inequality are not tight, and improvements are possible.

As an alternative bound that adapts to the unknown variance, \cite{waudby2020variance} recently proposed the \emph{hedged capital confidence interval} for the mean of bounded random variables, drastically improving upon the empirical Bernstein inequality. Unlike all aforementioned bounds, it is not based on inverting a tail probability bound for $\Rhat(\lambda)$, but instead builds on tools from online inference and martingale analysis. 
For our purposes, we consider an one-sided variant of their result, which we refer to as the Waudby-Smith--Ramdas (WSR) bound. 

\begin{prop}[Waudby-Smith--Ramdas bound \cite{waudby2020variance}]\label{prop:WSR}
Let $L_i(\lambda) = L(Y_i, T_\lambda(X_i))$ and 
$$
\hat{\mu}_i(\lambda) = \frac{1/2 + \sum_{j=1}^{i}L_{j}(\lambda)}{1 + i}, \,\, \hat{\sigma}_i^2(\lambda) = \frac{1/4 + \sum_{j=1}^{i}(L_{j}(\lambda) - \hat{\mu}_{j}(\lambda))^2}{1 + i}, \,\, \nu_{i}(\lambda) = \min\left\{1, \sqrt{\frac{2\log(1 / \delta)}{n\hat{\sigma}_{i-1}^2(\lambda)}}\right\}.
$$
Further let 
$$
\mathcal{K}_i(R; \lambda) = \prod_{j=1}^{i}\left\{1 - \nu_j(\lambda) (L_j(\lambda) - R)\right\}, \quad \Rhat_{\mathrm{WSR}}^+(\lambda) = \inf\left\{R \ge 0: \max_{i = 1,\ldots, n}\mathcal{K}_i(R; \lambda) > \frac{1}{\delta}\right\}.
$$
Then $\Rhat_{\mathrm{WSR}}^+(\lambda)$ is a $(1 - \delta)$ upper confidence bound for $R(\lambda)$.
\end{prop}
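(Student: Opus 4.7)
The strategy is to interpret $\mathcal{K}_i(R; \lambda)$ as the capital process in a sequential betting game, show that when $R$ equals the true risk $R(\lambda)$ this process is a nonnegative martingale starting at $1$, apply Ville's maximal inequality to bound the probability that it ever exceeds $1/\delta$, and finally convert this back to a statement about $\Rhat_{\mathrm{WSR}}^+(\lambda)$ by exploiting monotonicity of $\mathcal{K}_i(R; \lambda)$ in $R$.

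In more detail, fix $\lambda$, write $R^* = R(\lambda)$ and $L_j = L_j(\lambda)$, and let $\mathcal{F}_i = \sigma(L_1, \ldots, L_i)$. The first thing to check is that $\nu_j(\lambda)$ is $\mathcal{F}_{j-1}$-measurable, since by construction it depends only on $\hat{\sigma}_{j-1}^2(\lambda)$, which in turn is a function of $L_1, \ldots, L_{j-1}$. Define $M_i = \mathcal{K}_i(R^*; \lambda)$, with the convention $M_0 = 1$. Each factor $1 - \nu_j(L_j - R^*)$ is nonnegative because $\nu_j \in [0, 1]$ and $L_j - R^* \in [-1, 1]$, so $M_i \ge 0$. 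Moreover, predictability of $\nu_j$ together with the i.i.d.\ assumption $E[L_j \mid \mathcal{F}_{j-1}] = R^*$ yields $E[1 - \nu_j(L_j - R^*) \mid \mathcal{F}_{j-1}] = 1$, and therefore $E[M_j \mid \mathcal{F}_{j-1}] = M_{j-1}$. Integrability is immediate from the crude bound $M_i \le 2^i$. Thus $(M_i)_{i \ge 0}$ is a nonnegative martingale with mean $1$, and Ville's inequality gives $P(\max_{0 \le i \le n} M_i \ge 1/\delta) \le \delta$, hence $P(\max_i \mathcal{K}_i(R^*; \lambda) > 1/\delta) \le \delta$.

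To close the argument, I would observe that each factor $1 - \nu_j(L_j - R) = 1 + \nu_j(R - L_j)$ is nondecreasing in $R$ and remains nonnegative for every $R \ge 0$ (using $\nu_j \le 1$ and $L_j \le 1$), so $\mathcal{K}_i(R; \lambda)$ is nondecreasing in $R \ge 0$. Consequently, the set $\{R \ge 0 : \max_i \mathcal{K}_i(R; \lambda) > 1/\delta\}$ is upward-closed and $\Rhat_{\mathrm{WSR}}^+(\lambda)$ is its infimum. If $\Rhat_{\mathrm{WSR}}^+(\lambda) < R^*$, there must exist $R' < R^*$ lying in this set, and monotonicity forces $\max_i \mathcal{K}_i(R^*; \lambda) > 1/\delta$. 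Combining with the Ville bound yields $P(\Rhat_{\mathrm{WSR}}^+(\lambda) < R^*) \le \delta$, which is the desired conclusion.

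The main obstacle is the martingale verification: one must correctly identify the filtration for which $\nu_j$ is predictable and carefully use the i.i.d.\ mean condition $E[L_j] = R^*$ inside the conditional expectation. After that step, Ville's inequality and the monotonicity-in-$R$ observation are routine, but the latter is essential because without it the event $\{\Rhat_{\mathrm{WSR}}^+(\lambda) < R^*\}$ cannot be compared to a single martingale crossing event.
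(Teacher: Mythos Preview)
Your proposal is correct and follows essentially the same route as the paper: set up the natural filtration, verify that $\nu_j$ is predictable, show $(\mathcal{K}_i(R(\lambda);\lambda))_i$ is a nonnegative martingale with mean one, apply Ville's inequality, and then use monotonicity of $\mathcal{K}_i(R;\lambda)$ in $R$ to convert the martingale-crossing bound into the coverage statement for $\Rhat_{\mathrm{WSR}}^+(\lambda)$. If anything, you are slightly more careful than the paper in two places---recording the integrability bound $M_i\le 2^i$ and explicitly justifying nonnegativity of the factors for all $R\ge 0$ before invoking monotonicity of the product.
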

Since the result is a small modification of the one stated in \cite{waudby2020variance}, for completeness we present a proof in Appendix \ref{app:proofs}.
As before, we then set $\lhat^{{\rm WSR}}$ as in \eqref{eq:lambda_hat_def} to obtain the following corollary:
\begin{theorem}[RCPS from the Waudby-Smith--Ramdas bound]\label{thm:WSR}
    In the setting of Theorem~\ref{thm:abstract_control}, assume additionally that the loss is bounded by $1$. Then, $\T_{\lhat^{{\rm WSR}}}$ is a $(\alpha,\delta)$-RCPS.
\end{theorem}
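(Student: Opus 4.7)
The plan is to reduce Theorem~\ref{thm:WSR} to a direct invocation of the abstract validity result Theorem~\ref{thm:abstract_control}, using Proposition~\ref{prop:WSR} to discharge the pointwise upper confidence bound hypothesis. Concretely, I need to verify each of the five ingredients demanded by Theorem~\ref{thm:abstract_control}: (i) i.i.d.\ calibration data, (ii) the monotonicity condition \eqref{eq:monotone_loss} on $L$, (iii) the nesting property \eqref{eq:nested_sets} on $\{\T_\lambda\}_{\lambda\in\Lambda}$, (iv) continuity of $\lambda\mapsto R(\lambda)$, and (v) a pointwise $(1-\delta)$ upper confidence bound $\widehat{R}^+$ satisfying \eqref{eq:general_bound}. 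Items (i)--(iv) are inherited verbatim from the phrase ``in the setting of Theorem~\ref{thm:abstract_control}'' in the hypotheses, so the only substantive task is (v).

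For (v), I would take $\widehat{R}^+(\lambda) = \Rhat_{\mathrm{WSR}}^+(\lambda)$ as defined immediately before the theorem. Since the added assumption of Theorem~\ref{thm:WSR} is precisely that the loss is bounded by $1$, the hypotheses of Proposition~\ref{prop:WSR} are satisfied, and that proposition delivers exactly the inequality $P\bigl(R(\lambda)\le \Rhat_{\mathrm{WSR}}^+(\lambda)\bigr)\ge 1-\delta$ required by \eqref{eq:general_bound}. With all five conditions in place and $\lhat^{\mathrm{WSR}}$ defined as in \eqref{eq:lambda_hat_def} with $\Rhat^+ = \Rhat_{\mathrm{WSR}}^+$, Theorem~\ref{thm:abstract_control} applies and yields $P\bigl(R(\T_{\lhat^{\mathrm{WSR}}})\le \alpha\bigr)\ge 1-\delta$, which is the definition of an $(\alpha,\delta)$-RCPS.

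The ``hard part'' of this argument is not really located in the proof of Theorem~\ref{thm:WSR} itself, which is essentially a single sentence once Proposition~\ref{prop:WSR} is in hand; it lies in establishing Proposition~\ref{prop:WSR}. That proposition is not an inversion of a tail bound on $\Rhat(\lambda)$ (so the template of Proposition~\ref{prop:generic} does not apply) but rather a nonnegative supermartingale argument: one shows that for the true mean $R = R(\lambda)$ the capital process $\mathcal{K}_i(R;\lambda)$ is a nonnegative martingale with initial value $1$, because each factor $1-\nu_j(\lambda)(L_j(\lambda)-R)$ has conditional mean $1$ given the past (the bets $\nu_j$ being predictable and the truncation $\nu_j\le 1$ together with $L_j\in[0,1]$ and $R\in[0,1]$ keeping the factors nonnegative). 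Ville's maximal inequality then gives $P(\max_i \mathcal{K}_i(R(\lambda);\lambda) > 1/\delta)\le \delta$, and rearranging in terms of the infimum over $R$ yields the claimed one-sided coverage for $\Rhat_{\mathrm{WSR}}^+$. Once that lemma is accepted, Theorem~\ref{thm:WSR} follows immediately by the reduction described above.
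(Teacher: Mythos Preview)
Your proposal is correct and matches the paper's approach exactly: the paper treats Theorem~\ref{thm:WSR} as an immediate corollary of Theorem~\ref{thm:abstract_control}, with the pointwise UCB hypothesis supplied by Proposition~\ref{prop:WSR} (whose proof in the appendix is precisely the nonnegative-martingale/Ville argument you sketch). There is no separate written proof of Theorem~\ref{thm:WSR} in the paper beyond the sentence ``As before, we then set $\lhat^{\mathrm{WSR}}$ as in \eqref{eq:lambda_hat_def} to obtain the following corollary,'' so your reduction is exactly what is intended.
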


\subsubsection{Numerical experiments for bounded losses}
We now evaluate the aforementioned bounds on random samples from a variety of distributions on $[0,1]$. As an additional point of comparison, we also consider a bound based on the central limit theorem (CLT) that does not have finite-sample guarantees, formally defined later in Section~\ref{sec:clt_control}. In particular, given a distribution $F$ for the loss $L(Y, \T_\lambda(X))$, we sample $L_1, \ldots, L_n\stackrel{\mathrm{i.i.d.}}{\sim} F$ and compute the $(1 - \delta)$ upper confidence bound of the mean for $n\in \{\lfloor 10^r\rfloor: r = 2, 2.5, 3, 3.5, 4\}$ and $\delta \in \{0.1, 0.01, 0.001\}$.  We present the results for $\delta = 0.1$ here and report on other choices of $\delta$s in Appendix \ref{app:additional_plots}. 
Based on one million replicates of each setting, we report the coverage and the median gap between the UCB and true mean; the former measures the validity and the latter measures the power.

We consider the Bernoulli distribution, $F = \mathrm{Ber}(\mu)$, and the Beta distribution, $F = \mathrm{Beta}(a, b)$ with $b = a(1/\mu - 1)$. Note that both distributions have mean $\mu$. Since a user would generally be interested in setting $\alpha$ in $[0.001, 0.1]$ in practice, we set $\mu \in \{0.1, 0.01, 0.001\}$. To account for different levels of variability, we set $a\in \{0.1, 1, 10\}$ for the Beta distribution, with a larger $a$ yielding a tighter concentration around the mean. 
We summarize the results in Figure \ref{subfig:UCB_bounded_coverage_delta0.1}. First, we observe that the CLT does not always have correct coverage, especially when the true mean is small, unless the sample size is large. Accordingly, we recommend the bounds with finite-sample guarantees in this case. Next, as shown in Figure \ref{subfig:UCB_bounded_medgap_delta0.1}, the WSR bound outperforms the others for all Beta distributions and has a similar performance to the HB bound for Bernoulli distributions. It is not surprising that the HB bound performs well for binary variables since the Bentkus inequality is nearly tight here. 
Based on these observations, we recommend the WSR bound for any non-binary bounded loss. When the loss is binary, one should use the exact result based on quantiles of the binomial distribution; see Appendix~\ref{app:binary_loss}.

\begin{figure}
\begin{subfigure}{0.49\textwidth}
\includegraphics[width = 0.98\textwidth]{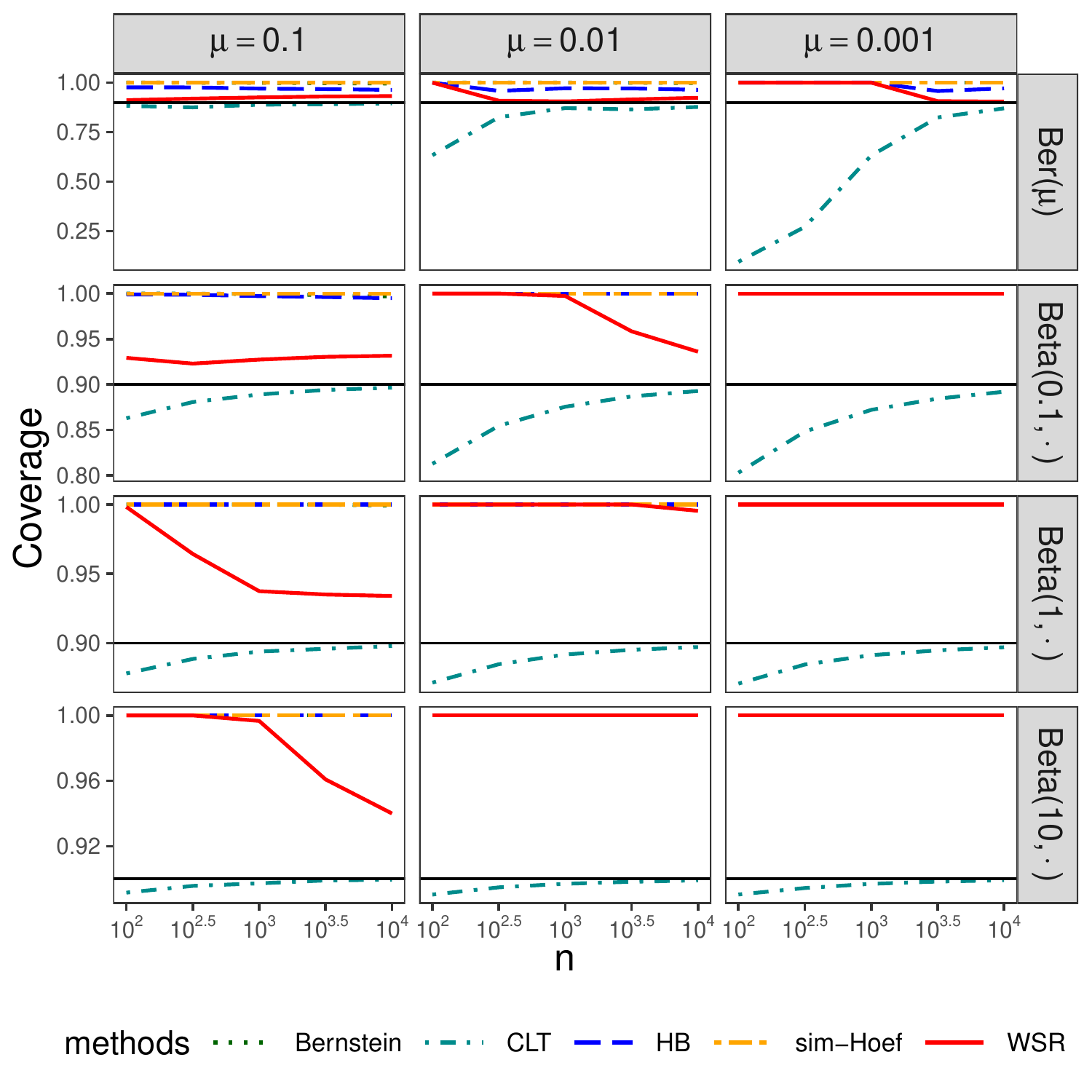}
\caption{Coverage $P(\Rhat(\lambda) \ge R(\lambda))$}\label{subfig:UCB_bounded_coverage_delta0.1}
\end{subfigure}
\begin{subfigure}{0.49\textwidth}
\includegraphics[width = 0.98\textwidth]{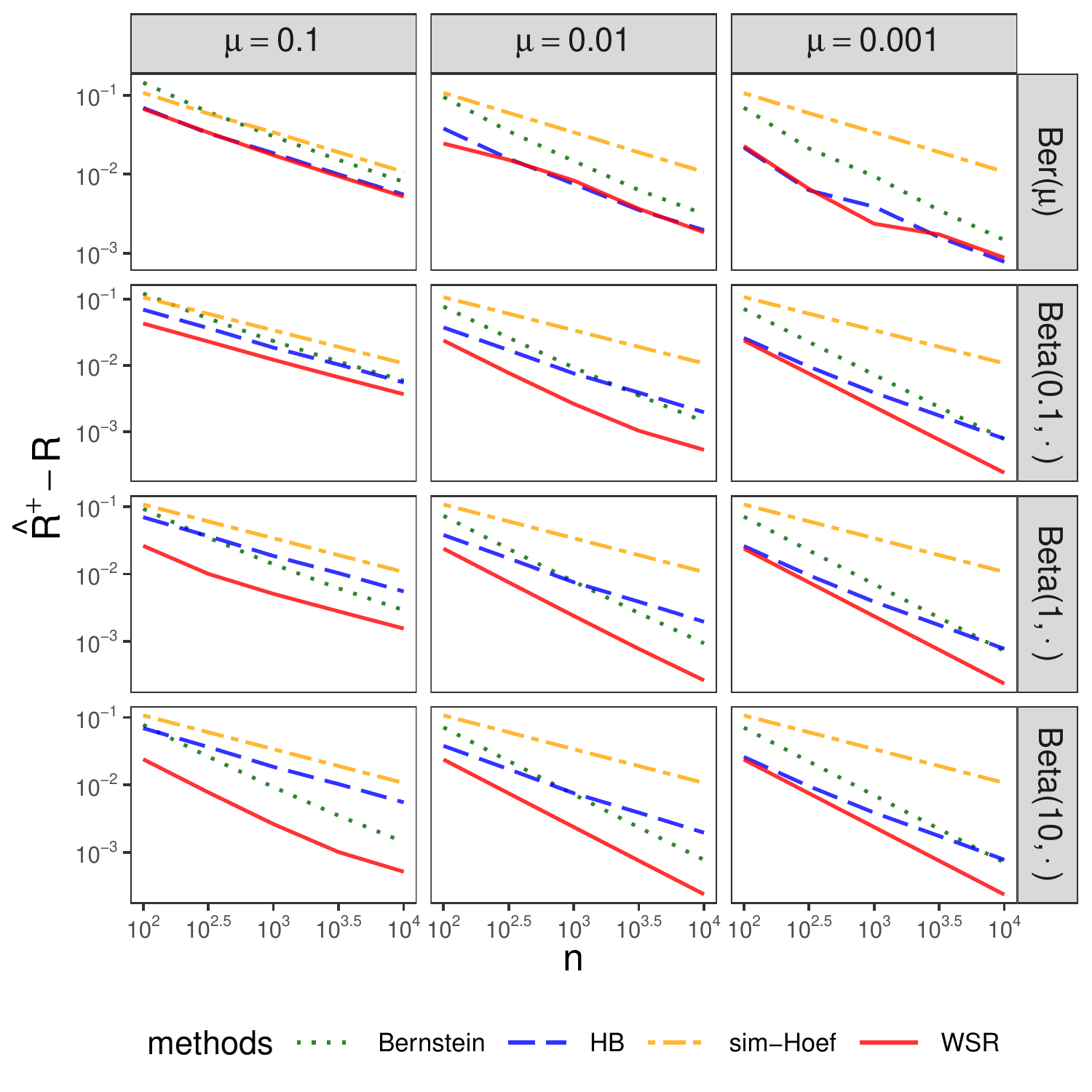}
\caption{Median of $\Rhat^+(\lambda) - R(\lambda)$}\label{subfig:UCB_bounded_medgap_delta0.1}
\end{subfigure}
\caption{\textbf{Numerical evaluations of concentration results for bounded losses}. We show the simple Hoeffding bound \eqref{eq:Rhat_sim_hoeffding}, HB bound \eqref{eq:Rhat_HB}, empirical Bernstein bound \eqref{eq:eBern}, CLT bound \eqref{eq:Rhat_CLT}, and WSR bound (Proposition \ref{prop:WSR}) with sample size $n$. Each row corresponds to a type of distribution and each column corresponds to a value of the mean. The CLT bound is excluded in (b) because it does not achieve the target coverage in most of the cases.}\label{fig:UCB_bounded_delta0.1}
\end{figure}

\subsection{Unbounded losses}
We now consider the more challenging case of unbounded losses. As a motivating example, consider the Euclidean distance of a point to its closest point in the prediction set as a loss:
\begin{equation*}
    L\left(y, \S \right) = {\rm inf}\Big\{||y-y'||_2 \; : \; y' \in \S) \Big\}.
\end{equation*}
Based on the well-known results of \cite{bahadur1956}, we can show that it is impossible to derive a nontrivial upper confidence bound for the mean of nonnegative random variables in finite samples without any other restrictions---see Proposition \ref{prop:impossibility} in Appendix \ref{app:proofs}. As a result, we must restrict our attention to distributions that satisfy some regularity conditions. One reasonable approach is to consider distributions satisfying a bound on the coefficient of variation, and we turn our attention to such distributions next.

\subsubsection{The Pinelis--Utev inequality}
For nonnegative random variables with bounded coeffecient of variation, the Pinelis--Utev inequality gives a tail bound as follows:
\begin{prop}[Pinelis and Utev \cite{pinelis1989}, Theorem 7]
\label{prop:pinelis_utev}
    Let 
    $c_v(\lambda) = \sigma(\lambda) / R(\lambda)$
    denote the coefficient of variation. Then for any $t \in (0, R(\lambda)]$,
    \begin{equation}
    P(\Rhat(\lambda) \le t) 
    \le \exp \left\{-\frac{n}{c_v^2(\lambda) + 1}\left[1 + \frac{t}{R(\lambda)}\log\left(\frac{t}{e R(\lambda)}\right) \right] \right\}.
\end{equation}
\end{prop}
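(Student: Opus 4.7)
The plan is a Chernoff argument applied to the lower tail. For any $s \ge 0$,
\begin{align}
P(\Rhat(\lambda) \le t) = P\biggl(-\sum_{i=1}^{n}L(Y_i, \T_\lambda(X_i)) \ge -nt\biggr) \le e^{snt}\bigl(\E[e^{-sL}]\bigr)^n,
\end{align}
where $L$ denotes a generic copy of $L(Y, \T_\lambda(X))$. The entire difficulty lies in bounding the moment generating function $\E[e^{-sL}]$ tightly enough that the resulting Chernoff optimization recovers the Pinelis--Utev rate function $h(r) = 1 - r + r\log r$.

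The key estimate is that for every $s \ge 0$,
\begin{align}
\E[e^{-sL}] \le 1 + p_0\,(e^{-sM} - 1), \qquad M := R(\lambda)(1 + c_v^2(\lambda)),\ p_0 := 1/(1 + c_v^2(\lambda)) = R(\lambda)/M.
\end{align}
The right-hand side is the MGF of the two-point law on $\{0, M\}$ with mass $p_0$ at $M$, which by construction matches the first two moments of $L$. To establish the inequality I exhibit a quadratic majorant of $x \mapsto e^{-sx}$ on $[0, \infty)$: define $q(x) = 1 + \alpha x + \beta x^2$ by requiring $q(M) = e^{-sM}$ and $q'(M) = -s e^{-sM}$, which uniquely pins down $\beta = (1 - e^{-sM}(1 + sM))/M^2$ and $\alpha = -s e^{-sM} - 2\beta M$. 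Assuming $q(x) \ge e^{-sx}$ on $[0, \infty)$, taking expectations and using $\E[L^2] = R(\lambda) M$ yields $\E[e^{-sL}] \le 1 + R(\lambda)(\alpha + \beta M) = 1 + R(\lambda)(e^{-sM} - 1)/M$, which is the claimed bound.

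Verifying $\phi(x) := q(x) - e^{-sx} \ge 0$ on $[0, \infty)$ is the one substantive calculation and is where I expect the main obstacle. Since $\phi''(x) = 2\beta - s^2 e^{-sx}$ is strictly increasing in $x$, $\phi$ admits at most one inflection point $x_0$, so $\phi$ is concave on $[0, x_0]$ and convex on $[x_0, \infty)$. By construction $\phi(M) = \phi'(M) = 0$, so $M$ is the minimizer of $\phi$ on the convex branch, giving $\phi \ge 0$ on $[x_0, \infty)$ and forcing $x_0 \le M$. On the concave branch I have $\phi(0) = 0$, and a short auxiliary check (showing that $u + (u + 2)e^{-u} - 2 \ge 0$ for $u := sM \ge 0$, e.g.\ by differentiating twice) gives $\phi'(0) \ge 0$; a concave function on $[0, x_0]$ whose value is zero and whose derivative is nonnegative at the left endpoint, and whose value is nonnegative at the right endpoint, must itself be nonnegative on $[0, x_0]$.

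With the MGF bound in hand, the inequality $\log(1 + u) \le u$ gives $\log \E[e^{-sL}] \le p_0(e^{-sM} - 1)$, and the Chernoff bound becomes
\begin{align}
P(\Rhat(\lambda) \le t) \le \exp\bigl\{s n t + n p_0 (e^{-sM} - 1)\bigr\}.
\end{align}
Optimizing over $s \ge 0$, the stationarity condition gives $s^\ast = M^{-1}\log(R(\lambda)/t)$, which is nonnegative exactly under the hypothesis $t \le R(\lambda)$. Substituting $s^\ast$ back in and using $p_0 M = R(\lambda)$ collapses the exponent to $-n p_0\bigl[1 - t/R(\lambda) + (t/R(\lambda))\log(t/R(\lambda))\bigr]$, which rearranges to the stated bound.
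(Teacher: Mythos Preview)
Your argument is correct. The quadratic-majorant step is the heart of the matter and your verification goes through: the tangency conditions at $M$ fix $\alpha,\beta$ uniquely with $\beta\ge 0$; the computation $\alpha+\beta M=(e^{-sM}-1)/M$ is exactly what is needed to land on the two-point MGF; and the convexity/concavity split of $\phi$ together with $\phi(0)=\phi(M)=\phi'(M)=0$, $\phi''(M)\ge 0$ (equivalent to $2e^{u}\ge 2+2u+u^{2}$), and $\phi'(0)\ge 0$ (equivalent to $u+(u+2)e^{-u}\ge 2$) indeed forces $\phi\ge 0$ on $[0,\infty)$. (In fact, on the concave branch you do not even need $\phi'(0)\ge 0$: concavity plus $\phi(0)=0$ and $\phi(x_0)\ge 0$ already gives $\phi\ge 0$ there by the chord inequality.) The Chernoff optimization and the algebraic identification $1-r+r\log r = 1 + r\log(r/e)$ are routine and correct.

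As for comparison with the paper: the paper does not supply its own proof of this proposition. It is stated as a citation of Theorem~7 in Pinelis and Utev (1989) and used as a black box to build the upper confidence bound $\Rhat_{\mathrm{PU}}^+$. Your proposal therefore fills in precisely the argument the paper omits, and it is essentially the original Pinelis--Utev proof: a Cram\'er--Chernoff bound in which the extremal MGF over nonnegative laws with prescribed first two moments is attained by the two-point law on $\{0,M\}$, established via a tangent quadratic majorant of $x\mapsto e^{-sx}$.
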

By Proposition \ref{prop:generic}, this implies an upper confidence bound of $R(\lambda)$: 
\begin{equation}\label{eq:Rhat_PU}
\Rhat_{\mathrm{PU}}^+(\lambda) = \sup\left\{R: \exp \left\{-\frac{n}{c_v^2(\lambda) + 1}\left[1 + \frac{\Rhat(\lambda)}{R}\log\left(\frac{\Rhat(\lambda)}{e R}\right) \right] \right\}\ge \delta\right\}.
\end{equation}
This result shows that a nontrivial upper confidence bound can be derived if $c_v(\lambda)$ is known. When $c_v(\lambda)$ is unknown, we can treat it as a sensitivity parameter or estimate it based on the sample moments.   
Using this inequality and plugging in an upper bound $c_v$ for $c_v(\lambda)$, we define $\hat{\lambda}^{\textnormal{PU}}$ with the UCB calibration procedure (i.e, as in \eqref{eq:lambda_hat_def}) to get the following guarantee:
\begin{theorem}[RCPS from Pinelis--Utev inequality]
\label{thm:PU}
In the setting of Theorem~\ref{thm:abstract_control}, suppose in addition that for each $\lambda \in \Lambda$, $c_v(\lambda) \le c_v$ for some constant $c_v$. Then, $\T_{\hat{\lambda}^{\textnormal{PU}}}$ is a $(\alpha, \delta)$-RCPS.
\end{theorem}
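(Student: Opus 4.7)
The plan is to reduce to Theorem~\ref{thm:abstract_control}: the monotone-loss and nested-set hypotheses are exactly those of the master theorem, so all that remains is to verify that $\Rhat_{\mathrm{PU}}^+(\lambda)$ from \eqref{eq:Rhat_PU}, formed with the plug-in upper bound $c_v$ in place of the unknown $c_v(\lambda)$, is a valid pointwise $(1-\delta)$ upper confidence bound for $R(\lambda)$ in the sense of \eqref{eq:general_bound}. Once this is in hand, taking $\hat{\lambda} = \hat{\lambda}^{\mathrm{PU}}$ per \eqref{eq:lambda_hat_def} and invoking Theorem~\ref{thm:abstract_control} yields $P(R(\T_{\hat{\lambda}^{\mathrm{PU}}}) \le \alpha) \ge 1-\delta$ mechanically.

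To build the UCB, I would start from Proposition~\ref{prop:pinelis_utev} and define
\[
g^{\mathrm{PU}}(t; R) = \exp\left\{-\frac{n}{c_v^2 + 1}\left[1 + \frac{t}{R}\log\left(\frac{t}{eR}\right)\right]\right\} \text{ for } t \in (0, R], \quad g^{\mathrm{PU}}(t; R) = 1 \text{ for } t > R,
\]
using the known upper bound $c_v$ rather than $c_v(\lambda)$. Two facts must be checked. First, the bracketed term $h(t) := 1 + (t/R)\log(t/(eR))$ satisfies $h(R) = 0$ and $h(0^+) = 1$, and its derivative $h'(t) = \log(t/R)/R$ is nonpositive on $(0, R]$; hence $h(t) \in [0,1]$ on this range. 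Because the exponent $-\frac{n}{c_v^2+1} h(t)$ is therefore nonpositive, replacing $c_v(\lambda)$ by the larger $c_v$ only shrinks the coefficient $n/(c_v^2+1)$, which weakens the bound, so $P(\Rhat(\lambda) \le t) \le g^{\mathrm{PU}}(t; R(\lambda))$ still holds. Second, $g^{\mathrm{PU}}(\cdot; R)$ is nondecreasing in $t$: since $h$ is nonincreasing on $(0,R]$, the exponent is nondecreasing, and the flat extension at value $1$ for $t > R$ matches $g^{\mathrm{PU}}(R; R) = 1$ and preserves monotonicity. Proposition~\ref{prop:generic} then delivers that $\Rhat_{\mathrm{PU}}^+(\lambda) = \sup\{R : g^{\mathrm{PU}}(\Rhat(\lambda); R) \ge \delta\}$ satisfies \eqref{eq:general_bound}, which is exactly \eqref{eq:Rhat_PU}.

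The main obstacle is the interplay between the unknown $c_v(\lambda)$ and the plug-in surrogate $c_v$: Proposition~\ref{prop:pinelis_utev} controls the lower tail in terms of the true coefficient of variation, but the calibration procedure uses only the upper bound. The crux is that substitution $c_v(\lambda) \mapsto c_v$ preserves the direction of the tail bound, which in turn relies on the nonnegativity of $h(t)$ on $(0,R]$ established above. Once this sign check and the monotonicity of $g^{\mathrm{PU}}(\cdot; R)$ are verified by the short calculus argument sketched, the remainder of the proof is an immediate two-step application of Proposition~\ref{prop:generic} followed by Theorem~\ref{thm:abstract_control}.
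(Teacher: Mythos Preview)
Your proposal is correct and follows exactly the route the paper takes: the paper does not give a separate proof of this theorem, treating it as an immediate corollary of Theorem~\ref{thm:abstract_control} once the upper confidence bound \eqref{eq:Rhat_PU} is in hand via Proposition~\ref{prop:pinelis_utev} and Proposition~\ref{prop:generic}. Your write-up in fact supplies more detail than the paper, explicitly verifying the monotonicity of $g^{\mathrm{PU}}(\cdot;R)$ and the sign check showing that the substitution $c_v(\lambda)\mapsto c_v$ only loosens the tail bound, both of which the paper leaves implicit.
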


\subsubsection{Numerical comparisons of upper confidence bounds}

Next, we numerically study the unbounded case with two competing bounds---the PU bound with $c_v$ estimated by the ratio between the standard error and the average, and a bound based on the CLT described explicitly later in Section~\ref{sec:clt_control} (which does not have finite-sample coverage guarantees). We consider three types of distributions---the Gamma distribution $\Gamma(a, 1)$, the square-t distribution $t^2(v)$ (the distribution of the square of a $t$-distributed variable with degree of freedom $v$), and the log-normal distribution $\mathrm{LN}(\mu, \sigma)$ (the distribution of $\exp(Z)$ where $Z \sim N(\mu, \sigma)$). For each distribution, we consider a light-tailed and a heavy-tailed setting, and normalize the distributions to have mean $\mu = 1$. The parameter settings are summarized in Table \ref{tab:unbounded}. 

Conducting our experiments as in the bounded case, we present the coverage and median gap with $\delta = 0.1$ in Figure \ref{fig:UCB_unbounded_delta0.1}. From Figure \ref{subfig:UCB_unbounded_coverage_delta0.1}, we see that the CLT bound nearly achieves the desired coverage for light-tailed distributions but  drastically undercovers for heavy-tailed distributions. By contrast, the PU bound has valid coverage in these settings. From Figure \ref{subfig:UCB_unbounded_medgap_delta0.1}, we see that the CLT bound is much tighter in all cases, though the gap bewteen two bounds shrink as the sample size grows. Therefore, we recommend the CLT bound when the losses are believed to be light-tailed and the sample size is moderately large, and the PU bound otherwise. 

\begin{table}[H]
    \centering
    \begin{tabular}{c|ccc}
        \toprule
         & Gamma & Squared-t & Log-normal \\
        \midrule
        light-tailed & $a = 1$ & $v = 100$ & $(\mu, \sigma) = (-0.125, 0.5)$ \\
        heavy-tailed & $a = 0.05$ & $v = 4$ & $(\mu, \sigma) = (-2, 2)$ \\
        \bottomrule
    \end{tabular}
    \caption{Distributions considered for the unbounded case.}
    \label{tab:unbounded}
\end{table}

\begin{figure}
\begin{subfigure}{0.49\textwidth}
\includegraphics[width = 0.98\textwidth]{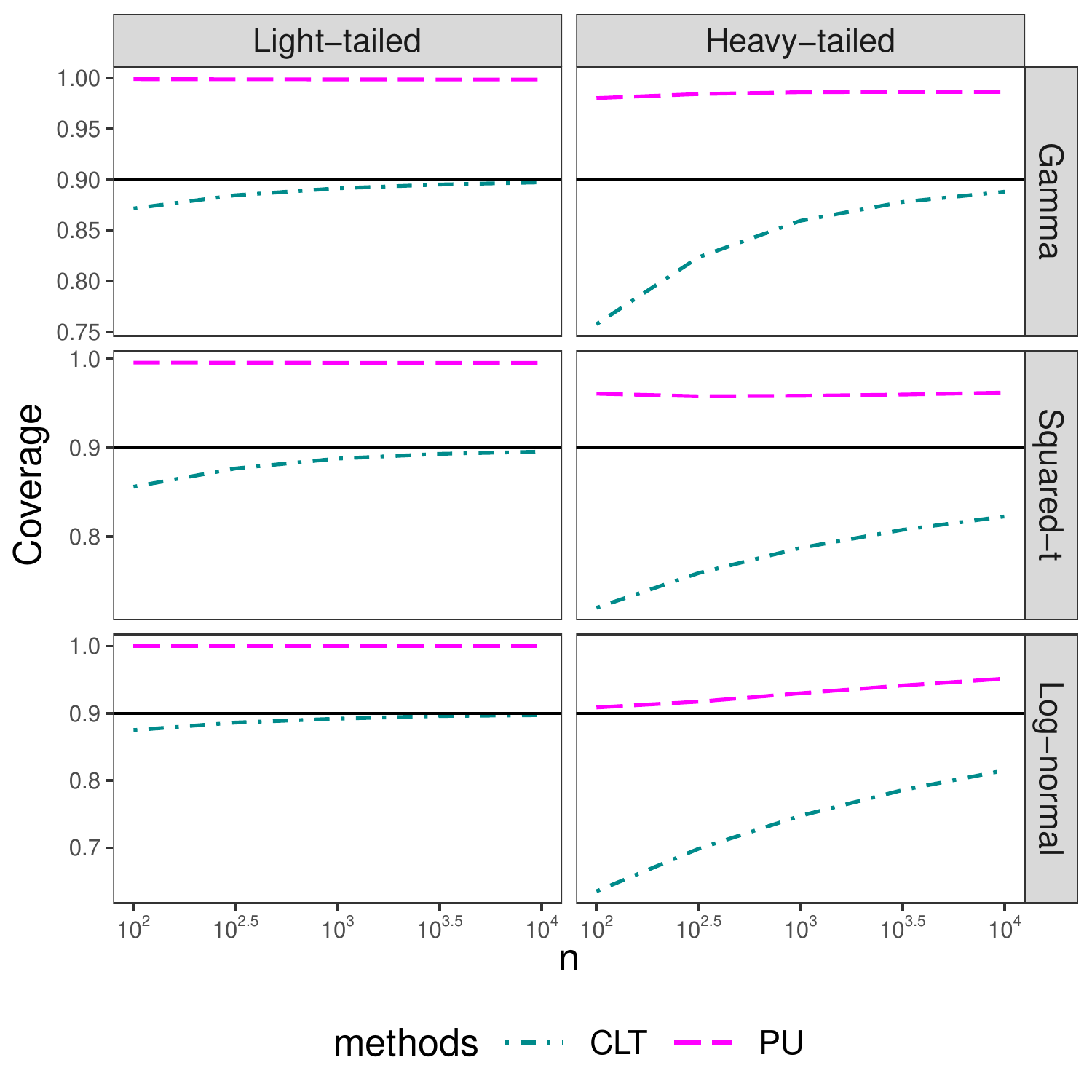}
\caption{Coverage $P(\Rhat(\lambda) \ge R(\lambda))$}\label{subfig:UCB_unbounded_coverage_delta0.1}
\end{subfigure}
\begin{subfigure}{0.49\textwidth}
\includegraphics[width = 0.98\textwidth]{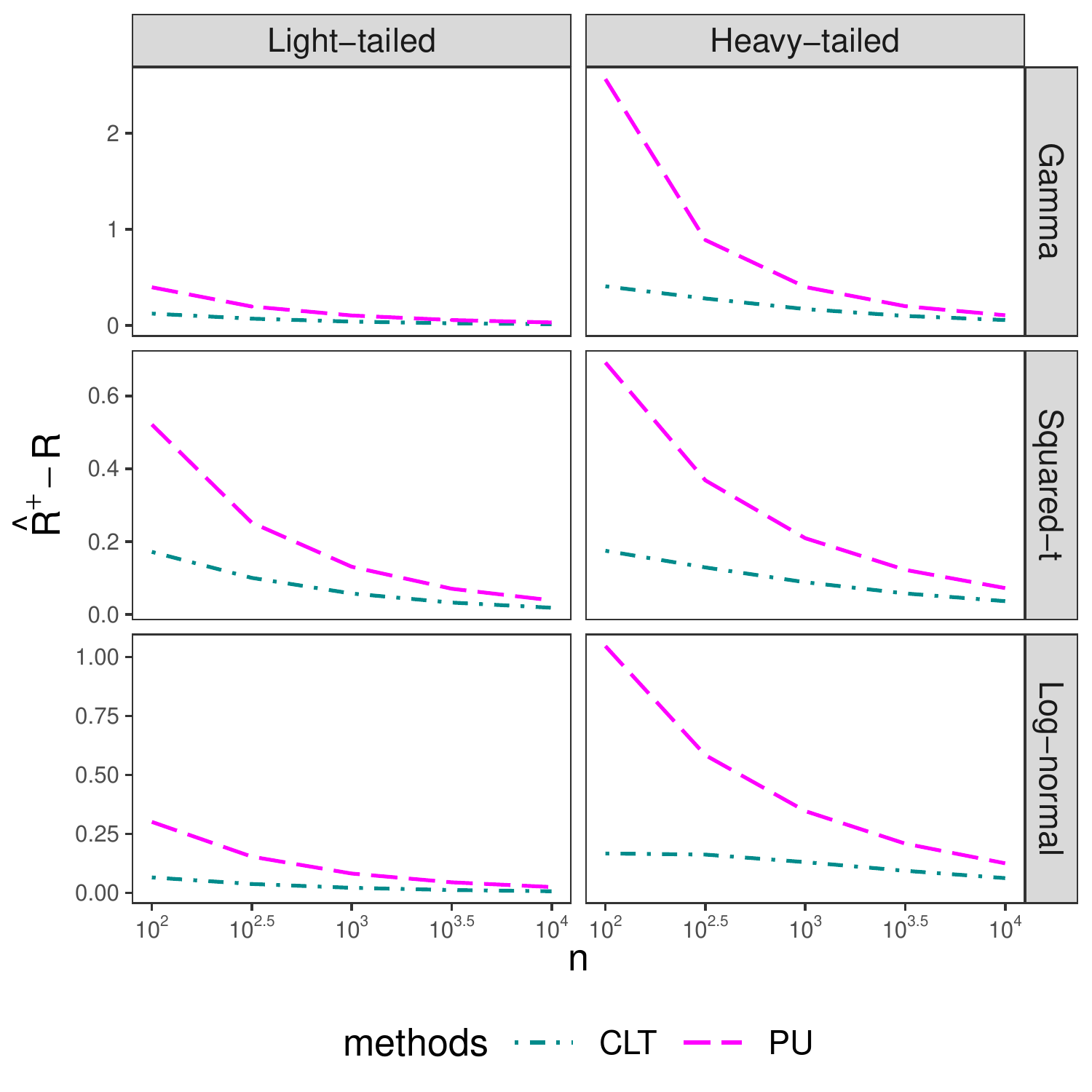}
\caption{Median of $\Rhat^+(\lambda) - R(\lambda)$}\label{subfig:UCB_unbounded_medgap_delta0.1}
\end{subfigure}
\caption{\textbf{Numerical evaluations of the PU bound.} We compare the bound from \eqref{eq:Rhat_PU} with the estimated coefficient of variation and the CLT bound \eqref{eq:Rhat_CLT}, with sample size $n$ from each distribution in Table \ref{tab:unbounded}. Each row corresponds to a type of distribution and each column corresponds to a value of the mean. }\label{fig:UCB_unbounded_delta0.1}
\end{figure}

\subsection{Asymptotic results}
\label{sec:clt_control}

When no finite-sample result is available, we can still use the UCB calibration procedure to get prediction sets with asymptotic validity. Suppose the loss $L(Y, \T_\lambda(X))$ has a finite second moment for each $\lambda$. Then, since the losses for each $\lambda$ are i.i.d., we can apply the CLT to get
\begin{equation}
   \underset{n \to \infty}{\rm lim} P\left(  \frac{\sqrt{n} (\widehat{R}(\lambda) - R(\lambda))}{\hat{\sigma}(\lambda)} \le -t \right) \le \Phi\left(-t \right),
\label{eq:clt_upperbound}
\end{equation}
where $\Phi$ denotes the standard normal cumulative distribution function (CDF). This yields an asymptotic upper confidence bound for $R(\lambda)$: 
\begin{equation}\label{eq:Rhat_CLT}
\Rhat_{{\rm CLT}}^+(\lambda) = \widehat{R}(\lambda) + \frac{\Phi^{-1}(1-\delta)\hat{\sigma}(\lambda)}{\sqrt{n}}.
\end{equation}
Let $\hat{\lambda}^{\textnormal{CLT}} = \inf \{\lambda \in \Lambda : \Rhat_{{\rm CLT}}^+(\lambda') < \alpha, \,\, \forall \lambda' \ge \lambda \}$. Then, $\T_{\lhat^{{\rm CLT}}}$ is an asymptotic RCPS, as stated next.
\begin{theorem}[Asymptotically valid RCPS]\label{thm:clt_coverage}
In the setting of Theorem~\ref{thm:abstract_control}, assume additionally that $L(Y, \T_\lambda(X))$ has a finite second moment for each $\lambda$. Then,
\begin{equation}
\limsup_{n \to \infty} P\left(R(\T_{\hat{\lambda}^{\textnormal{CLT}}}) > \alpha \right) \le \delta.
\end{equation}
\end{theorem}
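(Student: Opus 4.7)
The plan is to run the argument of Theorem~\ref{thm:abstract_control} essentially verbatim, replacing its finite-sample pointwise UCB guarantee with the asymptotic CLT-based bound \eqref{eq:clt_upperbound} applied at a single, deterministic value of $\lambda$. Because $\lhat^{\textnormal{CLT}}$ is defined through the same infimum construction as in \eqref{eq:lambda_hat_def}, the monotonicity/continuity apparatus of Theorem~\ref{thm:abstract_control} carries over unchanged; only the step where the coverage-failure probability is evaluated needs modification.

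First I would reduce the risk-control statement to a single point. Under \eqref{eq:nested_sets} and \eqref{eq:monotone_loss}, the risk $R(\lambda)$ is non-increasing in $\lambda$, and by continuity there exists $\lambda^* \in \Lambda$ with $R(\lambda^*) = \alpha$ (assuming $R$ straddles $\alpha$; otherwise $R(\lambda) \le \alpha$ for every $\lambda$ and the conclusion is immediate since $R(\lambda_{\max}) = 0$). On the event $\{R(\T_{\lhat^{\textnormal{CLT}}}) > \alpha\}$, monotonicity forces $\lhat^{\textnormal{CLT}} < \lambda^*$, which together with the defining infimum in \eqref{eq:lambda_hat_def} forces $\Rhat^+_{\textnormal{CLT}}(\lambda^*) < \alpha = R(\lambda^*)$. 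This yields the event containment
\begin{equation*}
\{R(\T_{\lhat^{\textnormal{CLT}}}) > \alpha\} \subseteq \{\Rhat^+_{\textnormal{CLT}}(\lambda^*) < R(\lambda^*)\}.
\end{equation*}

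Next I would bound the probability on the right-hand side asymptotically. Using the explicit form in \eqref{eq:Rhat_CLT}, the event coincides with $\{\sqrt{n}(\Rhat(\lambda^*) - R(\lambda^*))/\hat\sigma(\lambda^*) < -\Phi^{-1}(1-\delta)\}$ (with the degenerate case $\hat\sigma(\lambda^*) = 0$ contributing zero probability, since then $\Rhat(\lambda^*) = R(\lambda^*)$ a.s.). Applying \eqref{eq:clt_upperbound} at the fixed, non-random point $\lambda^*$ with $t = \Phi^{-1}(1-\delta)$---justified by the finite-second-moment assumption---gives
\begin{equation*}
\limsup_{n \to \infty} P\bigl(\Rhat^+_{\textnormal{CLT}}(\lambda^*) < R(\lambda^*)\bigr) \le \Phi\bigl(-\Phi^{-1}(1-\delta)\bigr) = \delta,
\end{equation*}
and combining with the event inclusion above yields $\limsup_n P(R(\T_{\lhat^{\textnormal{CLT}}}) > \alpha) \le \delta$.

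The main obstacle, such as it is, is organizational rather than analytic: one must notice that, thanks to the monotonicity of $R$, the entire problem reduces to pointwise (not uniform) CLT control at a single deterministic $\lambda^*$, which is what allows an asymptotic result of the form \eqref{eq:clt_upperbound} to suffice. The remaining items---existence of $\lambda^*$ via continuity of $R$, handling of the $\sigma(\lambda^*) = 0$ edge case, and the consistency of $\hat\sigma$ implicit in \eqref{eq:clt_upperbound}---are standard bookkeeping.
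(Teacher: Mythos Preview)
Your proposal is correct and follows essentially the same route as the paper's proof: define the deterministic boundary point $\lambda^*$ with $R(\lambda^*)=\alpha$, use monotonicity and the infimum definition of $\lhat^{\textnormal{CLT}}$ to obtain the inclusion $\{R(\T_{\lhat^{\textnormal{CLT}}})>\alpha\}\subseteq\{\Rhat^+_{\textnormal{CLT}}(\lambda^*)<\alpha\}$, and then apply the pointwise CLT at $\lambda^*$. Your write-up is in fact more careful than the paper's (you handle the no-crossing and $\hat\sigma=0$ edge cases explicitly), but the argument is the same.
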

As a technical remark, note this result requires only a pointwise CLT for each $\lambda \in \Lambda$, analogously to the finite-sample version presented in Theorem~\ref{thm:abstract_control}.
Since this asymptotic guarantee holds for many realistic choices of loss function and data-generating distribution, this approximate version of UCB calibration greatly extends the reach of our proposed method.

\subsection{How large should the calibration set be?}
The numerical results presented previously give rough guidance as to the required size of the calibration set. While UCB calibration is always guaranteed to control the risk by Theorem~\ref{thm:abstract_control}, if the calibration set is too small then the sets may be larger than necessary.
Since our procedure finds the last point where the UCB $\widehat{R}^+(\lambda)$ is above the desired level $\alpha$, it will produce sets that are nearly as small as possible when $\widehat{R}^+(\lambda)$ is close to the true risk $R(\lambda)$. As a rule of thumb, we say that we have a sufficient number of calibration points when $\widehat{R}^+(\lambda)$ is within $10\%$ of $R(\lambda)$. The sample size required will vary with the problem setting, but use this heuristic to analyze our simulation results to get a few representative values.

Figure~\ref{subfig:UCB_bounded_medgap_delta0.1} reports on the bounded loss case. The left column shows that when we seek to control the risk at the relatively loose $\alpha = 0.1$ level, around $1,000$ calibration points suffice; the middle panel shows that when we seek to control the risk at level $\alpha = 0.01$, a few thousand calibration points suffice; and the right column shows that for the strict risk level $\alpha = 0.001$, about $10,000$ calibration points suffice. The required number of samples will increase slightly if we ask for a higher confidence level (i.e., smaller $\delta$), but the dependence on $\delta$ is minimal since the bounds will roughly scale as $\log(1/\delta)$---this scaling can be seen explicitly in the simple Hoeffding bound \eqref{eq:Rhat_sim_hoeffding}. Examining the unbounded loss examples presented in Figure~\ref{subfig:UCB_unbounded_medgap_delta0.1}, we see that about $1,000$ calibration points suffice for the student-t and log-normal examples, but that about $10,000$ calibration points are needed for the Gamma example.
In summary, $1,000$ to $10,000$ calibration points are sufficient to generate prediction sets that are not too conservative, i.e., sets that have risk that are not far below the desired level $\alpha$.

\section{Generating the Set-Valued Predictors}
\label{sec:making_sets}

In this section, we describe one possible construction of the nested prediction sets $\T_{\lambda}(x)$ from a given predictor $\hat{f}$. 
Any collection of the sets can be used to control the risk by Theorem~\ref{thm:abstract_control}, but some may produce larger sets than others. Here, we present one choice and show that it is approximately optimal for an important class of losses.

In the following subsections, we denote the infinitesimal risk of a continuous response $y$ with respect to a set $\S\subseteq\Y$ as its {\em conditional risk density},
\begin{equation}
    \rho_{x}(y,\S)=L(y,\S)p_{Y|X=x}(y).
\end{equation}
We will present the results for the case where $y$ is continuous, but the same algorithm and theoretical result hold in the discrete case if we instead take $\rho_x(y,\S)=L(y,\S)P(Y=y|X=x)$.

\subsection{A greedy procedure}
\label{sec:greedy-algorithm}
We now describe a construction of the tolerance functions $\T_\lambda$ based on the estimated conditional risk density.
We assume that our predictor is $\hat{p}_x(y)$, an estimate of $p_{Y|X=x}(y)$, and we let $\hat{\rho}_x(y,\S) = L(y,\S)\hat{p}_x(y)$.
Algorithm~\ref{alg:greedy} indexes a family of sets $\T_{\lambda}$ nested in $\lambda\le 0$ by iteratively including the riskiest portions of $\Y$, then re-computing the risk densities of the remaining elements.
The general greedy procedure is computationally convenient; moreover, it is approximately optimal for a large class of useful loss functions, as we will prove soon.

\begin{algorithm}[t]
    \caption{Greedy Sets}
    \label{alg:greedy}
    \begin{algorithmic}[1] 
        \Require $\lambda$, risk density estimate $\hat{\rho}_x$, step size $d\zeta$ 
        \Procedure{GreedySets}{$\lambda, \hat{\rho}_x$}
        \State $\T \gets \emptyset$
        \State $\zeta \gets $ a large number (e.g., $B$ in the bounded case)
        \While{$\zeta > -\lambda$}
            \vspace{1mm}
            \State $\zeta \gets \zeta - d\zeta$
            \vspace{1mm}
            \State $\T \gets \T \cup \big\{ y' \in \T^c : \hat{\rho}_{x}(y',\T) > \zeta \big\}$
        \EndWhile
        \State \textbf{return} $\T$
        \EndProcedure
        \Ensure The nested set with parameter $\lambda$ at $x$: $\T_\lambda(x)$
    \end{algorithmic}
\end{algorithm}

\begin{remark}
    Algorithm~\ref{alg:greedy} is greedy because it only considers the next $d\zeta$ portion of risk to choose which element to add to the current set. One can imagine versions of this algorithm which look ahead several steps. Such schemes may be tractable in some cases, but are generally much more computationally expensive.
\end{remark}
\subsection{Optimality properties of the greedy procedure}
\label{sec:unconditional-oracle}
Next, we outline a setting where our greedy algorithm is optimal.
Suppose our loss function has the simple form $L(y,\S)=L_y\ind{y\not\in\S}$, for constants $L_y$. 
 This assumption on $L$ describes the case where every $y$ has a different, fixed loss if it is not present in the prediction set, such as in our MRI classification example in the introduction.
In this case, the sets returned by Algorithm~\ref{alg:greedy} have the form
\begin{equation}
\label{eq:greedy-family}
    \T_\lambda(x) = \big\{y' : \hat{\rho}_x(y', \emptyset) \geq \zeta(\lambda) \big\}.
\end{equation}
That is, we return the set of response variables with risk density above some threshold; see Figure~\ref{fig:riskscore} for a visualization.

\begin{figure}[t]
    \centering
    \includegraphics[width=4in]{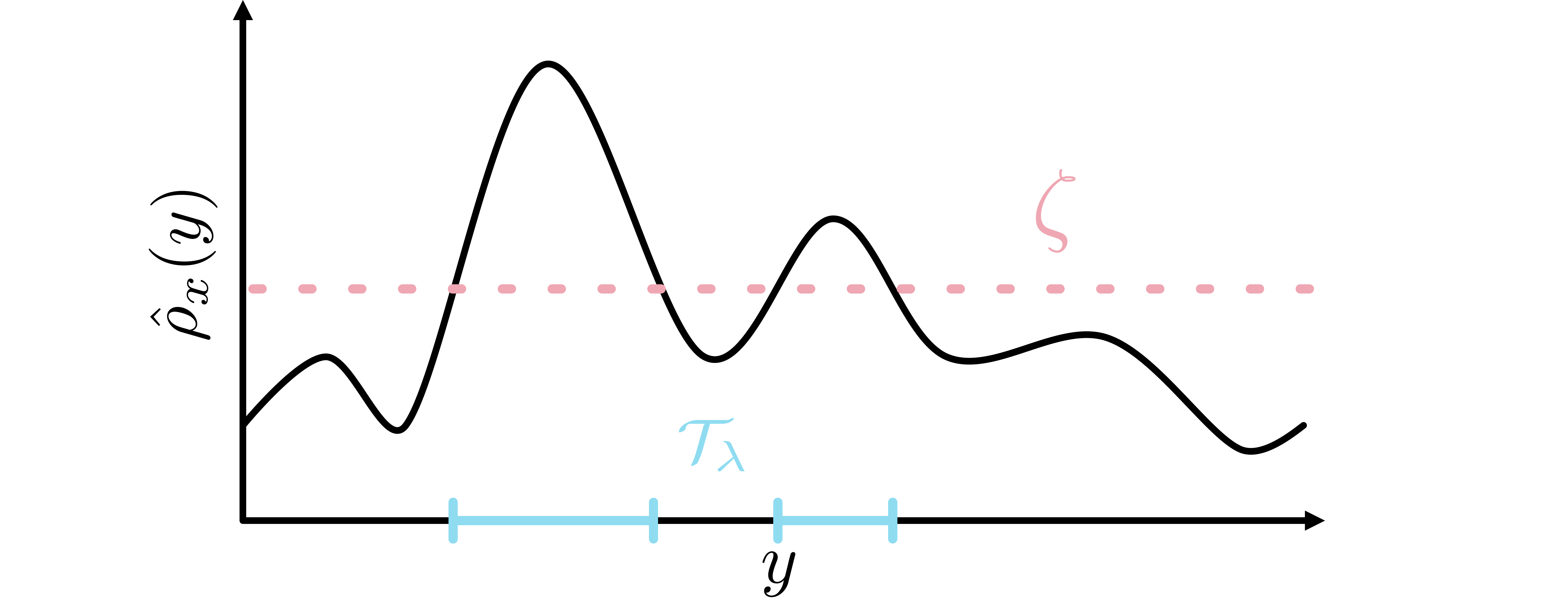}
    \vspace{-0.5cm}
    \caption{\textbf{Optimal prediction sets.}  In the special case where $\hat{\rho}_x(y, \S)$ does not depend on $\S$, $\T_\lambda(x)$ from Algorithm~\ref{alg:greedy} is made up of the $y \in \mathcal{Y}$ whose conditional risk density exceeds a threshold $\zeta$.}
    \label{fig:riskscore}
\end{figure}

Now, imagine that we know the exact conditional probability density, $p_{Y|X=x}(y)$, and therefore the exact $\rho_x(y,\S)$.
The prediction sets produced by Algorithm~\ref{alg:greedy} then have the smallest average size among all procedure that control the risk, as stated next.
\begin{theorem}[Optimality of the greedy sets]
In the setting above, let $\T':\mathcal{X} \to \Y'$ be any set-valued predictor such that $R(\T') \leq R(\T_\lambda)$, where $\T_\lambda$ is given by Algorithm~\ref{alg:greedy}.
Then,
\begin{equation}
\E[|\T_{\lambda}(X)|] \leq \E[|\T'(X)|].
\end{equation}
\label{thm:unconditional-optimality}
\vspace{-0.5cm}
\end{theorem}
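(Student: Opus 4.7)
My plan is to prove optimality by a Neyman--Pearson--style swapping argument. First I would identify the concrete form of the greedy sets in this special case. When $L(y,\S) = L_y \ind{y \notin \S}$, for any $y \notin \S$ the risk density is $\rho_x(y, \S) = L_y p_{Y|X=x}(y)$, independent of $\S$. Consequently, replacing $\hat{p}_x$ by the true $p_{Y|X=x}$ and passing to the continuum limit $d\zeta \to 0$ in Algorithm~\ref{alg:greedy}, the loop produces
\begin{equation*}
\T_\lambda(x) = \big\{ y : L_y \, p_{Y|X=x}(y) \ge \zeta \big\}, \qquad \zeta = \zeta(\lambda) = -\lambda \ge 0,
\end{equation*}
i.e., a superlevel set with a single threshold $\zeta$ that does not depend on $x$, in agreement with \eqref{eq:greedy-family}.

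Next, fix $x$ and write $A = \T_\lambda(x)$ and $B = \T'(x)$. Using $R(\S \mid x) = \int_{\S^c} L_y p_{Y|X=x}(y)\, dy$, a direct set-algebra identity gives
\begin{equation*}
R(A \mid x) - R(B \mid x) = \int_{B \setminus A} L_y \, p_{Y|X=x}(y)\, dy \; - \; \int_{A \setminus B} L_y \, p_{Y|X=x}(y)\, dy.
\end{equation*}
By definition of $A$, every $y \in A \setminus B$ satisfies $L_y p_{Y|X=x}(y) \ge \zeta$ and every $y \in B \setminus A$ satisfies $L_y p_{Y|X=x}(y) \le \zeta$. Bounding each integral by $\zeta$ times the corresponding Lebesgue measure yields the pointwise inequality $R(A \mid x) - R(B \mid x) \le \zeta\,(|B| - |A|)$.

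Taking expectation over $X$ then gives $R(\T_\lambda) - R(\T') \le \zeta \cdot \big( \E[|\T'(X)|] - \E[|\T_\lambda(X)|] \big)$. Combined with the hypothesis $R(\T') \le R(\T_\lambda)$, this forces $\E[|\T'(X)|] \ge \E[|\T_\lambda(X)|]$ as soon as $\zeta > 0$. The boundary case $\zeta = 0$ means $\T_\lambda(x)$ already contains every $y$ with positive risk density, so any risk-controlled competitor must contain it almost surely, and the inequality still holds (trivially, or by taking $\zeta \downarrow 0$).

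The main obstacle I anticipate is making step~1 fully rigorous: verifying that the discretized greedy loop in the $d\zeta \to 0$ limit really does recover the superlevel-set description, and dealing with the tie set $\{y : L_y p_{Y|X=x}(y) = \zeta\}$. Under mild regularity of $p_{Y|X=x}$ this tie set has measure zero, and in any event it may be assigned to either side without affecting the Neyman--Pearson inequality. The remainder of the argument is the standard swapping intuition: moving mass out of a low-density region and into a high-density region improves the risk/size trade-off pointwise in $x$, which propagates to an inequality in expectation.
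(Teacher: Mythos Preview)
Your proposal is correct and follows essentially the same Neyman--Pearson swapping argument as the paper: both proofs use the superlevel-set description of $\T_\lambda$, subtract off the common part $\T_\lambda(x)\cap\T'(x)$, and bound $\rho_x(y)$ by $\zeta$ from above on $\T'(x)\setminus\T_\lambda(x)$ and from below on $\T_\lambda(x)\setminus\T'(x)$. The only cosmetic difference is that you derive a pointwise-in-$x$ inequality $R(\T_\lambda\mid x)-R(\T'\mid x)\le \zeta(|\T'(x)|-|\T_\lambda(x)|)$ and then integrate, whereas the paper integrates over $x$ from the outset; you also explicitly address the $\zeta=0$ boundary case and the tie set, which the paper leaves implicit.
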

Here, $|\cdot|$ denotes the set size: Lebesgue measure for continuous variables and counting measure for discrete variables. 
This result is a generalization of a result of \cite{Sadinle2016LeastAS} to our risk-control setting.
While we do not exactly know the risk density in practice and must instead use a plug-in estimate, this result gives us confidence that our set construction is a sensible one.
The choice of the parameterization of the nested sets is the analogue to the choice of the score function in the more specialized setting of conformal prediction \cite{gupta2020nested}, and it is known in that case that there are many choices that each have their own advantages and disadvantages. See \cite{Sadinle2016LeastAS, romano2019conformalized} for further discussion of this point in that context.

\subsection{Optimality in a more general setting}
Next, we characterize the set-valued predictor that leads to the smallest sets for a wider class of losses. Suppose our loss takes the form
\begin{equation}
    L(y; \S) = \int_{z\in \S^c}\ell(y, z)d\mu(z),
\end{equation}
for some nonnegative $\ell$ and a finite measure $\mu$. The function $\ell$ measures the cost of not including $z$ in the prediction set when true response is $y$. For instance, $\ell(y, z) = L_y \I(y = z)$ and $\mu$ is the counting measure in the case considered above. 
Then the optimal $\T_\lambda$ is given by
\begin{equation}
\T_\lambda(x) = \{z: \E[\ell(Y; z)\mid X = x]\ge -\lambda\},
\label{eq:optimal_T_general}
\end{equation}
for $\lambda \in \Lambda \subset (-\infty, 0]$, as stated next.
\begin{theorem}[Optimality of set predictors, generalized form]
In the setting above, let $\T':\mathcal{X} \to \Y'$ be any set-valued predictor such that $R(\T') \leq R(\T_\lambda)$, where $\T_\lambda$ is given by \eqref{eq:optimal_T_general}.
Then,
\begin{equation}
\E[|\T_{\lambda}(X)|] \leq \E[|\T'(X)|].
\end{equation}
\label{thm:unconditional-optimality-ext}
\vspace{-0.5cm}
\end{theorem}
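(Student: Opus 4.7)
The plan is to mimic the classical Neyman--Pearson/Bayes-classifier argument: the set $\T_\lambda(x)$ is defined by thresholding the function $r(x,z) \triangleq \E[\ell(Y,z)\mid X=x]$ at level $-\lambda$, so removing mass from $\T_\lambda$ and adding it elsewhere always trades ``high-$r$'' mass for ``low-$r$'' mass, which cannot simultaneously shrink both the size and the risk.

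First, I would rewrite the risk in a form that exchanges the order of integration. By Fubini's theorem and the tower property,
\begin{equation}
R(\T) = \E\!\left[\int_{z \in \T(X)^c} \ell(Y,z)\, d\mu(z)\right] = \E\!\left[\int_{z \in \T(X)^c} r(X,z)\, d\mu(z)\right],
\end{equation}
where $r(x,z) = \E[\ell(Y,z)\mid X=x] \ge 0$. Similarly, $\E[|\T(X)|] = \E[\mu(\T(X))]$. Next, for an arbitrary competitor $\T'$, decompose the symmetric difference pointwise as $A(X) = \T_\lambda(X) \setminus \T'(X)$ and $B(X) = \T'(X) \setminus \T_\lambda(X)$. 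Using $\T'(X)^c \setminus \T_\lambda(X)^c = A(X)$ and $\T_\lambda(X)^c \setminus \T'(X)^c = B(X)$, I get
\begin{equation}
R(\T') - R(\T_\lambda) = \E\!\left[\int_{A(X)} r(X,z)\, d\mu(z) - \int_{B(X)} r(X,z)\, d\mu(z)\right].
\end{equation}

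Now I would invoke the threshold property of $\T_\lambda$. By definition, $z \in \T_\lambda(X)$ iff $r(X,z) \ge -\lambda$, so every $z \in A(X)$ satisfies $r(X,z) \ge -\lambda$, while every $z \in B(X)$ satisfies $r(X,z) < -\lambda$. Therefore
\begin{equation}
\int_{A(X)} r(X,z)\, d\mu(z) \ge (-\lambda)\,\mu(A(X)), \qquad \int_{B(X)} r(X,z)\, d\mu(z) \le (-\lambda)\,\mu(B(X)).
\end{equation}
Combining with the hypothesis $R(\T') \le R(\T_\lambda)$ and taking expectations,
\begin{equation}
(-\lambda)\,\E[\mu(A(X))] \le \E\!\left[\int_{A(X)} r\, d\mu\right] \le \E\!\left[\int_{B(X)} r\, d\mu\right] \le (-\lambda)\,\E[\mu(B(X))].
\end{equation}
Since $-\lambda \ge 0$, this gives $\E[\mu(A(X))] \le \E[\mu(B(X))]$ whenever $\lambda < 0$, which is equivalent to $\E[|\T_\lambda(X)|] \le \E[|\T'(X)|]$ because $\mu(\T_\lambda(X)) - \mu(\T'(X)) = \mu(A(X)) - \mu(B(X))$.

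The main obstacle is the boundary/degenerate case $\lambda = 0$: if $r \ge 0$ identically then $\T_0(x) = \Y$ and the chained inequality becomes $0 \le 0$, giving no size comparison. I would handle this either by restricting the optimality claim to $\lambda \in \Lambda \cap (-\infty, 0)$ or by applying the argument to $\lambda - \varepsilon$ and passing to the limit using monotone convergence of $\mu(\T_{\lambda-\varepsilon}(X))$. Beyond this, the only other issue is a technical one, namely ensuring measurability of $(X,z) \mapsto \I\{z \in \T'(X)\}$ and the applicability of Fubini under the finite measure $\mu$, which is mild given that $\mu$ is assumed finite and $\ell \ge 0$. Ties on the level set $\{r(X,\cdot) = -\lambda\}$ require no special handling because on that set the two bounds above coincide.
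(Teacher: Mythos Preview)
Your argument is correct and follows essentially the same Neyman--Pearson route as the paper's proof: rewrite the risk via $r(x,z)=\E[\ell(Y,z)\mid X=x]$, decompose into the set differences $\T_\lambda(X)\setminus \T'(X)$ and $\T'(X)\setminus \T_\lambda(X)$, bound $r$ by the threshold $-\lambda$ on each piece, and cancel $-\lambda$. Your version is in fact slightly cleaner (you difference the risks directly rather than first flipping from complements to sets) and more careful about the degenerate case $\lambda=0$, which the paper's proof silently assumes away when it divides by $-\lambda$.
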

For the case considered in Section~\ref{sec:unconditional-oracle}, $\E[\ell(Y; z)\mid X = x] = L_{z}p(z \mid x)$, so we see Theorem~\ref{thm:unconditional-optimality-ext} includes Theorem~\ref{thm:unconditional-optimality} as a special case. As before, in practice we must estimate the distribution of $Y$ given $X$ from data, so we would not typically be able to implement this predictor exactly. Moreover, even if we perfectly knew the distribution of $Y_i$ given $X = x$, the sets in \eqref{eq:optimal_T_general} may not be easy to compute. Nonetheless, it is encouraging that we can understand the optimal set predictor for this important set of losses.

\section{Examples}
\label{sec:experiments}

Next, we apply our proposed method to five prediction problems. 
For each task, we introduce a relevant loss function and set-valued predictor, and then evaluate the performance of UCB calibration. 
The reader can reproduce our experiments using our \href{https://github.com/aangelopoulos/rcps}{\textcolor{blue}{public GitHub repository}}.

\subsection{Classification with a class-varying loss}
\label{sec:class-varying}
\begin{figure}[t]
    \centering
    \includegraphics[width=\linewidth]{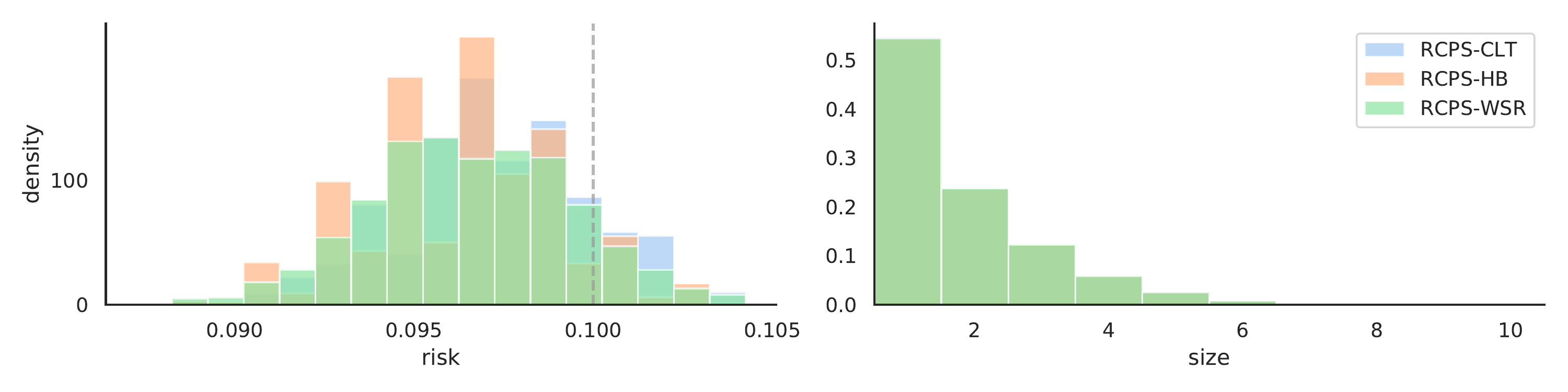}
    \vspace{-1cm}
    \caption{{\bf Prediction set results on Imagenet.} The risk and set sizes for an RCPS are plotted as histograms over 100 different random splits of Imagenet, with parameters $\alpha=0.1$ and $\delta=0.1$. For details see Section~\ref{sec:class-varying}. The set sizes for all three methods overlap.}
    \label{fig:histograms_imagenet}
\end{figure}

Suppose each observation has a single correct label $y$, and each label incurs a different, fixed loss if it is not correctly predicted:
\begin{equation*}
L(y, \S) = L_y \I_{\{y \notin \S\}}. 
\end{equation*}
This was the setting of our oracle result in Section~\ref{sec:unconditional-oracle}, and the medical diagnostic setting from the introduction also has this form. 
We would like to predict a set of labels that controls this loss.
Towards that end, we define the family of nested sets
\begin{equation}
    \label{eq:nested-imagenet}
    \T_{\lambda}(x) = \big\{ y : \hat{\pi}_x(y) > -\lambda\big\},
\end{equation}
where $\hat{\pi}_x : \Y \to [0,1]$ represents a classifier, usually designed to estimate $P(Y|X)$.
This family of nested sets simply returns the set of classes whose estimated conditional probability exceeds the value $-\lambda$, as in Figure~\ref{fig:riskscore}. (The negative on $\lambda$ comes from the definition of nesting, which asks sets to grow as $\lambda$ grows.)

Here, we conduct an experiment on Imagenet---the gold-standard computer vision classification dataset---comprised of one thousand classes of natural images~\cite{deng2009imagenet}.
For this experiment, we assign the loss $L_{y}$ of class $y \in \{1,...,1000\}$ as $L(y)\overset{i.i.d.}{\sim}{\rm Unif}(0,1)$. 
We use a pretrained ResNet-152 from the \texttt{torchvision} repository as the base model $\hat{\pi}_x$~\cite{marcel2010torchvision,he2016deep}.
We then choose $\lhat$ as in Theorem~\ref{thm:WSR}.
Figure~\ref{fig:histograms_imagenet} summarizes the performance of our prediction sets over 100 random splits of Imagenet-Val with 30,000 points used for calibration and the remaining 20,000 used for evaluation.
The RCPS procedure controls the risk at the correct level and the sets have reasonable sizes.

\subsection{Multi-label classification}
\label{sec:multilabel}

\begin{figure}[t]
    \centering
    \includegraphics[width=\linewidth]{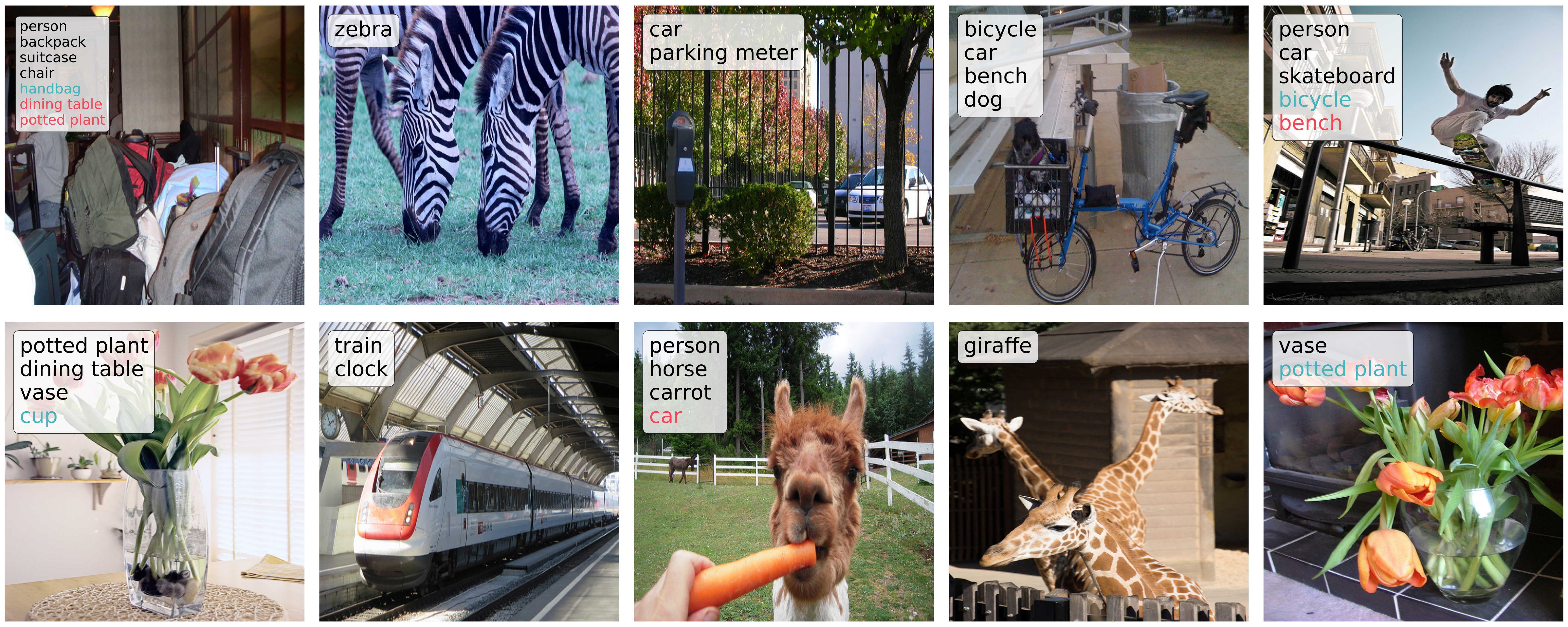}
    \vspace{-0.5cm}
    \caption{{\bf Multi-label prediction set examples on MS COCO.} Black classes are correctly identified (true positives), blue ones are spurious (false positives), and red ones are missed (false negatives). }
    \label{fig:grid_coco_multiclass}
    \includegraphics[width=\linewidth]{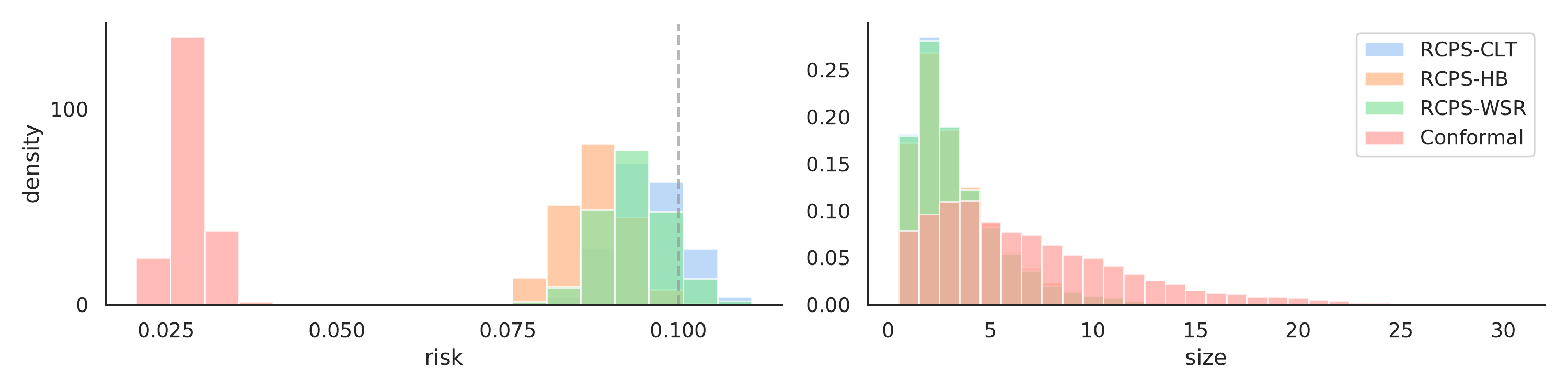}
    \vspace{-1cm}
    \caption{{\bf Multi-label prediction set results on MS COCO.} The risk and set sizes are plotted as histograms over 1000 different random splits of MS COCO, with parameters $\alpha=0.1$ and $\delta=0.1$. We also include a conformal baseline. For details see Section~\ref{sec:multilabel}.}
    \label{fig:histograms_coco_multiclass}
\end{figure}

Next, we consider the multi-label classification setting where each observation may have multiple corresponding correct labels; i.e., the response $y$ is a subset of $\{1,...,K\}$. 
Here, we seek to return prediction sets that control the loss
\begin{equation}
\label{eq:multiclass-loss}
L(y, \S) = 1 - \frac{|y \cap \S|}{|y|}
\end{equation}
at level $\alpha$. That is, we want to capture at least a $1-\alpha$ proportion of the correct labels for each observation, on average.
In this case, our nested sets
\begin{equation}
    \T_\lambda(x) = \big\{z \in \{1,...,K\} :\hat{\pi}_x(z) > -\lambda\big\}
\end{equation}
depend on a classifier $\hat{\pi}_x$ that does not assume classes are exclusive, so their conditional probabilities generally do not sum to $1$. 
Note that in this example we choose the output space $\Y'$ to be $\Y = 2^{\{1,\dots,K\}}$ (rather than $2^\Y$ as was done our previous example), since here $\Y$ is already a suitable space of sets.

To evaluate our method, we use the Microsoft Common Objects in Context (MS COCO) dataset, a large-scale, eighty-category dataset of natural images in realistic and often complicated contexts~\cite{lin2014microsoft}.
We use TResNet as the base model, since it has the state-of-the-art classification performance on MS COCO at the time of writing~\cite{ridnik2020tresnet}.
The standard procedure for multi-label estimation in computer vision involves training a convolutional neural network to output the vector of class probabilities, and then thresholding the probabilities in an ad-hoc manner return a set-valued prediction.
Our method follows this general approach, but rigorously chooses the threshold so that the risk is controlled at a user-specified level $\alpha$, which we take to be 10\%. To set the threshold, we choose $\lhat$ as in Theorem~\ref{thm:WSR} using 4,000 calibration points, and then we evaluate the risk on an additional test set of 1,000 points.
In Figure~\ref{fig:grid_coco_multiclass} we report on our our method's performance on ten randomly selected images from MS COCO, and in Figure~\ref{fig:histograms_coco_multiclass} we quantitatively summarize the performance of our prediction sets.
Our method controls the risk and gives sets with reasonable sizes.

In this setting, it is also possible to consider a conformal prediction baseline. To frame this problem in a way such that conformal prediction can be used, we follow \cite{cauchois2020knowing} and say that a test point is covered correctly if $y \subset T(x)$ and miscovered otherwise. That is, a point is covered only if the prediction set contains all true labels. The conformal baseline then uses the same set of set-valued predictors as above, but chooses the threshold as in \cite{cauchois2020knowing} so that there is probability $1-\alpha$ that all of the labels per image are correctly predicted. In Figure~\ref{fig:grid_coco_multiclass}, we find that the conformal baseline returns larger prediction sets. The reason is that the notion of coverage used by conformal prediction is more strict, requiring that all classes are covered. By contrast, the RCPS method can incorporate less brittle loss functions, such as the false negative rate in \eqref{eq:multiclass-loss}.
 
\subsection{Hierarchical classification}
\label{sec:hierarchical-imagenet}

\begin{figure}[t]
    \centering
    \includegraphics[width=\linewidth]{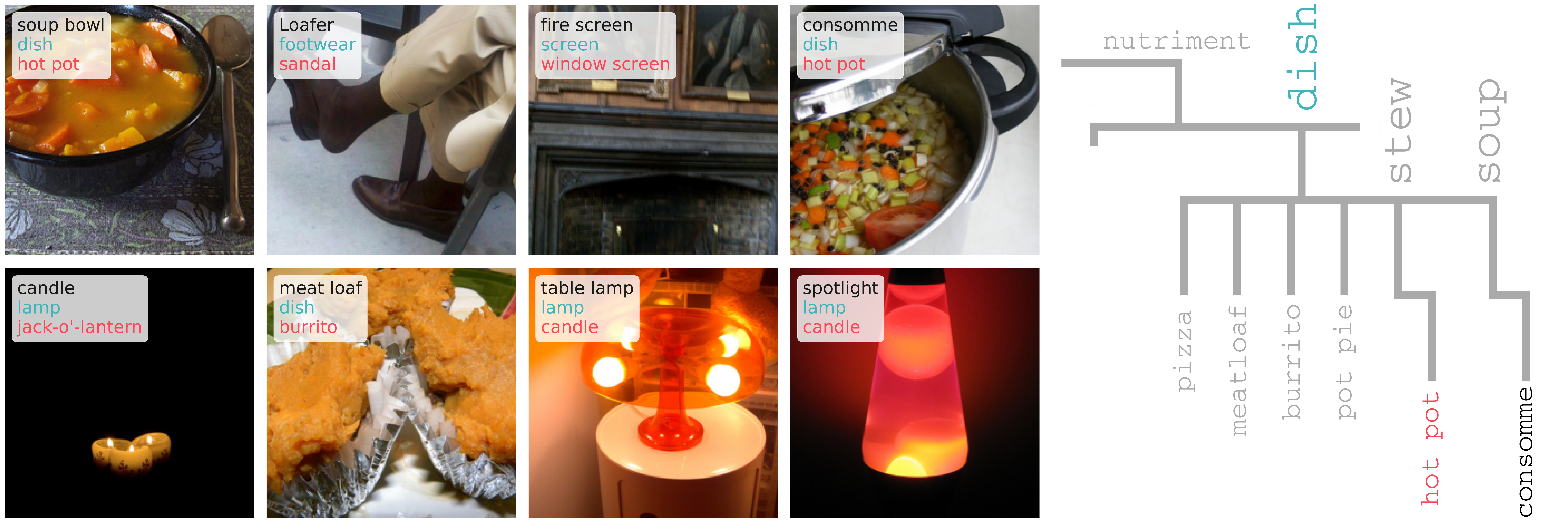}
    \vspace{-0.5cm}
    \caption{{\bf Hierarchical predictions.} We show randomly selected examples of hierarchical prediction sets on Imagenet where the point prediction is incorrect but the prediction sets cover the true label. The black label is the ground truth class, the blue label is our prediction, and the red label is the top-1 output of a ResNet-18. Our prediction is an ancestor in the WordNet hierarchy of both the true class and the model's top-1 prediction. See the rightmost panel for an example subtree from the WordNet hierarchy.}
    \label{fig:hierarchical-grid}
    \includegraphics[width=\linewidth]{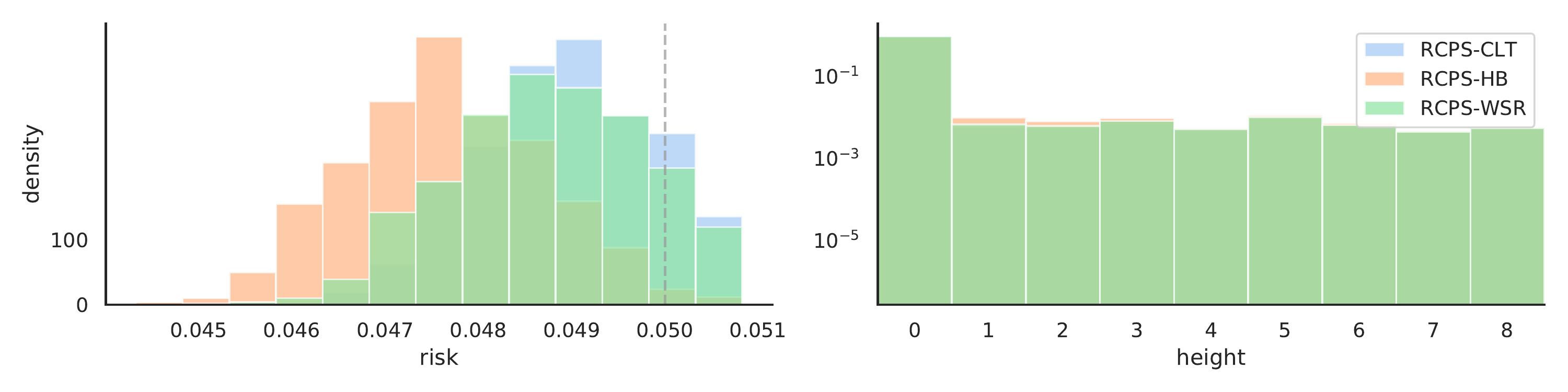}
    \vspace{-1cm}
    \caption{{\bf The risk and height of RCPS for hierarchical classification.} We show histograms of risk and height (distance from the leaf node) over 100 different random splits of the Imagenet dataset, with parameters $\alpha=0.05$ and $\delta=0.1$. For details see Section~\ref{sec:hierarchical-imagenet}.}
    \label{fig:hierarchical-histograms}

\end{figure}

Next, we discuss the application of RCPS to prediction problems where there exists a hierarchy on $K$ labels. Here, we have a response variable $y \in \{1,\dots,K\}$ with the structure on the labels encoded as a tree with nodes $V$ and edges $E$ with a designated root node, finite depth $D$, and $K$ leaves, one for each label. To represent uncertainty while respecting the hierarchical structure, we seek to predict a node $\hat{y} \in V$ that is as precise as possible, provided that that is an ancestor of $y$. Note that with our tree structure, each $v \in V$ can be interpreted as a subset of $\{1,\dots,K\}$ by taking the set of all the leaf-node descendants of $v$, so this setting is a special case of the set-valued prediction studied in this work.

We now turn to a loss function for this hierarchical label structure.
Let $d:V \times V \to \mathbb{Z}$ be the function that returns the length of the shortest path between two nodes, let $\mathcal{A}: V \to 2^V$ be the function that returns the ancestors of its argument, and let $\mathcal{P}: V \to 2^V$ be the function that returns the set of leaf nodes that are descendants of its argument. Further define a hierarchical distance 
\begin{equation*}
d_H(v,u) = \underset{a \in \mathcal{A}(v)}{\inf} \{ d(a,u) \}.
\end{equation*}
For a set of nodes $\S \in 2^V$, we then define the set-valued loss 
\begin{equation}
    \label{eq:hierarchical-loss}
    L(y,\S) = \underset{s \in \S}{\inf} \{ d_H(y,s) \} / D.
\end{equation}
This loss returns zero if $y$ is a child of any element in $\S$, and otherwise returns the minimum distance between any element of $\S$ and any ancestor of $y$, scaled by the depth $D$.

Lastly, we develop set-valued predictors that respect the hierarchical structure.
Define a model $\hat{f}:\X \to [0,1]^K$ that outputs an estimated probability for each class.
For any $x \in \X$, let $\hat{y}(x) = \arg\max_{k} \hat{f}(x)_k$ be the class with highest estimated probability.
We also let $g(v,x) = \underset{ k \in \mathcal{P}(v) }{\sum}\hat{f}(x)_k$ be the sum of scores of leaves descended from $v$.
Then, we choose our family of set-valued predictors as:
\begin{equation}
    \label{eq:hierarchical-sets}
    \T_\lambda(x) = \underset{\{a \in \mathcal{A}(\hat{y}(x)) \ : \ g(a,x) \geq -\lambda\}}{\bigcap}  \mathcal{P}(a).
\end{equation}
In words, we return the leaf nodes of the smallest subtree that includes $\hat{y}(x)$ that has estimated probability mass of at least $-\lambda$. This subtree has a unique root $v \in V$, so can equivalently view $\T_\lambda(x)$ as returning the node $v$.

We return to the Imagenet dataset for our empirical evaluations.
The Imagenet labels form a subset of the WordNet hierarchy~\cite{fellbaum2012wordnet}, and we parsed them to form the tree.
Our results are akin to those of ~\cite{deng2012hedging}, although their work does not have distribution-free statistical guarantees and instead takes an optimization approach to the problem.
The maximum depth of the WordNet hierarchy is $D=14$.
Similarly to Section~\ref{sec:class-varying}, we used a pretrained ResNet-18 from the \texttt{torchvision} repository as the base model for Algorithm~\ref{alg:greedy}, and chose $\hat{\lambda}$ as in Theorem~\ref{thm:WSR}.
Figure~\ref{fig:hierarchical-grid} shows several examples of our hierarchical predictions on this dataset, and Figure~\ref{fig:hierarchical-histograms} summarizes the performance of the predictor. As before, we find that RCPS controls the risk at the desired level, and the predictions are generally relatively precise (i.e., of low depth in the tree).
 
\subsection{Image segmentation}
\label{sec:segmentation}

\begin{figure}[t]
    \centering
    \includegraphics[width=\linewidth]{figures/multipolyp_grid_fig.pdf}
    \vspace{-0.5cm}
    \caption{{\bf Polyp segmentations.} We show examples of polyps along with prediction sets that capture 90\% of the true polyp pixels per polyp per image, generated with our method using the CLT bound. White pixels are correctly identified polyp pixels (true positives), blue ones are spurious (false positives), and red ones are missed (false negatives). The top two rows show examples with a single polyp per image, and the second two rows show examples with two polyps per image.}
    \label{fig:grid_polyp}
    \includegraphics[width=\linewidth]{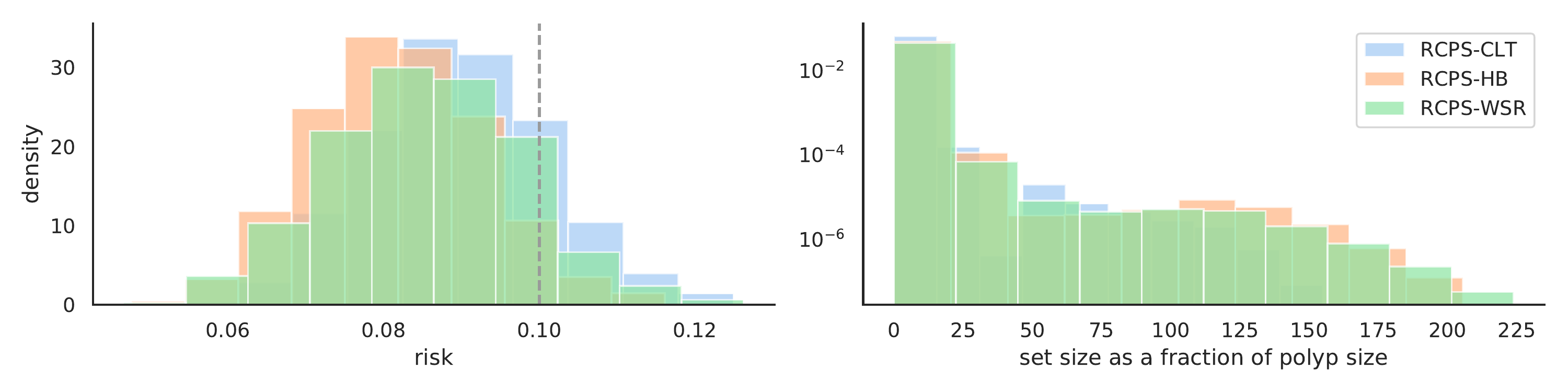}
    \vspace{-1cm}
    \caption{{\bf Polyp segmentation results.} The risk and normalized set size are plotted as histograms over different random splits of the polyp dataset, with parameters $\alpha=0.1$ and $\delta=0.1$. For details see Section~\ref{sec:segmentation}.}
    \label{fig:histograms_polyp}
\end{figure}

In the binary segmentation setting, we are given an $d_1 \times d_2 \times c$-dimensional image $x \in \R^{d_1 \times d_2 \times c}$ and seek to predict a set of object pixels $y \subseteq \G$, where $\G = \big\{(i,j) : 1 \leq i \leq d_1, 1 \leq j \leq d_2 \big\}$.
Intuitively, $y$ is a set of pixels that differentiates objects of interest from the backdrop of the image.

Using the technique in Section~\ref{sec:multilabel}, one may easily return prediction sets that capture at least a $1-\alpha$ proportion of the object pixels from each image with high probability.
However, if there are multiple objects in the image, we may want to ensure our algorithm does not miss an entire, distinct object.
Therefore, we target a different goal: returning prediction sets that capture a $1-\alpha$ fraction of the object pixels \emph{from each object} with high probability.
Specifically, consider $h: \Y \to 2^\Y$ to be an 8-connectivity connected components function~\cite{hirschberg1979computing}.
Then $h(y)$ is a set of distinct regions of object pixels in the input image. For example, in the bottom right image of Figure~\ref{fig:grid_polyp}, $h(y)$ would return two subsets of $\G$, one for each connected component. With this notation, 
we want to predict sets of pixels $\S \subseteq \G$ that control the proportion of missed pixels per object:
\begin{equation}
    \label{eq:segmentation_loss}
    L(y, \S) = \frac{\underset{y' \in h(y)}{\sum}|y' \setminus \S| / |y'|}{|h(y)|}.
\end{equation}
With this loss, if there are regions of different sizes, we would still incur a large loss for missing an entire small region, so this loss better captures our goal in image segmentation.

Having defined our loss, we now turn to our set construction.
Standard object segmentation involves a model $\hat{f} : \R^{d_1 \times d_2} \to [0,1]^{d_1 \times d_2}$ that outputs approximate scores (e.g., after a sigmoid function) for each pixel in the image, then binarizes these scores with some threshold.
To further our goal of per-object validity, in this experiment we additionally detect local peaks in the raw scores via morphological operations and connected components analysis, then re-normalize the connected regions by their maximum value. 
We will refer to this renormalization function as $r : [0,1]^{d_1 \times d_2} \to [0,1]^{d_1 \times d_2}$, and describe it precisely in Appendix~\ref{app:polyp-renorm}. We choose our family of set-valued predictors as
\begin{equation}
    \label{eq:polyp-sets}
    \T_{\lambda} = \big\{ (i,j) : r(\hat{f}(x))_{i,j} \geq -\lambda \big\},
\end{equation}
and then select $\hat{\lambda}$ as in Theorem~\ref{thm:WSR} or as in Theorem~\ref{thm:clt_coverage}.

We evaluated our method with an experiment combining several open-source polyp segmentation datasets: Kvasir~\cite{pogorelov2017kvasir}, Hyper-Kvasir~\cite{borgli2020hyperkvasir}, CVC-ColonDB and CVC-ClinicDB~\cite{bernal2012towards}, and ETIS-Larib~\cite{silva2014toward}.
Together, these datasets include 1,781 examples of segmented polyps, and in each experiment we use 1,000 examples for calibration and the remainder as a test set.
We used PraNet~\cite{fan2020pranet} as our base segmentation model. In Figure~\ref{fig:grid_polyp} we report on our method's performance on $20$ randomly selected images from the polyp datasets that contain at least two polyps, and in Figure~\ref{fig:histograms_polyp} we summarize the quantitative performance of our prediction sets. RCPS again control the risk at the desired level, and the average prediction set size size is comparable to the average polyp size.

\subsection{Protein structure prediction}
\label{sec:proteins}
\begin{figure}[t]
    \centering
    \includegraphics[width=\linewidth]{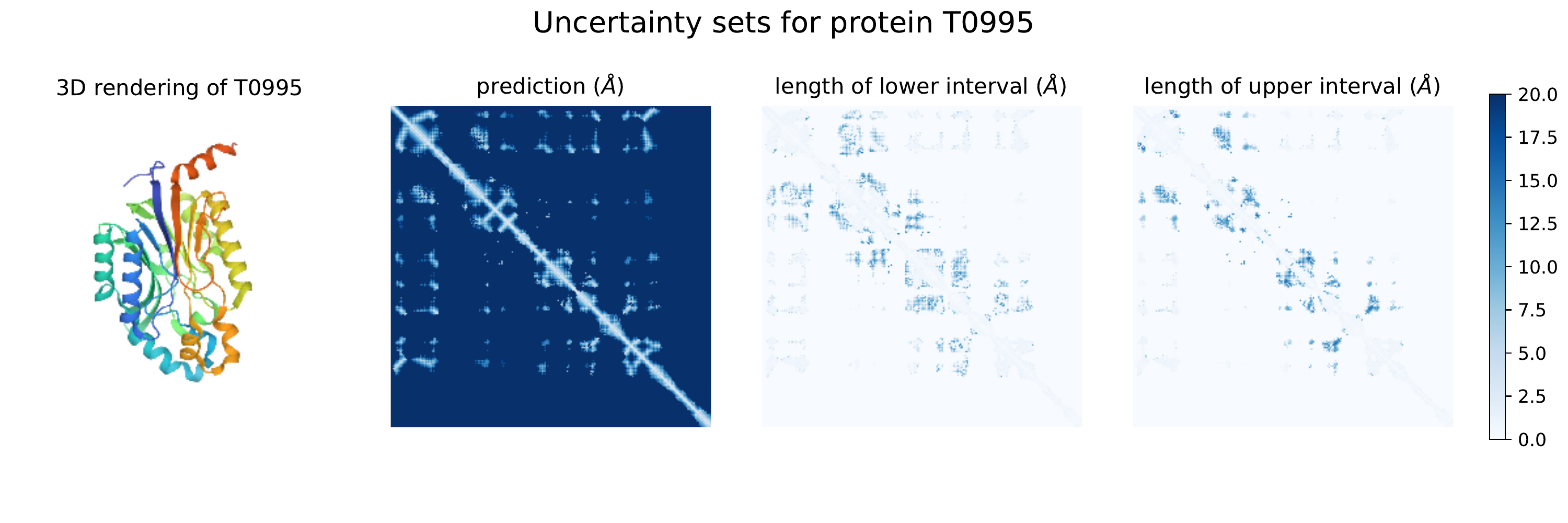}
    \vspace{-1.5cm}
    \caption{{\bf Protein distograms.} We show AlphaFold's predicted distances between residues of protein T0995 along with prediction sets at $\alpha=2\AA$ and $\delta=0.1$. The prediction set for the whole protein is the union of distance intervals for each pair of residues, and the right two panels report the distance from the point prediction to the lower and upper endpoints for each of these intervals.}
    \label{fig:grid_proteins}
    \includegraphics[width=\linewidth]{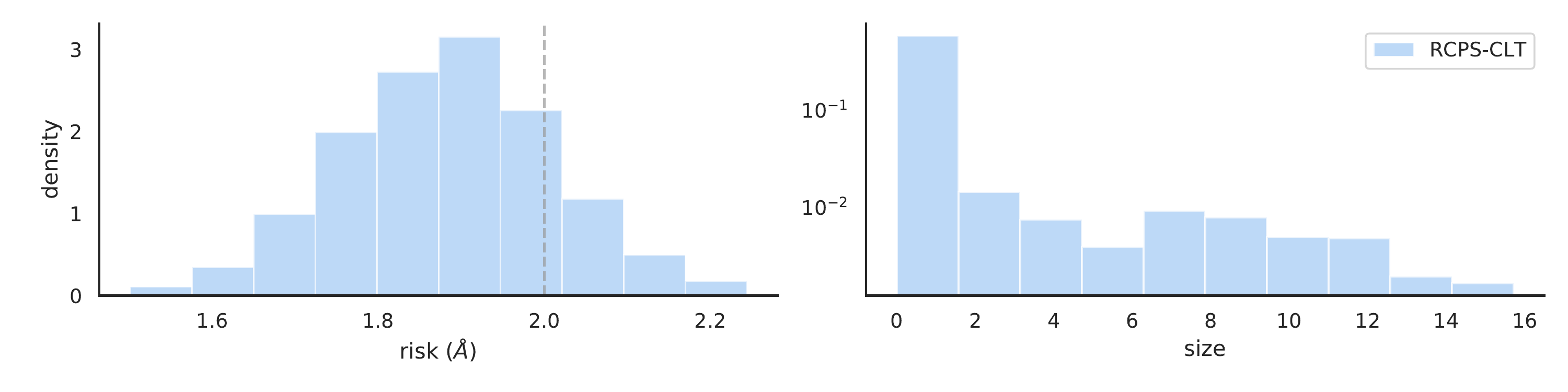}
    \vspace{-1cm}
    \caption{{\bf Protein structure prediction results.} The risk in $\AA$ and interval size (pooling all entries of each distogram) in $\AA$ are plotted as histograms, repeating for many random splits of the CASP-13 test-set. }
    \label{fig:histograms_proteins}
\end{figure}
We finish the section by demonstrating RCPS for protein structure prediction, inspired by the recent success of AlphaFold.
{\em Proteins} are biomolecules comprising one or more long chains of amino acids; when amino acids form a chemical bond to form a protein, they eject a water molecule and become amino acid {\em residues}.
Each amino acid residue has a common amine-carboxyl backbone and a different {\em side chain} with electrical and chemical properties that together determine the 3D conformation of the whole protein, and thus its function.
The so-called {\em protein structure prediction problem} is to predict a protein's three dimensional structure from a list of its residues.
A critical step in AlphaFold's protein structure prediction pipeline involves predicting the distance between the {\em $\beta$-carbons} (the second-closest carbon to the side-chain) of each residue.
These distances are then used to determine the protein's 3D structure.
We express uncertainty directly on the distances between $\beta$-carbons.

Concretely, consider the alphabet $\Sigma=\{A,C,D,E,F,G,H,I,K,L,M,N,P,Q,R,S,T,V,W,Y\}$, where each letter is the common abbreviation for an amino acid (for example, $A$ denotes Alanine).
The feature space consists of all possible words over $\Sigma$, commonly denoted as $\X=\Sigma^*$.
The label space $\Y$ is the set of all symmetric matrices with positive elements of any side length.
In an example $(x,y) \in \X \times \Y$, the entry $y_{i,j}$ defines the distance in 3D space of residues $x_i$ and $x_j$; hence, $y \in \R^{|x|\times |x|}$, and $y_{i,j}=y_{j,i}$.
We seek to predict sets $\S$ that control the $\ell_1$ projective distance from $y$ to $\S$:
\begin{equation}
    \label{eq:protein-loss}
    L(y,\S) = \underset{s \in \S}{\inf} \left\{ \frac{1}{|x|^2}\underset{i,j}{\sum}|y_{i,j}-s_{i,j}|\right\}.
\end{equation}

Now we turn to the set construction, which we specialize to the AlphaFold pipeline.
Because the AlphaFoldv2 codebase was not released at the time this paper was written, we use AlphaFoldv1 here~\cite{senior2020improved}.
For a residue chain $x \in \X$, consider a variadic function $h(x) \in [0,1]^{|x| \times |x| \times K}$, where $K$ is a positive integer and $\underset{k}{\sum}h(x)_{i,j,k}=1$ for all fixed choices of $i$ and $j$.
The function $h$ represents a probability distribution over distances $d_1,...,d_K$ for each distance between residues as a histogram; the output of $h$ is referred to as a {\em distogram}.
Given a distogram, we construct the family of set valued predictors
\begin{equation}
    \label{eq:protein-sets}
    \T_{\lambda}(x) = \prod_{0 \le i,j \le |x|}\big\{ d_k : h(x)_{i,j,k} \geq -\lambda \big\} \; \;
\end{equation}
and choose $\lhat$ as in \eqref{eq:lambda_hat_def}, as usual.

We evaluated our set construction algorithm on the $71$ test points from the CASP-13 challenge on which DeepMind released the output of their model.
In the AlphaFoldv1 pipeline, $K=64$ and $d_1,...,d_k=2\AA,...,20\AA$.
Since the data prepossessing pipepline was not released, no ground truth distance data is available. Instead, we generated semi-synthetic data points by sampling once from the distogram corresponding to each protein.
We choose parameters $\alpha=2\AA$ and $\delta=0.1$, and, due to the small sample size ($35$ calibration and $36$ test points), we only report results using the CLT bound, because the exact concentration results are hopelessly conservative with only $35$ calibration points.

Figure~\ref{fig:grid_proteins} shows an example of our prediction sets on protein T0995 (PDB identifier 3WUY) \cite{zhang2014structural}.
Figure~\ref{fig:histograms_proteins} shows the quantitative performance of the CLT, which nearly controls the risk.
The strong performance of the CLT in this small-sample regime is encouraging and suggests that our methodology can be applied to problems even with small calibration sets.

\section{Other Risk Functions}
Thus far we have defined the risk to be the mean of the loss on a single test point. In this section we consider generalizations to a broader range of settings in which the risk function is a functional other than the mean and/or the loss is a function of multiple test points. To this end, recall that there are two mathematical ingredients to the UCB calibration framework. First, there is a family of possible predictors such that the notion of error is monotone in the parameter indexing the family, $\lambda \in \Lambda$. Second, for each element $\lambda$, we have a pointwise concentration result that gives the upper confidence bound for the error at that $\lambda$. With these two ingredients, we carry out UCB calibration by selecting $\hat{\lambda}$ as in \eqref{eq:lambda_hat_def}, which has error-control guarantees as in Theorem~\ref{thm:abstract_control}.
We demonstrate the scope of this more general template with a few examples.

\subsection{Uncertainty quantification for ranking}
\label{subsec:ranking}

We consider the problem of uncertainty quantification for learning a ranking rule \cite[see, e.g.,][]{clemencon2008}. We assume we have an i.i.d.\ sequence of points, $(X_1,Y_1),\dots, (X_m, Y_m)$, where $Y \in \{1,\dots,k\}$. We wish to learn a \emph{ranking rule}: $r: \X \times \X \to \R$ such that $r(X_i, X_j)$ tends to be positive when $Y_i > Y_j$ and tends to be negative otherwise. Given a ranking rule $\hat{r} : \X \times \X \to \R$ that has been estimated based on the data $(X_{n+1}, Y_{n+1}), \dots, (X_m, y_m)$, we consider calibrating this ranking rule to control loss based on $(X_1,Y_1),\dots,(X_n, Y_n)$.

To quantify uncertainty in this setting, we use a set-valued ranking rule $\T_\lambda : \X \times \X \to 2^ \R$. Here, higher uncertainty is encoded by returning a larger set. We assume that we have a family of such predictors satsifying the following monotonicity property:
\begin{equation}
\lambda < \lambda' \implies \T_\lambda(x_1, x_2) \subset \T_{\lambda'}(x_1, x_2).
\label{eq:ustat2_nested_predictions}
\end{equation}
For example, we could take $\T_\lambda(x_1, x_2) = (\hat{r}(x_1, x_2) - \lambda, \hat{r}(x_1, x_2) + \lambda)$ for $\lambda \ge 0$.
Our notion of error control here is that we wish to correctly determine which response is larger, so we use the following error metric:
\begin{equation*}
    L(y_1, y_2, \S) = \ind{\sup{\S} < 0}\ind{y_1 > y_2} + \ind{\inf{\S} > 0}\ind{y_1 < y_2},
\end{equation*}
which says that we incur loss one if the prediction $\S$ contains no values of the correct sign and zero otherwise. More generally, we could use any loss function with the following nesting property:
\begin{equation}
    \S \subset \S' \implies L(y_1, y_2, \S) \ge L(y_1, y_2, \S').
    \label{eq:ustat2_nested_loss}
\end{equation}
We then define the risk as 
\begin{equation}
R(\T_\lambda) = \E[L(Y_1, Y_2, \T_\lambda(X_1, X_2)],
\label{eq:ustat2_risk}
\end{equation}
which can be estimated via its empirical version on the holdout data:
\begin{equation}
\widehat{R}(\T_\lambda) = \sum_{1 \le i < j \le n} L(Y_i, Y_j, \T_\lambda(X_i, X_j)).
\label{eq:ustat2_emp_risk}
\end{equation}
Suppose additionally that we have an upper confidence bound for $R(\T_\lambda)$ for each $\lambda$, as in \eqref{eq:general_bound}. In this setting, we can arrive at such an upper bound using the concentration of U-statistics, such as the following result.
\begin{prop}[Hoeffding--Bentkus--Maurer inequality for bounded U-statistics of order two]
Consider the setting above with any loss function $L$ bounded by one. Let $m = \lfloor n / 2\rfloor$. Then, for any $t\in (0, R(\lambda))$,
\begin{align*}
P(\Rhat(\lambda)\le t) \le g^{{\rm U}}(t; R(\lambda)) \triangleq \min\bigg(& \exp\left\{-mh_1(t; R(\lambda))\right\}, eP\left({\rm Binom}(m; R(\lambda)) \leq \lceil mt\rceil\right)\nonumber\\
& \inf_{\nu > 0}\exp\left\{-\frac{n\nu}{2} \left(\frac{R(\lambda)}{1 + 2G(\nu)} - t\right)\right\}\Bigg),
\end{align*}
where $G(\nu) = (e^{\nu} - \nu - 1) / \nu$.
\label{prop:ustat2_ucb}
\end{prop}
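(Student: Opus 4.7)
My plan is to reduce the lower-tail problem for the U-statistic $\widehat{R}(\lambda)$ to i.i.d. concentration by Hoeffding's averaging trick, then apply to the reduced sum the tight Hoeffding bound (Proposition~\ref{prop:hoeffding}) for the first term, the Bentkus bound (Proposition~\ref{prop:bentkus}) for the second, and a Bennett-type bound due to Maurer for the third. Throughout, I would write $\widehat{R}(\lambda)$ as a normalized order-two U-statistic with bounded symmetric kernel $h((x,y),(x',y')) = L(y, y', \T_\lambda(x,x')) \in [0,1]$ and $\E h = R(\lambda)$.

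\textbf{Hoeffding's averaging reduction.} The classical identity
$$
\widehat{R}(\lambda) = \frac{1}{n!}\sum_{\pi \in S_n} V_\pi, \qquad V_\pi = \frac{1}{m}\sum_{k=1}^{m} h\bigl(Z_{\pi(2k-1)}, Z_{\pi(2k)}\bigr),
$$
expresses the U-statistic as a convex combination of averages $V_\pi$ of $m = \lfloor n/2 \rfloor$ \emph{independent} $[0,1]$-valued summands, each with mean $R(\lambda)$. Jensen's inequality applied to any convex $\phi$ (in particular to the exponentials used in the Chernoff--Markov arguments behind Propositions~\ref{prop:hoeffding} and~\ref{prop:bentkus}) gives $\E[\phi(\widehat{R}(\lambda))] \le \E[\phi(V_{\rm id})]$, so any lower-tail inequality for the i.i.d.\ average $V_{\rm id}$ transfers verbatim to $\widehat{R}(\lambda)$. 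Applying Proposition~\ref{prop:hoeffding} to $V_{\rm id}$ yields the first argument of the minimum, $\exp\{-m h_1(t; R(\lambda))\}$, and applying Proposition~\ref{prop:bentkus} yields the second, $e P({\rm Binom}(m, R(\lambda)) \le \lceil m t\rceil)$.

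\textbf{The Bennett--Maurer bound.} For the third term I would combine the Chernoff method with the one-sided MGF estimate: for any $W \in [0,1]$ with $\E W = R(\lambda)$, convexity of $x \mapsto e^{-\nu x}$ on $[0,1]$ together with $\log(1-x) \le -x$ gives $\log \E[e^{-\nu W}] \le -R(\lambda)(1 - e^{-\nu})$. Applied in the U-statistic setting (in Maurer's formulation, which leverages all $\binom{n}{2}$ pairs rather than only the $m$ disjoint pairs produced by the Jensen reduction alone), one obtains a Chernoff bound of the form $\exp\{-\tfrac{n\nu}{2}(R(\lambda)(1 - e^{-\nu})/\nu - t)\}$. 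The elementary algebraic inequality $1 - e^{-\nu} \ge \nu/(1 + 2G(\nu))$ for all $\nu > 0$ (verified by clearing denominators and comparing Taylor expansions at $0$, where both sides and their first two derivatives agree while the third derivative of the LHS is strictly larger) rewrites this in the stated form $\exp\{-\tfrac{n\nu}{2}(R(\lambda)/(1+2G(\nu)) - t)\}$. Taking the minimum of the three bounds and the infimum over $\nu > 0$ concludes.

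\textbf{Main obstacle.} The non-routine step is recovering the factor $n/2$ in the Bennett--Maurer term rather than the $m = \lfloor n/2 \rfloor$ that the pure Jensen reduction yields; for odd $n$ this is a genuine strengthening and requires Maurer's U-statistic-specific argument (or an equivalent refinement via the Hoeffding projection kernel), not just the averaging trick that handles the first two terms. The accompanying algebraic inequality $(1-e^{-\nu})(1+2G(\nu)) \ge \nu$ is elementary but is the only calculation requiring real care; everything else is a clean reuse of Propositions~\ref{prop:hoeffding} and~\ref{prop:bentkus} under MGF domination.
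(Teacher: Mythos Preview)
Your treatment of the first two terms is essentially the paper's proof: the Hoeffding averaging representation $\widehat{R}(\lambda)=\E_\pi[V_\pi]$ plus Jensen with a convex $\psi$ transfers the i.i.d.\ bounds at effective sample size $m=\lfloor n/2\rfloor$. One small correction: the Bentkus inequality is not derived from an exponential Chernoff function but from $\psi(z)=(z-\nu)_+$; this is still convex, so the Jensen step goes through, but it is not ``the exponentials used in the Chernoff--Markov arguments.''

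For the third term your route diverges from the paper and the step you flag as the main obstacle is not actually filled. You propose to first obtain a Bennett-type bound $\exp\{-\tfrac{n\nu}{2}(R(\lambda)(1-e^{-\nu})/\nu - t)\}$ with the factor $n/2$, then weaken it to the stated form via the (correct) inequality $(1-e^{-\nu})(1+2G(\nu))\ge \nu$. But you never show how to get the Bennett bound with $n/2$ rather than $m$, and ``Maurer's U-statistic-specific argument'' does not deliver that intermediate bound. What the paper actually does is use Maurer's \emph{self-bounding} technique: with $U=\widehat{R}(\lambda)$, $U_i=\inf_{z_i}U(Z_1,\dots,z_i,\dots,Z_n)$ and $W=\tfrac{n}{2}U$, one checks $0\le W-W_i\le 1$ and $\sum_i(W-W_i)^2\le 2W$, and Maurer's Theorem~13 then gives directly
\[
\log \E\bigl[e^{\nu(\E[W]-W)}\bigr]\le \frac{2\nu G(\nu)}{1+2G(\nu)}\,\E[W],
\]
from which Markov's inequality yields the third term of $g^{\rm U}$ in its final form with the $n/2$ factor. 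There is no intermediate Bennett step, and the algebraic inequality you verify is unnecessary. In short, the $n/2$ versus $m$ issue is resolved by the self-bounding structure of the full U-statistic (which touches all $n$ coordinates), not by sharpening an i.i.d.\ Bennett bound; your outline does not supply that argument.
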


With this upper bound, we can implement UCB calibration by selecting $\hat{\lambda}$ through Proposition \ref{prop:generic}. This gives a finite-sample, distribution-free guarantee for error control of the uncertainty-aware ranking function $\T_{\hat{\lambda}}$:
\begin{theorem}[RCPS for ranking]
\label{thm:ranking}
Consider the setting above with any loss function bounded by one. Then, with probability at least $1-\delta$, we have $R(\T_{\hat{\lambda}}) \le \alpha$.
\end{theorem}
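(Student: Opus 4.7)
The strategy is to re-use the template of Theorem \ref{thm:abstract_control}, since UCB calibration is modular: once one has (a) monotonicity of $R(\lambda)$ in $\lambda$ and (b) a pointwise $(1-\delta)$ upper confidence bound $\Rhat^+(\lambda)$ for $R(\lambda)$, the selection rule \eqref{eq:lambda_hat_def} controls risk automatically. The U-statistic structure of $\Rhat(\lambda)$ enters only in how we build the UCB, not in the logic of the selection argument itself.

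First I would verify monotonicity of the risk. For $\lambda < \lambda'$, the nesting property \eqref{eq:ustat2_nested_predictions} gives $\T_\lambda(X_1, X_2) \subset \T_{\lambda'}(X_1, X_2)$, and then \eqref{eq:ustat2_nested_loss} gives $L(Y_1, Y_2, \T_\lambda(X_1, X_2)) \ge L(Y_1, Y_2, \T_{\lambda'}(X_1, X_2))$ pointwise; taking expectations yields $R(\T_\lambda) \ge R(\T_{\lambda'})$. Next I would construct the UCB via Proposition \ref{prop:generic} applied to the U-statistic tail bound $g^{{\rm U}}(t; R(\lambda))$ from Proposition \ref{prop:ustat2_ucb}, setting
\begin{equation*}
\Rhat^+(\lambda) = \sup\bigl\{R : g^{{\rm U}}(\Rhat(\lambda); R) \ge \delta\bigr\}.
\end{equation*}
Proposition \ref{prop:generic} demands that $g^{{\rm U}}(\cdot; R)$ be non-decreasing for each fixed $R$, which I would check term by term: the Chernoff term $\exp\{-m h_1(t; R)\}$ is non-decreasing in $t$ on $t \le R$ because $h_1(\cdot; R)$ is the binary KL divergence, convex with minimum at $t = R$; the Bentkus term $e\,P(\mathrm{Binom}(m, R) \le \lceil mt\rceil)$ is non-decreasing because the upper limit of the binomial CDF is; and the Maurer-type term $\inf_{\nu > 0} \exp\{-(n\nu/2)(R/(1 + 2G(\nu)) - t)\}$ is an infimum of functions each non-decreasing in $t$ (the coefficient of $t$ in the exponent is $+n\nu/2 > 0$), hence non-decreasing. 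The minimum of non-decreasing functions is non-decreasing, so the hypothesis of Proposition \ref{prop:generic} is met and $\Rhat^+(\lambda)$ is a valid $(1-\delta)$ UCB pointwise in $\lambda$.

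With (a) and (b) in hand, the selection argument from Theorem \ref{thm:abstract_control} transfers verbatim. Let $\lambda^* = \inf\{\lambda \in \Lambda : R(\lambda) \le \alpha\}$; by monotonicity and continuity of $R$ one has $R(\lambda^*) = \alpha$ and $\{R(\T_{\hat{\lambda}}) > \alpha\} \subseteq \{\hat{\lambda} < \lambda^*\}$. But by the definition \eqref{eq:lambda_hat_def}, $\hat{\lambda} < \lambda^*$ forces $\Rhat^+(\lambda^*) < \alpha = R(\lambda^*)$, i.e., the UCB undershoots the true risk at $\lambda^*$, an event of probability at most $\delta$. Hence $P(R(\T_{\hat{\lambda}}) > \alpha) \le \delta$. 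Continuity of $R$ can be relaxed as in the remark following Theorem \ref{thm:abstract_control}.

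The main obstacle is not the selection argument, which is essentially a recycling of the proof of Theorem \ref{thm:abstract_control}, but rather the verification that the U-statistic concentration inequality in Proposition \ref{prop:ustat2_ucb} fits the generic inversion framework of Proposition \ref{prop:generic}. The checks above reduce this to elementary monotonicity of each branch of the min. One subtlety worth flagging is that the $(X_i, Y_i)$ enter $\Rhat(\lambda)$ through a pair-indexed sum rather than a single sum, so the dependence structure across pairs is no longer i.i.d.; this is precisely what Proposition \ref{prop:ustat2_ucb} is designed to absorb, combining the Hoeffding U-statistic bound (via $\lfloor n/2\rfloor$ i.i.d. blocks) with the Bentkus and Maurer refinements, so once that black box is accepted the rest of the argument is routine.
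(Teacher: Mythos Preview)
Your proposal is correct and follows essentially the same approach as the paper: the paper's proof simply cites the abstract UCB-calibration result (Theorem~\ref{thm:abstract_control_general}), and you have unpacked exactly what that citation entails, verifying monotonicity of $R(\lambda)$ and the validity of the pointwise UCB before replaying the selection argument. Your additional check that each branch of $g^{\rm U}$ is nondecreasing in $t$ is a nice bit of diligence that the paper leaves implicit.
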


This uncertainty quantification is natural; as $\lambda$ grows, the set $\T_\lambda(X_i, X_j)$ will more frequently include both positive and negative numbers, in which case the interpretation is that our ranking rule $\T_\lambda$ abstains from ranking those two inputs. The UCB calibration tunes $\lambda$ so that we abstain from as few pairs as possible, while guaranteeing that the probability of making a mistake on inputs for which we do not abstain is below the user-specified level $\alpha$.

\subsection{Uncertainty quantification for metric learning}
We next consider the problem of supervised metric learning, where we have an i.i.d.\ sequence of points $(X_1,Y_1),\dots, (X_m, Y_m)$, with $\Y = \{1,\dots,k\}$. We wish to train a {\em metric} $d : \X \times \X \to \R$ such that it separates the classes well. That is, we wish for $d(X_i, X_j)$ to be small for points such that $Y_i = Y_j$ and large otherwise. We assume that we have fit a metric $\hat{d}$ based on the data $(X_{n+1}, Y_{n+1}), \dots, (X_m, Y_m)$ and our goal is to calibrate this metric based on $(X_1,Y_1),\dots,(X_n, Y_n)$. Our development will closely track the ranking example, again leveraging U-statistics of order two.

To formulate a notion of uncertainty quantification for metric learning, we express uncertainty by introducing a set-valued metric,$\T_\lambda : \X \times \X \to 2^\R$, where greater uncertainty is represented by returning a larger subset of $\R$. We assume that we have a family of such set-valued metrics that have the monotonicity property in \eqref{eq:ustat2_nested_predictions}.
For example, we could take $\T_\lambda(x_1, x_2) = (\hat{d}(x_1, x_2) - \lambda, \hat{d}(x_1, x_2) + \lambda)$ for $\lambda \ge 0$.
To formalize our goal that the classes be well separated, we take as our loss function the following:
\begin{equation*}
    L(y_1, y_2, \S) = (\inf(\S) - 1)^+ \ind{y_1 = y_2} + (\sup(\S) - 1)^- \ind{y_1 \ne y_2},
\end{equation*}
where $\S$ is a set-valued prediction of the distance between $x_1$ and $x_2$. This choice implies that we take a distance of one to be the decision boundary between classes, so that points with distance less than one should correspond to the same class. We incur an error if two points in the same class are predicted to have distance above one. This particular parameterization is somewhat arbitrary, and we could instead take any loss satisfying the nesting property in \eqref{eq:ustat2_nested_loss}.
We then define the risk as in \eqref{eq:ustat2_risk} and the empirical risk as in \eqref{eq:ustat2_emp_risk}.
From here we can adopt the upper bound from Proposition~\ref{prop:ustat2_ucb} if we additionally restrict $\hat{d}$ to return values in a bounded set. We again implement UCB calibration by selecting $\hat{\lambda}$ as in \eqref{eq:lambda_hat_def}, which yields the following guarantee.
\begin{theorem}[RCPS for metric learning]
\label{thm:metric_learning}
In the setting above, suppose the loss function is bounded by $1$. Then, with probability at least $1-\delta$, we have $R(\T_{\hat{\lambda}}) \le \alpha$.
\end{theorem}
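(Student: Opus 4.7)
The plan is to mirror the argument of Theorem~\ref{thm:ranking} line for line, since metric learning shares precisely the same U-statistic structure as the ranking setup. The two ingredients needed by the general UCB calibration template (cf.\ the remark after Theorem~\ref{thm:abstract_control}) are monotonicity of the risk $\lambda \mapsto R(\lambda)$ and a pointwise $(1-\delta)$ upper confidence bound on $R(\lambda)$; once both are in hand, Theorem~\ref{thm:abstract_control} delivers the conclusion immediately.

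First I would verify the nesting property \eqref{eq:ustat2_nested_loss} for the metric-learning loss. If $\S \subset \S'$, then $\inf(\S) \ge \inf(\S')$ and $\sup(\S) \le \sup(\S')$; since $x \mapsto (x-1)^+$ is non-decreasing and $x \mapsto (x-1)^-$ is non-increasing, each of the two summands in $L(y_1,y_2,\S)$ is non-increasing in the set argument, so $L(y_1,y_2,\S) \ge L(y_1,y_2,\S')$. Combining this with the nested family \eqref{eq:ustat2_nested_predictions} yields that $R(\lambda) = \E[L(Y_1,Y_2,\T_\lambda(X_1,X_2))]$ is monotone non-increasing in $\lambda$.

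Next I would observe that the kernel $((y_i,x_i),(y_j,x_j)) \mapsto L(Y_i,Y_j,\T_\lambda(X_i,X_j))$ is symmetric (the loss treats $y_1,y_2$ symmetrically, and $\T_\lambda$ inherits symmetry from $\hat{d}$) and, by hypothesis, bounded by one. Consequently the empirical risk in \eqref{eq:ustat2_emp_risk} is a U-statistic of order two with a bounded symmetric kernel, so Proposition~\ref{prop:ustat2_ucb} applies and yields the lower-tail bound $P(\widehat{R}(\lambda)\le t)\le g^{\mathrm{U}}(t;R(\lambda))$. Inverting this tail bound via Proposition~\ref{prop:generic} produces the required pointwise $(1-\delta)$ upper confidence bound $\widehat{R}^+(\lambda)$.

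With monotone $R(\lambda)$ and a valid pointwise UCB in place, I would conclude by invoking the general version of Theorem~\ref{thm:abstract_control} (as flagged in the remark) applied to $\hat{\lambda}$ defined by \eqref{eq:lambda_hat_def}, obtaining $P(R(\T_{\hat{\lambda}}) \le \alpha) \ge 1-\delta$. The substantive technical content lies in the U-statistic concentration inequality itself, which is already packaged in Proposition~\ref{prop:ustat2_ucb}; once ranking is settled, the main obstacle here is really only bookkeeping, namely verifying the nesting, symmetry, and boundedness properties of the loss and kernel.
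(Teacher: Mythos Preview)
Your proposal is correct and follows essentially the same route as the paper: the paper's own proof is the one-liner ``This follows from Theorem~\ref{thm:abstract_control_general},'' and your argument simply spells out the bookkeeping (nesting of the loss, monotonicity of $R(\lambda)$, and the pointwise UCB from Proposition~\ref{prop:ustat2_ucb} via Proposition~\ref{prop:generic}) that justifies that invocation.
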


\subsection{Adversarially robust uncertainty quantification}
Finally, we briefly remark how our framework might be extended to handle uncertainty quantification with adversarial robustness \cite[see, e.g.,][]{madry2018, carmon2019unlabeled}. In this setting, the goal is to fit a model that performs well even for the worst-case perturbation of each input data point over some limited set of perturbations, such as an $\ell^\infty$ ball. This notion of robust loss can be translated into our framework by defining the appropriate risk function. For example, we could consider the risk function
\begin{equation*}
    R^{\textnormal{(rob)}}(\T) = \E\left[\sup_{x' \in \mathcal{B}_\epsilon(X)} L(Y, \T(x')) \right],
\end{equation*}
where $\mathcal{B}_\epsilon(X)$ is an $\ell^\infty$ ball of radius $\epsilon$ centered at $X$. For a family of set-valued functions $\{\T_\lambda\}_{\lambda \in \Lambda}$, one can estimate the risk on a holdout set and then choose the value of $\lambda$ with the UCB calibration algorithm, resulting in a finite-sample guarantee on the risk. While carrying out this procedure would require computational innovations, our results establish that it is statistically valid.

\section{Discussion}
Risk-controlling prediction sets are a new way to represent uncertainty in predictive models. Since they apply to any existing model without retraining, they are straightforward to use in many situations. Our approach is closely related to that of split conformal prediction, but is more flexible in two ways. First, our approach can incorporate many loss functions, whereas conformal prediction controls the coverage---i.e, binary risk. The multilabel classification setting of Section~\ref{sec:multilabel} is one example where RCPS enables the use of a more natural loss function: the false negative rate.
Second, risk-controlling prediction sets apply whenever one has access to a concentration result, whereas conformal prediction relies on exchangeability, a particular combinatorial structure. Concentration is a more general tool and can apply to a wider range of problems, such as the uncertainty quantification for ranking presented in Section~\ref{subsec:ranking}. To summarize, in contrast to the standard train/validation/test split paradigm which only estimates global uncertainty (in the form of overall prediction accuracy), RCPS allow the user to automatically return \emph{valid instance-wise uncertainty estimates} for many prediction tasks.

\section*{Acknowledgements}

We wish to thank Emmanuel Cand\`es, Maxime Cauchois, Edgar Dobriban, Todd Chapman, Mariel Werner, and Suyash Gupta for giving feedback on an early version of this work.
A.~A.~was partially supported by the National Science Foundation Graduate Research Fellowship Program and a Berkeley Fellowship. This work was partially supported by the Army Research
Office under contract W911NF-16-1-0368.

\printbibliography

@inproceedings{deng2012hedging,
  title={Hedging your bets: Optimizing accuracy-specificity trade-offs in large scale visual recognition},
  author={Deng, Jia and Krause, Jonathan and Berg, Alexander C and Fei-Fei, Li},
  booktitle={Proceedings of the IEEE Conference on Computer Vision and Pattern Recognition (CVPR)},
  pages={3450--3457},
  year={2012},
  doi={10.1109/CVPR.2012.6248086}
}

@inproceedings{
park2021pac,
title={{PAC} confidence predictions for deep neural network classifiers},
author={Sangdon Park and Shuo Li and Osbert Bastani and Insup Lee},
booktitle={International Conference on Learning Representations (ICLR)},
year={2021},
url={https://openreview.net/forum?id=Qk-Wq5AIjpq}
}

@inproceedings{
Park2020PAC,
title={{PAC} confidence sets for deep neural networks via calibrated prediction},
author={Sangdon Park and Osbert Bastani and Nikolai Matni and Insup Lee},
booktitle={International Conference on Learning Representations (ICLR)},
year={2020},
url={https://openreview.net/forum?id=BJxVI04YvB}
}

@article{guan2019prediction,
      title={Prediction and outlier detection in classification problems}, 
      author={Leying Guan and Rob Tibshirani},
      year={2019},
      eprinttt={1905.04396},
      archivePrefixxx={arXiv},
      primaryClass={stat.ME},
      journal={arXiv:1905.04396}
}

@article{guan2020conformal,
      title={Conformal prediction with localization}, 
      author={Leying Guan},
      year={2020},
      eprinttt={1908.08558},
      archivePrefixxx={arXiv},
      primaryClass={math.ST},
      journal={arXiv:1908.08558}
}

@article{zhang2014structural,
  title={Structural insights into enzymatic activity and substrate specificity determination by a single amino acid in nitrilase from Syechocystis sp. PCC6803},
  author={Zhang, Lujia and Yin, Bo and Wang, Chao and Jiang, Shuiqin and Wang, Hualei and Yuan, Y Adam and Wei, Dongzhi},
  journal={Journal of structural biology},
  volume={188},
  number={2},
  pages={93--101},
  year={2014},
  publisher={Elsevier},
  doi={10.1016/j.jsb.2014.10.003}
}

@article{hu2020distributionfree,
      title={A distribution-free test of covariate shift using conformal prediction}, 
      author={Xiaoyu Hu and Jing Lei},
      year={2020},
      eprinttt={2010.07147},
      journal={arXiv:2010.07147},
      archivePrefixxx={arXiv},
      primaryClass={stat.ME}
}

@article{cauchois2020robust,
      title={Robust validation: Confident predictions even when distributions shift}, 
      author={Maxime Cauchois and Suyash Gupta and Alnur Ali and John C. Duchi},
      year={2020},
      eprinttt={2008.04267},
      journal={arXiv:2008.04267},
      archivePrefixxx={arXiv},
      primaryClass={stat.ML}
}

@article{lei2014classification,
    author = {Lei, Jing},
    title = {Classification with confidence},
    journal = {Biometrika},
    volume = {101},
    number = {4},
    pages = {755-769},
    year = {2014},
    month = {10},
    abstract = "{A framework for classification is developed with a notion of confidence. In this framework, a classifier consists of two tolerance regions in the predictor space, with a specified coverage level for each class. The classifier also produces an ambiguous region where the classification needs further investigation. Theoretical analysis reveals interesting structures of the confidence-ambiguity trade-off, and the optimal solution is characterized by extending the Neyman–Pearson lemma. We provide general estimating procedures, along with rates of convergence, based on estimates of the conditional probabilities. The method can be easily implemented with good robustness, as illustrated through theory, simulation and a data example.}",
    issn = {0006-3444},
    doi= {10.1093/biomet/asu038},
    urlll = {https://doi.org/10.1093/biomet/asu038},
    eprinttt = {https://academic.oup.com/biomet/article-pdf/101/4/755/5029534/asu038.pdf},
}

@article{senior2020improved,
  title={Improved protein structure prediction using potentials from deep learning},
  author={Senior, Andrew W and Evans, Richard and Jumper, John and Kirkpatrick, James and Sifre, Laurent and Green, Tim and Qin, Chongli and Zidek, Augustin and Nelson, Alexander WR and Bridgland, Alex and others},
  journal={Nature},
  volume={577},
  number={7792},
  pages={706--710},
  year={2020},
  publisher={Nature Publishing Group},
  doi={10.1038/s41586-019-1923-7}
}

@article{fellbaum2012wordnet,
  title={WordNet},
  author={Fellbaum, Christiane},
  journal={The Encyclopedia of Applied Linguistics},
  year={2012},
  publisher={Wiley Online Library},
  doi={10.1002/9781405198431.wbeal1285}
}

@article{robinson2004efficient,
  title={Efficient morphological reconstruction: A downhill filter},
  author={Robinson, Kevin and Whelan, Paul F},
  journal={Pattern Recognition Letters},
  volume={25},
  number={15},
  pages={1759--1767},
  year={2004},
  publisher={Elsevier},
  doi={doi.org/10.1016/j.patrec.2004.07.002}
}

@article{hirschberg1979computing,
  title={Computing connected components on parallel computers},
  author={Hirschberg, Daniel S. and Chandra, Ashok K. and Sarwate, Dilip V.},
  journal={Communications of the ACM},
  volume={22},
  number={8},
  pages={461--464},
  year={1979},
  publisher={ACM New York, NY, USA},
  doi={10.1145/359138.359141}
}

@inproceedings{carmon2019unlabeled,
     author = {Carmon, Yair and Raghunathan, Aditi and Schmidt, Ludwig and Duchi, John C and Liang, Percy S},
     booktitle = {Advances in Neural Information Processing Systems},
     pages = {11192--11203},
     title = {Unlabeled data improves adversarial robustness},
     volume = {32},
     year = {2019},
     urll = {https://proceedings.neurips.cc/paper/2019/file/32e0bd1497aa43e02a42f47d9d6515ad-Paper.pdf},
}

@article{clemencon2008,
    author = "Clemencon, Stephan and Lugosi, Gabor and Vayatis, Nicolas",
    fjournal = "Annals of Statistics",
    journal = "Annals of Statistics",
    monthhh = "04",
    number = "2",
    pages = "844--874",
    publisher = "The Institute of Mathematical Statistics",
    title = "Ranking and empirical minimization of {U-statistics}",
    urlll = "https://doi.org/10.1214/009052607000000910",
    volume = "36",
    year = "2008",
    doi = {10.1214/009052607000000910}
}

@article{bahadur1956,
    author = "Bahadur, R. R. and Savage, Leonard J.",
    doiii= "10.1214/aoms/1177728077",
    journal = "Annals of Mathematical Statistics",
    monthhh = "12",
    number = "4",
    pages = "1115--1122",
    publisher = "The Institute of Mathematical Statistics",
    title = "The nonexistence of certain statistical procedures in nonparametric problems",
    urlll = "https://doi.org/10.1214/aoms/1177728077",
    volume = "27",
    year = "1956",
    doi = {10.1214/aoms/1177728077}
}

@inproceedings{madry2018,
  author    = {Aleksander Madry and
               Aleksandar Makelov and
               Ludwig Schmidt and
               Dimitris Tsipras and
               Adrian Vladu},
  title     = {Towards deep learning models resistant to adversarial attacks},
  booktitle = {International Conference on Learning Representations (ICLR)},
  year      = {2018},
  url={https://openreview.net/forum?id=rJzIBfZAb}
}

@article{pinelis1989,
author = {Pinelis, Iosif and Utev, S.},
year = {1989},
title = {Exact exponential bounds for sums of independent random variables},
volume = {34},
pages = {384-390},
journal = {Theory of Probability and Its Applications},
doiii= {10.1137/1134032},
note = {(Russian)}
}

@inproceedings{fan2020pranet,
  title={Pranet: Parallel reverse attention network for polyp segmentation},
  author={Fan, Deng-Ping and Ji, Ge-Peng and Zhou, Tao and Chen, Geng and Fu, Huazhu and Shen, Jianbing and Shao, Ling},
  booktitle={International Conference on Medical Image Computing and Computer-Assisted Intervention},
  pages={263--273},
  year={2020},
  organizationnn={Springer},
  doi={10.1007/978-3-030-59725-2_26}
}

@article{silva2014toward,
  title={Toward embedded detection of polyps in {WCE} images for early diagnosis of colorectal cancer},
  author={Silva, Juan and Histace, Aymeric and Romain, Olivier and Dray, Xavier and Granado, Bertrand},
  journal={International Journal of Computer Assisted Radiology and Surgery},
  volume={9},
  number={2},
  pages={283--293},
  year={2014},
  publisher={Springer},
  doi={10.1007/s11548-013-0926-3}
}

@inproceedings{pogorelov2017kvasir,
  title={Kvasir: A multi-class image dataset for computer aided gastrointestinal disease detection},
  author={Pogorelov, Konstantin and Randel, Kristin Ranheim and Griwodz, Carsten and Eskeland, Sigrun Losada and de Lange, Thomas and Johansen, Dag and Spampinato, Concetto and Dang-Nguyen, Duc-Tien and Lux, Mathias and Schmidt, Peter Thelin and others},
  booktitle={Proceedings of the 8th ACM on Multimedia Systems Conference},
  pages={164--169},
  year={2017},
  doi={doi.org/10.1145/3193289}
}

@article{bernal2012towards,
  title={Towards automatic polyp detection with a polyp appearance model},
  author={Bernal, Jorge and S{\'a}nchez, Javier and Vilarino, Fernando},
  journal={Pattern Recognition},
  volume={45},
  number={9},
  pages={3166--3182},
  year={2012},
  publisher={Elsevier},
  doi = {https://doi.org/10.1016/j.patcog.2012.03.002}
}

@article{borgli2020hyperkvasir,
  title={HyperKvasir, a comprehensive multi-class image and video dataset for gastrointestinal endoscopy},
  author={Borgli, Hanna and Thambawita, Vajira and Smedsrud, Pia H and Hicks, Steven and Jha, Debesh and Eskeland, Sigrun L and Randel, Kristin Ranheim and Pogorelov, Konstantin and Lux, Mathias and Nguyen, Duc Tien Dang and others},
  journal={Scientific Data},
  volume={7},
  number={1},
  pages={1--14},
  year={2020},
  publisher={Nature Publishing Group},
  doi = {https://doi.org/10.1038/s41597-020-00622-y}
}

@inproceedings{he2016deep,
  title={Deep residual learning for image recognition},
  author={He, Kaiming and Zhang, Xiangyu and Ren, Shaoqing and Sun, Jian},
  booktitle={Proceedings of the IEEE Conference on Computer Vision and Pattern Recognition (CVPR)},
  pages={770--778},
  year={2016},
  doi={10.1109/CVPR.2016.90}
}

@inproceedings{marcel2010torchvision,
  title={Torchvision: The machine-vision package of Torch},
  author={Marcel, S{\'e}bastien and Rodriguez, Yann},
  booktitle={Proceedings of the 18th ACM International Conference on Multimedia},
  pages={1485--1488},
  year={2010},
  doi={10.1145/1873951.1874254}
}

@inproceedings{deng2009imagenet,
  title={Imagenet: A large-scale hierarchical image database},
  author={Deng, Jia and Dong, Wei and Socher, Richard and Li, Li-Jia and Li, Kai and Fei-Fei, Li},
  booktitle={Proceedings of the IEEE Conference on Computer Vision and Pattern Recognition (CVPR)},
  pages={248--255},
  year={2009},
  organizationnn={Ieee},
  doi={10.1109/CVPR.2009.5206848}
}

@inproceedings{maurer2009empirical,
  title={Empirical {B}ernstein bounds and sample variance penalization},
  author={Maurer, Andreas and Pontil, Massimiliano},
  booktitle ={Conference on Learning Theory (COLT)},
  year={2009},
  url = {https://www.learningtheory.org/colt2009/papers/012.pdf}
}

@inproceedings{lin2014microsoft,
  title={Microsoft {COCO}: Common objects in context},
  author={Lin, Tsung-Yi and Maire, Michael and Belongie, Serge and Hays, James and Perona, Pietro and Ramanan, Deva and Doll{\'a}r, Piotr and Zitnick, C Lawrence},
  booktitle={European conference on computer vision},
  pages={740--755},
  year={2014},
  organization={Springer},
  doi={10.1007/978-3-319-10602-1_48}
}

@article{ridnik2020tresnet,
  title={TResNet: High performance GPU-dedicated architecture},
  author={Ridnik, Tal and Lawen, Hussam and Noy, Asaf and Friedman, Itamar},
  journal={arXiv:2003.13630},
  year={2020}
}

@article{hoeffding1963,
     ISSN = {01621459},
     urlll= {http://www.jstor.org/stable/2282952},
     author = {Wassily Hoeffding},
     journal = {Journal of the American Statistical Association},
     number = {301},
     pages = {13--30},
     publisher = {[American Statistical Association, Taylor & Francis, Ltd.]},
     title = {Probability inequalities for sums of bounded random variables},
     volume = {58},
     year = {1963},
     doi = {10.1080/01621459.1963.10500830}
}

@article{mortier2020efficient,
      title={Efficient set-valued prediction in multi-class classification}, 
      author={Thomas Mortier and Marek Wydmuch and Krzysztof Dembczyński and Eyke Hüllermeier and Willem Waegeman},
      year={2020},
      journal={arXiv:1906.08129},
}

@article{delcoz2009learning,
  author  = {Juan Jos{{\'e}} del Coz and Jorge D{{\'i}}ez and Antonio Bahamonde},
  title   = {Learning nondeterministic classifiers},
  journal = {Journal of Machine Learning Research},
  year    = {2009},
  volume  = {10},
  number  = {79},
  pages   = {2273-2293},
  url     = {http://jmlr.org/papers/v10/delcoz09a.html}
}

@InProceedings{grycko1993classification,
    author="Grycko, Eugen",
    editorrr="Opitz, Otto
    and Lausen, Berthold
    and Klar, R{\"u}diger",
    title="Classification with set-valued decision functions",
    booktitle="Information and Classification",
    year="1993",
    publisherrr="Springer Berlin, Heidelberg",
    addressss="Berlin, Heidelberg",
    pages="218--224",
    isbn="978-3-642-50974-2",
    doi={10.1007/978-3-642-50974-2_22}
}

@article{wilks1941,
    author = "Wilks, S. S.",
    doi= "10.1214/aoms/1177731788",
    journal = "Annals of Mathematical Statistics",
    monthhh = "03",
    number = "1",
    pages = "91--96",
    publisher = "The Institute of Mathematical Statistics",
    title = "Determination of sample sizes for setting tolerance limits",
    urlll = "https://doi.org/10.1214/aoms/1177731788",
    volume = "12",
    year = "1941"
}

@article{tukey1947,
    author = "Tukey, John W.",
    doi= "10.1214/aoms/1177730343",
    journal = "Annals of Mathematical Statistics",
    monthhh = "12",
    number = "4",
    pages = "529--539",
    publisher = "The Institute of Mathematical Statistics",
    title = "Non-parametric estimation II. Statistically equivalent blocks and tolerance regions--the continuous case",
    urlll = "https://doi.org/10.1214/aoms/1177730343",
    volume = "18",
    year = "1947"
}

@article{wilks1942,
    author = "Wilks, S. S.",
    doi= "10.1214/aoms/1177731537",
    journal = "Annals of Mathematical Statistics",
    monthhh = "12",
    number = "4",
    pages = "400--409",
    publisher = "The Institute of Mathematical Statistics",
    title = "Statistical prediction with special reference to the problem of tolerance limits",
    urlll = "https://doi.org/10.1214/aoms/1177731537",
    volume = "13",
    year = "1942"
}

@article{wald1943,
    author = "Wald, Abraham",
    doiii= "10.1214/aoms/1177731491",
    journal = "Annals of Mathematical Statistics",
    monthhh = "03",
    number = "1",
    pages = "45--55",
    publisher = "The Institute of Mathematical Statistics",
    title = "An extension of Wilks' method for setting tolerance limits",
    urlll = "https://doi.org/10.1214/aoms/1177731491",
    volume = "14",
    year = "1943",
    doi = {10.1214/aoms/1177731491}
}

@article{lei2013conformal,
    author = {Lei, Jing and Rinaldo, Alessandro and Wasserman, Larry},
    year = {2015},
    monthhh = {02},
    pages = {29-43},
    title = {A conformal prediction approach to explore functional data},
    volume = {74},
    journal = {Annals of Mathematics and Artificial Intelligence},
    doi= {10.1007/s10472-013-9366-6}
}

@article{izbicki2019flexible,
  title={Flexible distribution-free conditional predictive bands using density estimators}, 
  author={Rafael Izbicki and Gilson T. Shimizu and Rafael B. Stern},
  journal={arXiv:1910.05575},
  year={2019},
}

@book{krishnamoorthy2009statistical,
  title={Statistical Tolerance Regions: Theory, Applications, and Computation},
  author={Krishnamoorthy, K. and Mathew, T.},
  isbn={9780470473894},
  seriesss={Wiley Series in Probability and Statistics},
  urlll ={https://books.google.com/books?id=1jQh0miU6PQC},
  year={2009},
  publisher={Wiley}
}

@article{gupta2020nested,
      title={Nested conformal prediction and quantile out-of-bag ensemble methods}, 
      author={Chirag Gupta and Arun K. Kuchibhotla and Aaditya K. Ramdas},
      journal={arXiv:1910.10562},
      year={2020},
}

@article{Sadinle2016LeastAS,
  title={Least ambiguous set-valued classifiers with bounded error levels},
  author={Mauricio Sadinle and Jing Lei and L. Wasserman},
  journal={Journal of the American Statistical Association},
  year={2019},
  volume={114},
  pages={223 - 234},
  doi={10.1080/01621459.2017.1395341}
}

@inproceedings{angelopoulos2020sets,
    title={Uncertainty sets for image classifiers using conformal prediction},
    author={Anastasios Nikolas Angelopoulos and Stephen Bates and Jitendra Malik and Michael I Jordan},
    booktitle={International Conference on Learning Representations (ICLR)},
    year={2021},
    url={https://openreview.net/forum?id=eNdiU_DbM9}
}

@article{bentkus2004hoeffding,
    title={On {H}oeffding’s inequalities},
    author={Bentkus, Vidmantas},
    volume = {32},
    journal = {The Annals of Probability},
    number = {2},
    publisher = {Institute of Mathematical Statistics},
    pages = {1650 -- 1673},
    year = {2004},
    doi = {10.1214/009117904000000360}
}

@article{cauchois2020knowing,
  author  = {Maxime Cauchois and Suyash Gupta and John C. Duchi},
  title   = {Knowing what you know: Valid and validated confidence sets in multiclass and multilabel prediction},
  journal = {Journal of Machine Learning Research},
  year    = {2021},
  volume  = {22},
  number  = {81},
  pages   = {1-42},
  url     = {http://jmlr.org/papers/v22/20-753.html}
}

@article{hechtlinger2018cautious,
    title={Cautious deep learning},
    author={Yotam Hechtlinger and Barnabas Poczos and Larry Wasserman},
    year={2018},
    journal={arXiv:1805.09460}
}

@inproceedings{torchvision,
    title={Pytorch: An imperative style, high-performance deep learning library},
    author={Paszke, Adam and Gross, Sam and Massa, Francisco and Lerer, Adam and Bradbury, James and Chanan, Gregory and Killeen, Trevor and Lin, Zeming and Gimelshein, Natalia and Antiga, Luca and others},
    booktitle={{Advances in Neural Information Processing Systems}},
    pages={8026--8037},
    year={2019}
}

@inproceedings{romano2020classification,
     author = {Romano, Yaniv and Sesia, Matteo and Cand{\`e}s, Emmanuel},
     booktitle = {Advances in Neural Information Processing Systems},
     editor = {H. Larochelle and M. Ranzato and R. Hadsell and M. F. Balcan and H. Lin},
     pages = {3581--3591},
     publisher = {Curran Associates, Inc.},
     title = {Classification with valid and adaptive coverage},
     url = {https://proceedings.neurips.cc/paper/2020/file/244edd7e85dc81602b7615cd705545f5-Paper.pdf},
     volume = {33},
     year = {2020}
}

@InProceedings{vovk2012conditional,
	title = 	 {Conditional validity of inductive conformal predictors},
	author = 	 {Vladimir Vovk},
	booktitle = 	 {{Proceedings of the Asian Conference on Machine Learning}},
	pages = 	 {475--490},
	year = 	 {2012},
	volume = 	 {25},
	url = 	 {http://proceedings.mlr.press/v25/vovk12.html}
}

@article{barber2019jackknife,
    title={Predictive inference with the jackknife+},
    author={Barber, Rina Foygel and Cand{\`e}s, Emmanuel J and Ramdas, Aaditya and Tibshirani, Ryan J},
    volume = {49},
    journal = {The Annals of Statistics},
    number = {1},
    publisher = {Institute of Mathematical Statistics},
    pages = {486 -- 507},
    keywords = {conformal inference, cross-validation, distribution-free, jackknife, leave-one-out, stability},
    year = {2021},
    doi = {10.1214/20-AOS1965}
}

@incollection{romano2019conformalized,
	title = {Conformalized quantile regression},
	author = {Romano, Yaniv and Patterson, Evan and Cand{\`e}s, Emmanuel},
	booktitle = {Advances in Neural Information Processing Systems},
	pages = {3543--3553},
	volume = {32},
	year = {2019},
	url = {https://proceedings.neurips.cc/paper/2019/file/5103c3584b063c431bd1268e9b5e76fb-Paper.pdf}
}

@article{barber2019limits,
    author = {Barber, Rina and Candès, Emmanuel and Ramdas, Aaditya and Tibshirani, Ryan},
    year = {2020},
    month = {08},
    pages = {},
    title = {The limits of distribution-free conditional predictive inference},
    volume = {10},
    journal = {Information and Inference},
    doi = {10.1093/imaiai/iaaa017}
}

@incollection{tibshirani2019conformal,
	title = {Conformal prediction under covariate shift},
	author = {Tibshirani, Ryan J and Foygel Barber, Rina and Candes, Emmanuel and Ramdas, Aaditya},
	booktitle = {{Advances in Neural Information Processing Systems}},
	pages = {2530--2540},
	year = {2019},
	volume = {32},
	url ={https://proceedings.neurips.cc/paper/2019/file/8fb21ee7a2207526da55a679f0332de2-Paper.pdf}
}

@article{vovk2015cross,
	title={Cross-conformal predictors},
	author={Vovk, Vladimir},
	journal={{Annals of Mathematics and Artificial Intelligence}},
	volume={74},
	number={1-2},
	pages={9--28},
	year={2015},
	publisher={Springer},
	doi={10.1007/s10472-013-9368-4}
}

@inproceedings{papadopoulos2002inductive,
    author={Papadopoulos, Harris and Proedrou, Kostas and Vovk, Vladimir and Gammerman, Alex},
    title={Inductive confidence machines for regression},
    booktitle={{Machine Learning: European Conference on Machine Learning}},
    year={2002},
    pages={345--356},
    doi = {https://doi.org/10.1007/3-540-36755-1_29}
}

@inproceedings{vovk1999machine,
	title={Machine-learning applications of algorithmic randomness},
	author={Vovk, Vladimir and Gammerman, Alexander and Saunders, Craig},
	booktitle={{International Conference on Machine Learning}},
	pages={444--453},
	year={1999}
}

@book{vovk2005algorithmic,
	title={{Algorithmic Learning in a Random World}},
	author={Vovk, Vladimir and Gammerman, Alex and Shafer, Glenn},
	year={2005},
	publisher={Springer},
	address = {New York, NY, USA},
	doiii={10.1007/b106715},
}

@inproceedings{vovk2019conformal,
    title = {Conformal calibrators}, 
    author = {Vovk, Vladimir and Petej, Ivan and Toccaceli, Paolo and Gammerman, Alexander and Ahlberg, Ernst and Carlsson, Lars}, 
    booktitle = {Proceedings of the Ninth Symposium on Conformal and Probabilistic Prediction and Applications}, 
    pages = {84--99}, 
    year = {2020}, 
    editor = {Alexander Gammerman and Vladimir Vovk and Zhiyuan Luo and Evgueni Smirnov and Giovanni Cherubin}, 
    volume = {128}, 
    url = {http://proceedings.mlr.press/v128/vovk20a.html}
}

@article{lei2020conformal,
    title={Conformal inference of counterfactuals and individual treatment effects},
    author={Lihua Lei and Emmanuel J. Candès},
    year={2020},
    journal={arXiv:2006.06138}
}

@article{maurer2006concentration,
  title={Concentration inequalities for functions of independent variables},
  author={Maurer, Andreas},
  journal={Random Structures \& Algorithms},
  volume={29},
  number={2},
  pages={121--138},
  year={2006},
  publisher={Wiley Online Library},
  doi={doi.org/10.1002/rsa.20105}
}

@article{waudby2020variance,
  title={Variance-adaptive confidence sequences by betting},
  author={Waudby-Smith, Ian and Ramdas, Aaditya},
  journal={arXiv preprinttt arXiv:2010.09686},
  year={2020}
}

@book{bernstein1924modification,
    author = {Bernstein, Sergei},
    title = {The Theory of Probabilities},
    year = {1946},
    publisher = {Gastehizdat Publishing House},
    address = {Moscow}
}

@article{brown2001interval,
  title={Interval estimation for a binomial proportion},
  author={Brown, Lawrence D and Cai, Tony and DasGupta, Anirban},
  journal={Statistical {S}cience},
  pages={101--117},
  year={2001},
  publisher={JSTOR},
  doi = {10.1214/ss/1009213286},
}

\clearpage
\appendix

\section{Proofs}
\label{app:proofs}

\begin{theorem}[Validity of UCB calibration, abstract form]
    \label{thm:abstract_control_general}
    Let $R : \Lambda \to \mathbb{R}$ be a continuous monotone nonincreasing function such that $R(\lambda) \le \alpha$ for some $\lambda \in \Lambda$.
    Suppose $\Rhat^+(\lambda)$ is a random variable for each $\lambda \in \Lambda$ such that \eqref{eq:general_bound} holds pointwise.
    Then for $\hat{\lambda}$ chosen as in \eqref{eq:lambda_hat_def},
    \begin{equation}
    P\left(R(\lambda) \le \alpha\right) \ge 1 - \delta.
    \end{equation}
\end{theorem}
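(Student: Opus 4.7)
The plan is to reduce the high-probability risk guarantee to a single pointwise failure event of the upper confidence bound at a cleverly chosen threshold $\lambda^\star$. First, I would define
\[
\lambda^\star \triangleq \inf\{\lambda \in \Lambda : R(\lambda) \le \alpha\}.
\]
The set on the right is nonempty by hypothesis and closed, since $R$ is continuous and $\Lambda \subset \mathbb{R} \cup \{\pm\infty\}$ is closed, so its infimum $\lambda^\star$ is attained in $\Lambda$ with $R(\lambda^\star) \le \alpha$. By monotonicity of $R$, the event $\{\hat\lambda \ge \lambda^\star\}$ already implies $R(\hat\lambda) \le R(\lambda^\star) \le \alpha$, so it is enough to prove $P(\hat\lambda < \lambda^\star) \le \delta$.

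Next I would separate a trivial case $\lambda^\star = \inf \Lambda$, in which $\hat\lambda \ge \inf \Lambda = \lambda^\star$ deterministically and there is nothing to show. In the remaining case, for every $\lambda \in \Lambda$ with $\lambda < \lambda^\star$ the definition of $\lambda^\star$ gives $R(\lambda) > \alpha$; combined with $R(\lambda^\star) \le \alpha$ and continuity of $R$, this forces the equality $R(\lambda^\star) = \alpha$. I would then argue: if $\hat\lambda < \lambda^\star$, then by the infimum definition of $\hat\lambda$ in \eqref{eq:lambda_hat_def} there exists some $\lambda_0 \in \Lambda$ with $\hat\lambda \le \lambda_0 < \lambda^\star$ such that $\widehat R^+(\lambda') < \alpha$ for every $\lambda' \ge \lambda_0$. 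Specializing to $\lambda' = \lambda^\star$ yields $\widehat R^+(\lambda^\star) < \alpha = R(\lambda^\star)$, i.e., the pointwise UCB fails at $\lambda^\star$. Consequently
\[
P(\hat\lambda < \lambda^\star) \;\le\; P\bigl(\widehat R^+(\lambda^\star) < R(\lambda^\star)\bigr) \;\le\; \delta,
\]
where the last inequality is the pointwise guarantee \eqref{eq:general_bound} applied at the single value $\lambda^\star$. Taking complements and combining with the monotonicity reduction gives $P(R(\hat\lambda) \le \alpha) \ge 1 - \delta$.

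The main obstacle is ensuring the exact equality $R(\lambda^\star) = \alpha$, rather than settling for $R(\lambda^\star) \le \alpha$ with possible slack. The pointwise UCB hypothesis at $\lambda^\star$ controls only the event $\{\widehat R^+(\lambda^\star) < R(\lambda^\star)\}$, not the event $\{\widehat R^+(\lambda^\star) < \alpha\}$, so if $R(\lambda^\star)$ were strictly less than $\alpha$ the bad event $\{\hat\lambda < \lambda^\star\}$ could not be absorbed into a single pointwise miscoverage event, and one would instead need a uniform-in-$\lambda$ bound. Continuity of $R$ is precisely what closes this gap; this explains the remark in the paper that continuity can be dropped at the cost of minor modifications (essentially, replacing $\lambda^\star$ by a limit of $\lambda$'s approaching it from above and invoking right-continuity of the UCB hypothesis there).
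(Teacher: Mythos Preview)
Your proposal is correct and follows essentially the same route as the paper's proof: define $\lambda^\star=\inf\{\lambda:R(\lambda)\le\alpha\}$, argue that $\hat\lambda<\lambda^\star$ forces $\widehat R^+(\lambda^\star)<\alpha=R(\lambda^\star)$ (using continuity for the equality), and then invoke the pointwise UCB guarantee at $\lambda^\star$. You simply flesh out steps the paper leaves implicit---the boundary case $\lambda^\star=\inf\Lambda$, the existence of an intermediate $\lambda_0$ witnessing the infimum, and the explanation of why continuity is exactly what pins $R(\lambda^\star)$ to $\alpha$.
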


\begin{proof}[Proof of Theorem~\ref{thm:abstract_control_general}]
Consider the smallest $\lambda$ that controls the risk:
\begin{equation*}
    \lambda^* \triangleq \inf\{\lambda \in \Lambda  : R(\lambda) \le \alpha\}.
\end{equation*}
Suppose $R(\lhat) > \alpha$. By the definition of $\lambda^*$ and the monotonicity and continuity of $R(\cdot)$, this implies
$\lhat < \lambda^*$. By the definition of $\lhat$, this further implies that $\Rhat^+(\lambda^*) < \alpha$. But since $R(\lambda^*) = \alpha$ (by continuity) and by the coverage property in \eqref{eq:general_bound}, this happens with probability at most $\delta$.
\end{proof}

\begin{proof}[Proof of Theorem~\ref{thm:abstract_control}]
This follows from Theorem~\ref{thm:abstract_control_general}.
\end{proof}

\begin{proof}[Proof of Proposition \ref{prop:generic}]
Let $G$ denote the CDF of $\Rhat(\lambda)$. If $R(\lambda) > \Rhat^+(\lambda)$, then by definition, $g(\Rhat(\lambda); R(\lambda)) < \delta$. As a result, 
$$
P(R(\lambda) > \Rhat^+(\lambda))\le P(g(\Rhat(\lambda); R(\lambda)) < \delta) \le P(G(\Rhat(\lambda)) < \delta).
$$
Let $G^{-1}(\delta) = \sup\{x: G(x)\le \delta\}$. Then  $$
P(G(\Rhat(\lambda)) < \delta) \le P(\Rhat(\lambda) < G^{-1}(\delta)) \le \delta.
$$
This implies that $P(R(\lambda) > \Rhat^+(\lambda))\le \delta$ and completes the proof.
\end{proof}

\begin{proof}[Proof of Proposition \ref{prop:WSR}]
This proof is essentially a restatement of the proof of Theorem 4 in \cite{waudby2020variance}. We present it here for completeness. Let $\mathcal{K}_i = \mathcal{K}_i(R(\lambda); \lambda)$, $\mathcal{F}_0$ be the trivial sigma-field, and $\mathcal{F}_i$ be the sigma-field generated by $(L_1(\lambda), \ldots, L_i(\lambda))$. Then $\mathcal{F}_0\subset \mathcal{F}_1 \subset \ldots \subset \mathcal{F}_n$ is a filtration. By definition, $\nu_i(\lambda)\in \mathcal{F}_{i-1}$ is a predictable sequence and $\mathcal{K}_i\in \mathcal{F}_i$. Since $\E[L_i(\lambda)] = R(\lambda)$, 
$$
\E[\mathcal{K}_i\mid \mathcal{F}_{i-1}] = \mathcal{K}_{i-1} \E[1 - \nu_i(\lambda)(L_i(\lambda) - R(\lambda))\mid \mathcal{F}_{i-1}] = \mathcal{K}_{i-1}.
$$
In addition, since $\nu_i\in [0, 1]$ and $(L_i(\lambda) - R(\lambda))\in [-1, 1]$, each component $1 - \nu_i(\lambda)(L_i(\lambda) - R(\lambda))\ge 0$. Thus, $\{\mathcal{K}_i: i = 1,\ldots, n\}$ is a non-negative martingale with respect to the filtration $\{\mathcal{F}_i: i = 1, \ldots, n\}$. By Ville's inequality,
$$
P\left(\max_{i = 1, \ldots, n}\mathcal{K}_i \ge \frac{1}{\delta}\right)\le \delta.
$$
On the other hand, since $\nu_i\ge 0$, $\mathcal{K}_i(R; \lambda)$ is increasing in $R$ almost surely for every $i$. By definition of $\Rhat_{\mathrm{WSR}}^+(\lambda)$, if $\Rhat_{\mathrm{WSR}}^+(\lambda) < R(\lambda)$, then $P(\max_{i = 1,\ldots,n}\mathcal{K}_i \ge 1 / \delta)$. Therefore,
$$
P\left(\Rhat_{\mathrm{WSR}}^+(\lambda) < R(\lambda)\right) \le P\left(\max_{i}\mathcal{K}_i \ge \frac{1}{\delta}\right)\le \delta.
$$
This proves that $\Rhat_{\mathrm{WSR}}^+(\lambda)$ is a valid upper confidence bound of $R(\lambda)$.
\end{proof}

\begin{proof}[Proof of Theorem~\ref{thm:clt_coverage}]
Define $\lambda^*$ as in the proof of Theorem~\ref{thm:abstract_control_general}. 
Suppose $R(\lhat^{\textnormal{CLT}}) > \alpha$
By the definition of $\lambda^*$ and the monotonicity and continuity of $R(\cdot)$, this implies
$\lhat^{\textnormal{CLT}} < \lambda^*$. By the definition of $\lhat^{\textnormal{CLT}}$, this further implies that $\Rhat^+(\lambda^*) < \alpha$. But
\begin{equation*}
    \limsup_{n} P(\Rhat^+(\lambda^*) < \alpha) = \delta,
\end{equation*}
by the CLT, which implies the desired result.
\end{proof}


\begin{proof}[Proof of Theorem~\ref{thm:unconditional-optimality}]
Suppose $R(\T') \le R(\T_\lambda)$. Write $\rho_x(y)$ for $\rho_x(y; \emptyset)$. Then,
\begin{equation*}
    \int_{\X} \int_{\T'(x)} \rho_x(y) dy \ dP(x) \ge  \int_{\X} \int_{\T_\lambda(x)} \rho_x(y) dy \ dP(x). 
\end{equation*}
This further implies 
\begin{equation*}
    \int_{\X} \int_{\T'(x) \setminus \T_\lambda(x)} \rho_x(y) dy \ dP(x) \ge  \int_{\X} \int_{\T_\lambda(x) \setminus \T'(x)} \rho_x(y) dy \ dP(x). 
\end{equation*}
For $y \in (\T'(x) \setminus \T_\lambda(x))$, we have $\rho_x(y) < \zeta$, whereas for $y \in (\T_\lambda(x) \setminus \T'(x))$ we have $\rho_x(y) \ge \zeta$. Therefore,
\begin{equation*}
    \int_{\X} \int_{\T'(x) \setminus \T_\lambda(x)} 1 dy \ dP(x) \ge  \int_{\X} \int_{\T_\lambda(x) \setminus \T'(x)} 1 dy \ dP(x), 
\end{equation*}
which implies the desired result.
\end{proof}

\begin{proof}[Proof of Theorem~\ref{thm:unconditional-optimality-ext}]
The proof is similar to that of Theorem~\ref{thm:unconditional-optimality}. If $R(\T') \le R(\T_\lambda)$, then
\begin{align*}
    &\E\left[\E\left[L(Y; \T'(X))\mid X\right]\right] \le \E\left[\E\left[L(Y; \T_\lambda(X))\mid X\right]\right]\\
    \Longrightarrow & \E\left[\E\left[\int_{z\in \T^{'c}(X)}\ell(Y; z)d\mu(z)\mid X\right]\right]\le \E\left[\E\left[\int_{z\in \T_\lambda^{c}(X)}\ell(Y; z)d\mu(z)\mid X\right]\right]\\
    \Longrightarrow & \E\left[\E\left[\int_{z\in \T'(X)}\ell(Y; z)d\mu(z) \mid X\right]\right]\ge \E\left[\E\left[\int_{z\in \T_\lambda(X)}\ell(Y; z)d\mu(z)\mid X\right]\right]\\
    \Longrightarrow & \E\left[\int_{z\in \T'(X)}\E\left[\ell(Y; z) \mid X\right]d\mu(z)\right]\ge \E\left[\int_{z\in \T_\lambda(X)}\E\left[\ell(Y; z)\mid X\right]d\mu(z)\right]\\
    \Longrightarrow & \E\left[\int_{z\in \T'(X)\setminus \T_\lambda(X)}\E\left[\ell(Y; z) \mid X\right]d\mu(z)\right]\ge \E\left[\int_{z\in \T_\lambda(X) \setminus \T'(X)}\E\left[\ell(Y; z)\mid X\right]d\mu(z)\right]\\
    \Longrightarrow & \E\left[\int_{z\in \T'(X)\setminus \T_\lambda(X)}-\lambda d\mu(z)\right]\ge \E\left[\int_{z\in \T_\lambda(X) \setminus \T'(X)}-\lambda d\mu(z)\right]\\
    \Longrightarrow & \E[|\T'(X)\setminus \T_\lambda(X)|]\ge \E[|\T_\lambda(X)\setminus \T'(X)|]\\
    \Longrightarrow & \E[|\T'(X)|]\ge \E[|\T_\lambda(X)|].
\end{align*}
\end{proof}

\begin{proof}[Proof of Proposition \ref{prop:ustat2_ucb}]
Let $Z_i = (X_i, Y_i)$ and $\phi(Z_i, Z_j) = L(Y_i, Y_j, \T_\lambda(X_i, X_j))$. First, we apply a representation of U-statistics due to \cite{hoeffding1963} that shows many tail inequalities for sums of i.i.d.\ random variables hold for U-statistics of order two with an effective sample size $\lfloor n / 2\rfloor$. Specificially, let $m = \lfloor n / 2\rfloor$ and $\pi: \{1, \ldots, n\}\mapsto \{1, \ldots, n\}$ be a uniform random permutation. For each $\pi$, define 
 \[\Rhat_{\pi}(\lambda) = \frac{1}{m}\sum_{j=1}^{m}\phi\left( Z_{\pi(2j-1)}, Z_{\pi(2j)}\right).\]
 Note that the summands in $\Rhat_{\pi}(\lambda)$ are independent given $\pi$. Then it is not hard to see that
 \[\Rhat(\lambda) = \E_{\pi}[\Rhat_{\pi}(\lambda)],\]
where $\E_{\pi}$ denotes the expectation with respect to $\pi$ while conditioning on $Z_1, \ldots, Z_n$. By Jensen's inequality, for any convex function $\psi$,
\[\E[\psi(\Rhat(\lambda))] = \E[\phi(\E_{\pi}[\Rhat_{\pi}(\lambda)])]\le \E[\E_{\pi}\psi(\Rhat_{\pi}(\lambda))] = \E_{\pi}[\E \psi(\Rhat_{\pi}(\lambda))].\]
Since $\Rhat_{\pi}(\lambda)$ has identical distributions for all $\pi$,
\begin{equation}
  \label{eq:hoeffding_representation}
  \E[\psi(\Rhat(\lambda))] = \E[\psi(\Rhat_{\mathrm{id}}(\lambda))]
\end{equation}
where $\mathrm{id}$ is the permutation that maps each element to itself. 

For sums of i.i.d.\ random variables, the Hoeffding's inequality (Proposition \ref{prop:hoeffding}) is derived by setting $\psi(z) = \exp\{\nu z\}$ \citep{hoeffding1963}, and the Bentkus inequality (Proposition \ref{prop:bentkus}) is derived by setting $\psi(z) = (z - \nu)_{+}$. Therefore, the same tail probability bounds hold for $\Rhat_{\mathrm{id}}(\lambda)$ and thus $\Rhat(\lambda)$ by \eqref{eq:hoeffding_representation}. This proves the first two bounds. 

To prove the third bound, we apply the technique of \cite{maurer2006concentration} on self-bounding functions of iid random variables. Write $\Rhat(\lambda)$ as $U(Z_1, \ldots, Z_n)$ and let 
\[U_{i} = \inf_{z_i}U(Z_1, \ldots, Z_{i-1}, z_{i}, Z_{i+1}, \ldots, Z_{n}).\]
Note that $U_{i}$ is independent of $Z_{i}$. Since $\phi(\cdot)\ge 0$, we have
\[0\le U - U_{i} \le \frac{2}{n(n-1)}\sum_{i\not=j}\phi(Z_{i}, Z_{j}).\]
Since $\phi(Z_i, Z_j)\le 1$,
\[\frac{n}{2}(U - U_{i})\le 1,\]
and
\begin{align*}
  \sum_{i=1}^{n}(U - U_{i}) \le \frac{2}{n(n-1)}\sum_{i=1}^{n}\sum_{i\not=j}\phi(Z_{i}, Z_{j}) = 2U.
\end{align*}
If we let $W = (n / 2)U$ and $W_i = (n / 2)U_i$, then
\begin{equation}
  \label{eq:W_self_bound}
  W - W_{i}\le 1, \quad \sum_{i=1}^{n}(W - W_{i})^2\le 2W.
\end{equation}
In the proof of Theorem 13, \cite{maurer2006concentration} shows that for any $\nu > 0$, 
  \[\log \E[\exp\{\nu(\E[W] - W)\}]\le \frac{2\nu G(\nu)}{1 + 2 G(\nu)} \E [W].\]
  By Markov's inequality, for any $t \in (0, \E[U])$,
  \begin{align*}
  & P\left( U \le t\right)  = P\left( \E[W] - W \ge \E[W] - \frac{n}{2}t\right) \\
  & \le \exp\left\{\min_{\nu > 0}\nu\left(-\E[W] + \frac{n}{2}t + \frac{2 G(\nu)}{1 + 2 G(\nu)} \E [W]\right)\right\}\\
  & = \exp\left\{\min_{\nu > 0}\frac{n\nu}{2} \left( t - \frac{1}{1 + 2 G(\nu)} \E [U]\right)\right\}.
  \end{align*}
The proof is completed by replacing $U$ by $\Rhat(\lambda)$ and $\E[U]$ by $R(\lambda)$.
\end{proof}

\begin{prop}[Impossibility of valid UCB for unbounded losses in finite samples]\label{prop:impossibility}
Let $\mathcal{F}$ be the class of all distributions supported on $[0, \infty)$ with finite mean, and $\mu(F)$ be the mean of the distribution $F$. Let $\hat{\mu}^+$ be any function of $Z_1, \ldots, Z_n\stackrel{\mathrm{i.i.d.}}{\sim} F$ such that $P(\hat{\mu}^{+} \ge \mu(F))\ge 1 - \delta$ for any $n$ and $F\in \mathcal{F}$. Then $P(\hat{\mu}^{+} = \infty)\ge 1 - \delta$.
\end{prop}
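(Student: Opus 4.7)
My proof plan is to argue by contradiction using a two-distribution perturbation, in the classical style of Bahadur--Savage. Suppose there exists some $F_0 \in \mathcal{F}$ and some sample size $n$ for which $P_{F_0}(\hat{\mu}^+ < \infty) > \delta$. Then, since $\{\hat{\mu}^+ < \infty\} = \bigcup_{k} \{\hat{\mu}^+ \le k\}$ is a countable union, continuity of probability gives a finite constant $M_0$ with $P_{F_0}(\hat{\mu}^+ \le M_0) > \delta$. The goal is then to produce a distribution $F_1 \in \mathcal{F}$ close to $F_0$ in total variation but with mean so large that this same event forces $\hat{\mu}^+$ to undercover badly under $F_1$, contradicting the assumed validity.

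To do this, I would pick a large number $M$ and a small $\epsilon \in (0,1)$ and set $F_1 = (1-\epsilon) F_0 + \epsilon \delta_M$, where $\delta_M$ is the point mass at $M$. Then $F_1 \in \mathcal{F}$ (it has finite mean $(1-\epsilon)\mu(F_0) + \epsilon M$), and I can choose $M$ large enough that $\mu(F_1) > M_0$. The key coupling observation is that one can draw $Z_1,\dots,Z_n \sim F_0^{\otimes n}$ and $Z_1',\dots,Z_n' \sim F_1^{\otimes n}$ on the same probability space so that they coincide on the event that none of the $n$ draws hits the point mass, which has probability $(1-\epsilon)^n \ge 1 - n\epsilon$. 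Consequently, for any measurable event $A$,
\begin{equation*}
\bigl| P_{F_0}(A) - P_{F_1}(A) \bigr| \le 1 - (1-\epsilon)^n \le n\epsilon.
\end{equation*}

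Applying this to the event $A = \{\hat{\mu}^+ \le M_0\}$, I get $P_{F_1}(\hat{\mu}^+ \le M_0) \ge P_{F_0}(\hat{\mu}^+ \le M_0) - n\epsilon$. Now fix $\epsilon$ small enough that the right-hand side still exceeds $\delta$, and then choose $M$ large enough that $\mu(F_1) > M_0$. Since $\{\hat{\mu}^+ \le M_0\} \subset \{\hat{\mu}^+ < \mu(F_1)\}$ under $F_1$, this yields $P_{F_1}(\hat{\mu}^+ < \mu(F_1)) > \delta$, contradicting the assumed $(1-\delta)$ coverage of $\hat{\mu}^+$ over all of $\mathcal{F}$. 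This contradiction proves $P_F(\hat{\mu}^+ = \infty) \ge 1 - \delta$ for every $F \in \mathcal{F}$ and every $n$.

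The main obstacle, which is really a bookkeeping issue rather than a mathematical one, is ordering the choice of constants correctly: one must first fix $F_0$ and extract $M_0$ from the assumed failure of coverage, then choose $\epsilon$ to be small relative to the gap $P_{F_0}(\hat{\mu}^+ \le M_0) - \delta$ (and $n$), and only then choose $M$ large enough to push $\mu(F_1)$ past $M_0$. The rest is routine, and the argument is tight in the sense that it uses nothing beyond finiteness of the mean.
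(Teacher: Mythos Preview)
Your argument is correct. The contamination construction $F_1 = (1-\epsilon)F_0 + \epsilon\delta_M$, the coupling bound $|P_{F_0}(A)-P_{F_1}(A)|\le 1-(1-\epsilon)^n$, and the order of choosing $M_0$, then $\epsilon$, then $M$ are all sound, and together they yield the contradiction cleanly.

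The paper takes a different and much shorter route: it simply observes that $[0,\hat{\mu}^+]$ is a $(1-\delta)$ confidence interval for $\mu(F)$ over the class $\mathcal{F}$, checks that $\mathcal{F}$ satisfies the hypotheses of Bahadur and Savage (1956), and invokes their Corollary~2, which states that any such interval must cover \emph{every} value in the range $\{\mu(F):F\in\mathcal{F}\}=[0,\infty)$ with probability at least $1-\delta$ under every $F$. Sending the target value to infinity gives $P_F(\hat{\mu}^+=\infty)\ge 1-\delta$ immediately. Your proof is essentially a self-contained unpacking of the Bahadur--Savage contamination argument specialized to this one-sided, nonnegative setting; it buys the reader a complete argument without having to consult the 1956 paper, at the cost of a few more lines. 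The paper's version buys brevity but is a black-box citation.
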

\begin{proof}[Proof of Proposition \ref{prop:impossibility}]
It is clear that $\mathcal{F}$ satisfies the conditions (i), (ii), and (iii) in \cite{bahadur1956}. For any such $\hat{\mu}^{+}$, $[0, \hat{\mu}^{+}]$ is a $(1 - \delta)$ confidence interval of $\mu(F)$. By their Corollary 2, we know that for any $\mu\in \{\mu(F): F\in\mathcal{F}\}$ and $F\in \mathcal{F}$
\[P_{F}(\mu\in [0, \hat{\mu}^{+}])\ge 1- \delta\Longleftrightarrow P_{F}(\mu \le \hat{\mu}^{+})\ge 1 - \delta.\]
The proof is completed by letting $\mu\rightarrow \infty$.
\end{proof}

\begin{proof}[Proof of Theorem~\ref{thm:ranking}]
This follows from Theorem~\ref{thm:abstract_control_general}
\end{proof}

\begin{proof}[Proof of Theorem~\ref{thm:metric_learning}]
This follows from Theorem~\ref{thm:abstract_control_general}
\end{proof}

\section{An Exact Bound for Binary Loss}
\label{app:binary_loss}

When the loss takes values in $\{0,1\}$, for a fixed $\lambda$ the loss at each point is a Bernoulli random variable, and the risk is simply the mean of this random variable. In this case, we can give a tight upper confidence bound by simple extracting the relevant quantile of a binomial distribution; see \cite{brown2001interval} for other exact or approximate upper confidence bounds. Explicitly, we have
\begin{equation}
P\left( \widehat{R}(\lambda) \le t\right) = P\left({\rm Binom}(n,R(\lambda)) \le \lceil nt \rceil\right),
\end{equation}
which is the same expression as in the Bentkus bound, improved by a factor of $e$. From this, we obtain a lower tail probability bound for $\Rhat(\lambda)$:
\begin{align}
g^{\mathrm{bin}}(t; R(\lambda)) \triangleq P\left({\rm Binom}(n,R(\lambda)) \le \lceil nt \rceil\right).
\end{align}
By Proposition \ref{prop:generic}, we obtain a $(1 - \delta)$ upper confidence bound for $R(\lambda)$ as 
\begin{equation*}
\label{eq:Rhat_binary}
\Rhat_{\mathrm{bin}}^+(\lambda) = \sup\Big\{R: g^{\mathrm{bin}}(\Rhat(\lambda); R)\ge \delta\Big\}.
\end{equation*}
We obtain $\lhat^{{\rm bin}}$ by inverting the above bound computationally, yielding the following corollary:
\begin{theorem}[RCPS for binary variables]\label{thm:binary}
    In the setting of Theorem~\ref{thm:abstract_control}, assume additionally that the loss takes values in \{0,1\}. Then, $\T_{\lhat^{{\rm bin}}}$ is a $(\alpha,\delta)$-RCPS.
\end{theorem}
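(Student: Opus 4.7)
The plan is to assemble three ingredients that are already in place: an exact lower tail for $\widehat{R}(\lambda)$ when the loss is binary, the tail-to-UCB inversion of Proposition~\ref{prop:generic}, and the abstract validity result of Theorem~\ref{thm:abstract_control} (or its general form, Theorem~\ref{thm:abstract_control_general}). No new concentration argument is needed; the work is in checking that each hypothesis of these results is met.

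First, I would observe that when $L(Y_i, \T_\lambda(X_i))\in\{0,1\}$, each summand in $\widehat{R}(\lambda)$ is Bernoulli with mean $R(\lambda)$, and the summands are i.i.d.\ across $i$ (conditional on $\lambda$, which is deterministic here since the family $\{\T_\lambda\}$ is fixed before observing the calibration data). Hence $n\widehat{R}(\lambda)\sim\mathrm{Binom}(n, R(\lambda))$ exactly, so
\begin{equation*}
P\bigl(\widehat{R}(\lambda)\le t\bigr) \;=\; P\bigl(\mathrm{Binom}(n, R(\lambda))\le \lceil nt\rceil\bigr) \;=\; g^{\mathrm{bin}}(t;R(\lambda)).
\end{equation*}
This is an equality, not merely an inequality, which is what yields the ``exactness'' claim relative to the Bentkus bound.

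Second, I would verify the monotonicity hypothesis needed for Proposition~\ref{prop:generic}: for each fixed $R$, the map $t\mapsto g^{\mathrm{bin}}(t;R)$ is the CDF of a binomial evaluated at $\lceil nt\rceil$, hence nondecreasing in $t$. Proposition~\ref{prop:generic} then applies verbatim with $g=g^{\mathrm{bin}}$, giving that
\begin{equation*}
\widehat{R}^+_{\mathrm{bin}}(\lambda) \;=\; \sup\bigl\{R: g^{\mathrm{bin}}(\widehat{R}(\lambda);R)\ge \delta\bigr\}
\end{equation*}
satisfies the pointwise coverage property $P\bigl(R(\lambda)\le \widehat{R}^+_{\mathrm{bin}}(\lambda)\bigr)\ge 1-\delta$ required in~\eqref{eq:general_bound}.

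Third, I would invoke Theorem~\ref{thm:abstract_control} (equivalently Theorem~\ref{thm:abstract_control_general}) with this particular UCB. The nesting property of $\{\T_\lambda\}$ and the monotonicity of $L$ are assumed in the setting of Theorem~\ref{thm:abstract_control}; together they yield monotone $R(\lambda)$ and the continuity assumption is inherited as well. The conclusion $P(R(\T_{\hat\lambda^{\mathrm{bin}}})\le \alpha)\ge 1-\delta$ follows immediately. The only mildly delicate point — which is not really an obstacle but worth flagging — is that the supremum in the definition of $\widehat{R}^+_{\mathrm{bin}}$ must actually be attained (or at least that the set is nonempty); this is automatic because $g^{\mathrm{bin}}(t;R)$ is continuous in $R$ and equals $1$ at $R=t$, so the level set $\{R:g^{\mathrm{bin}}(\widehat{R}(\lambda);R)\ge\delta\}$ is a nonempty closed subset of $[0,1]$. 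With that checked, the three pieces compose cleanly to give the theorem.
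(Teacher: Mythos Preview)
Your proposal is correct and matches the paper's approach exactly: the paper treats this theorem as an immediate corollary, obtained by feeding the exact binomial tail $g^{\mathrm{bin}}$ through Proposition~\ref{prop:generic} to produce $\widehat{R}^+_{\mathrm{bin}}$, and then invoking Theorem~\ref{thm:abstract_control}. One tiny inaccuracy worth flagging (though it does not affect the argument): your claim that $g^{\mathrm{bin}}(t;R)=1$ at $R=t$ is not right---$P(\mathrm{Binom}(n,t)\le\lceil nt\rceil)$ is generally strictly less than $1$; nonemptiness of the level set follows instead from $g^{\mathrm{bin}}(t;0)=1$, and in any case Proposition~\ref{prop:generic} does not require the set to be nonempty for coverage to hold.
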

The binary loss case case results in a classical tolerance region, as discussed previously in \cite{vovk2012conditional} and \cite{Park2020PAC}.

\section{Conformal Calibration}
In the special case where the the loss function $L$ takes values only in $\{0,1\}$, 
it is also possible to select $\hat{\lambda}$ to control the quantity $\E[R(T_{\hat{\lambda}})]$ below a desired level $\alpha$. When $\Y' = 2^\Y$ and  $L(Y_i,\T_\lambda(X_i)) = \ind{\Y_i \notin \T_\lambda(X_i)}$,  this is the well-known case of split conformal prediction; see \cite{gupta2020nested}. To the best of our knowledge, the general case where $\Y' \ne 2^\Y$ has not been explicitly dealt with, so we record this mild generalization here.

For $i = 1,\dots,n$, we define the following score:
\begin{equation*}
s_i := \min\{\lambda \in \Lambda : L(Y_i, \T_\lambda(X_i)) = 0\}, 
\end{equation*}
where we assume that the family of sets $\T_\lambda$ is such that the minimal element exists with probability one. (This is always true in practice, where $\Lambda$ is finite.) For a fixed risk level $\alpha \in (0,1)$, we then choose the threshold as follows: 
\begin{equation*}
\hat{\lambda} = \frac{n+1}{n}(1-\alpha)\text{ empirical quantile of } \{s_i : i = 1, \dots, n\}.
\end{equation*}
We then have the following risk-control guarantee:
\begin{prop}[Validity of conformal calibration]
\label{prop:coverage}
In the setting above,
\begin{equation*}
\E[R (\T_{\hat{\lambda}})] \le \alpha.
\end{equation*}
\end{prop}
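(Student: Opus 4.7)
The plan is to follow the classical split conformal argument, adapted to the fact that here $\hat\lambda$ is a threshold on a scalar score rather than on a conformity score for coverage. I will use an independent test point $(X_{n+1}, Y_{n+1})$ drawn from the same distribution and exploit the exchangeability of the scores $s_1, \dots, s_{n+1}$.

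First I would reduce the risk to a quantile statement. Define $s_{n+1} = \min\{\lambda \in \Lambda : L(Y_{n+1}, \T_\lambda(X_{n+1})) = 0\}$, where the minimum exists under the same assumption used to define the other $s_i$. Because $\lambda \mapsto \T_\lambda$ is nested (larger $\lambda$ gives larger sets) and $L$ is monotone in its set argument, the function $\lambda \mapsto L(Y_{n+1}, \T_\lambda(X_{n+1}))$ is nonincreasing; combined with the binary hypothesis $L \in \{0,1\}$, this function equals $\mathbb{1}\{\lambda < s_{n+1}\}$. Conditioning on the calibration data and using independence of the test point,
\begin{equation*}
\E\bigl[R(\T_{\hat\lambda})\bigr] = \E\bigl[L(Y_{n+1}, \T_{\hat\lambda}(X_{n+1}))\bigr] = P(s_{n+1} > \hat\lambda).
\end{equation*}

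Next I would bound $P(s_{n+1} > \hat\lambda)$ using exchangeability. Since $(X_1, Y_1), \dots, (X_{n+1}, Y_{n+1})$ are i.i.d., the scores $s_1, \dots, s_{n+1}$ are exchangeable. Let $k = \lceil (n+1)(1-\alpha) \rceil$, so by construction $\hat\lambda$ is the $k$-th order statistic of $s_1, \dots, s_n$. The event $\{s_{n+1} > \hat\lambda\}$ means $s_{n+1}$ strictly exceeds at least $k$ of the first $n$ scores, equivalently that the rank of $s_{n+1}$ among all $n+1$ scores is at least $k+1$. By exchangeability this rank is uniform on $\{1, \dots, n+1\}$ (ties can be broken arbitrarily, e.g.\ by an independent random perturbation, without changing the distribution), so
\begin{equation*}
P(s_{n+1} > \hat\lambda) \le \frac{n+1-k}{n+1} = 1 - \frac{\lceil (n+1)(1-\alpha)\rceil}{n+1} \le 1 - (1-\alpha) = \alpha.
\end{equation*}
Chaining this with the identity above yields $\E[R(\T_{\hat\lambda})] \le \alpha$.

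The main obstacle, such as it is, lies in being careful about two small points rather than any deep difficulty: (i) verifying that binary loss plus monotonicity genuinely collapses $L(Y_{n+1}, \T_{\hat\lambda}(X_{n+1}))$ to the threshold indicator $\mathbb{1}\{s_{n+1} > \hat\lambda\}$, which relies on the existence assumption for the minimum in the definition of $s_i$; and (ii) handling ties among the $s_i$ so that the uniform-rank claim is rigorous. Both are standard and can be resolved by an arbitrarily small random tie-breaker, or by observing that any resolution of ties can only decrease $\hat\lambda$ compared to the worst case and hence weakens the bound favorably. Everything else is the rank-uniformity computation above.
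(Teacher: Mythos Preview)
Your proposal is correct and is precisely the standard split-conformal exchangeability argument that the paper invokes; the paper itself does not spell out a proof but simply states that the result ``follows from the usual conformal prediction exchangeability proof,'' citing \cite{romano2019conformalized}. Your write-up is a faithful and careful instantiation of that argument in the present binary-loss setting, including the reduction $L(Y_{n+1},\T_{\hat\lambda}(X_{n+1}))=\ind{s_{n+1}>\hat\lambda}$ and the rank-uniformity computation with $k=\lceil (n+1)(1-\alpha)\rceil$.
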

This result follows from the usual conformal prediction exchangeability proof; see, e.g., \cite{romano2019conformalized}.

\section{Further Comparisons of Upper Confidence Bounds}
\label{app:additional_plots}

We present additional plots comparing the upper confidence bounds with $\delta = 0.01$ and $\delta = 0.001$. The counterparts of Figure \ref{fig:UCB_bounded_delta0.1} for bounded cases are presented in Figure~\ref{fig:UCB_bounded_delta0.01} and \ref{fig:UCB_bounded_delta0.001}, and the counterparts of Figure \ref{fig:UCB_unbounded_delta0.1} for unbounded cases are presented in Figure \ref{fig:UCB_unbounded_delta0.01} and \ref{fig:UCB_unbounded_delta0.001}.

To further compare the HB bound and WSR bound for the binary loss case, in Figure \ref{fig:UCB_bounded_bestprob} we present the fraction of samples on which the HB bound or the WSR bound is the winner among the four bounds, excluding the CLT bound due to the undercoverage. The HB bound is more likely to be tighter than the WSR bound, especially when the mean $\mu$ or the level $\delta$ is small. Moreover, the symmetry between two curves in each panel is due to the fact that the simple Hoeffding bound and empirical Bernstein bound never win. These results show that the WSR better is not uniformly better than the HB bound, although it is still the best all-around choice for bounded losses.

\begin{figure}[H]
\begin{subfigure}{0.45\textwidth}
\includegraphics[width = 0.98\textwidth]{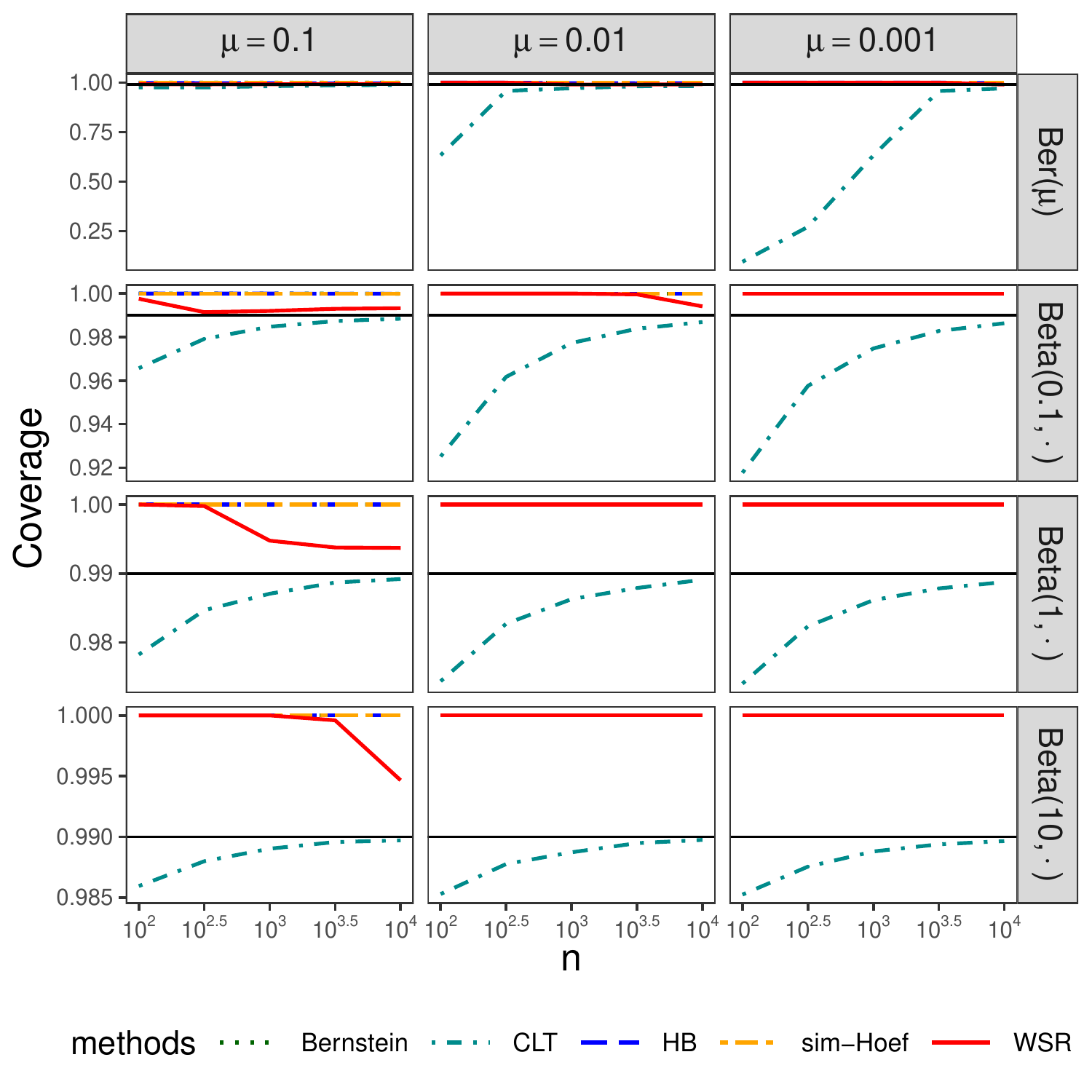}
\caption{Coverage $P(\Rhat(\lambda) \ge R(\lambda))$}\label{subfig:UCB_bounded_coverage_delta0.01}
\end{subfigure}
\begin{subfigure}{0.45\textwidth}
\includegraphics[width = 0.98\textwidth]{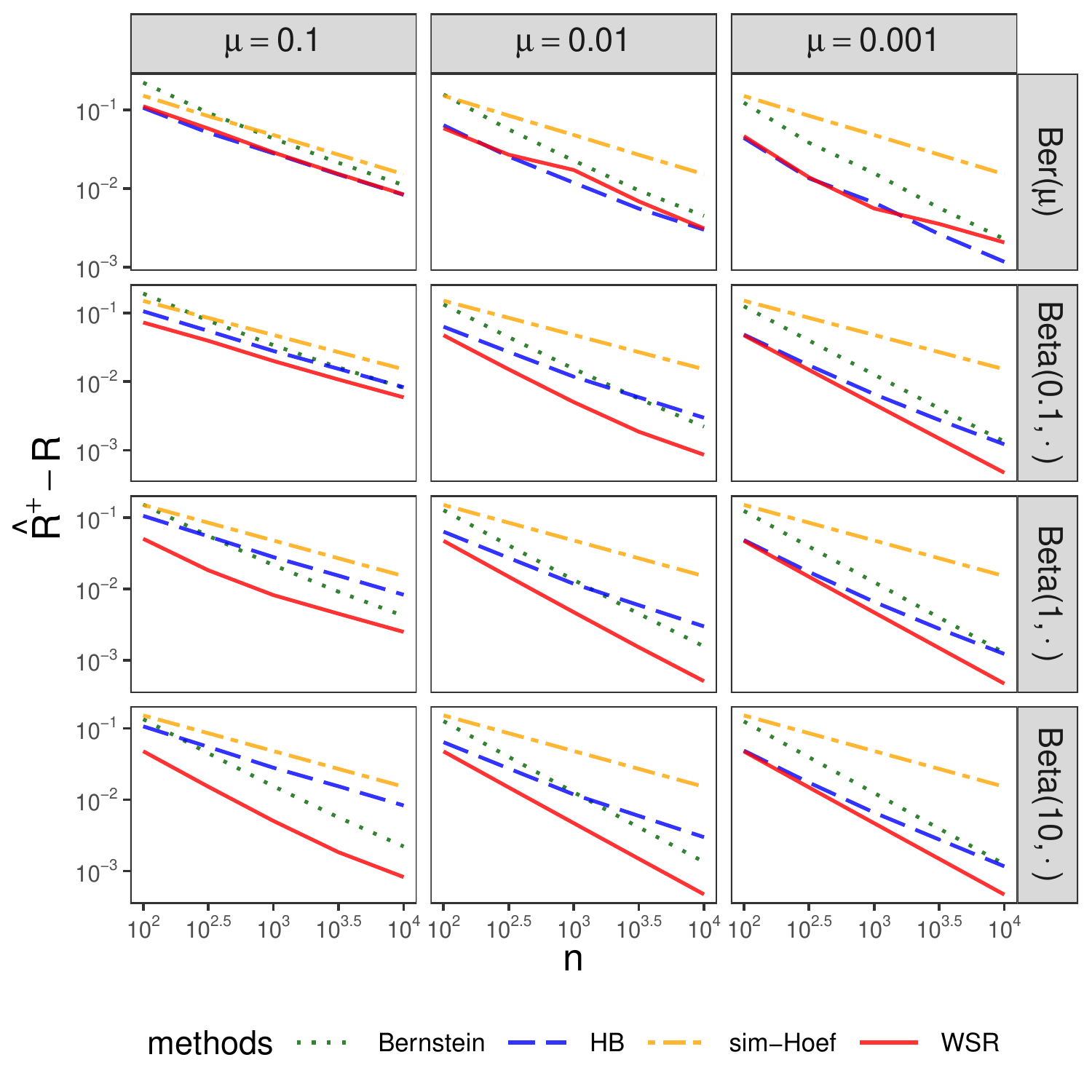}
\caption{Median of $\Rhat(\lambda) - R(\lambda)$}\label{subfig:UCB_bounded_medgap_delta0.01}
\end{subfigure}
\caption{Numerical evaluations of the simple Hoeffding bound \eqref{eq:Rhat_sim_hoeffding}, HB bound \eqref{eq:Rhat_HB}, empirical Bernstein bound \eqref{eq:eBern}, CLT bound \eqref{eq:Rhat_CLT}, and WSR bound (Proposition \ref{prop:WSR}) on a million independent samples of size $n$ with $\delta = 0.01$. Each row corresponds to a type of distribution and each column corresponds to a value of the mean. The CLT bound is excluded in (b) because it does not achieve the target coverage in most of the cases.}\label{fig:UCB_bounded_delta0.01}
\end{figure}

\begin{figure}
\begin{subfigure}{0.45\textwidth}
\includegraphics[width = 0.98\textwidth]{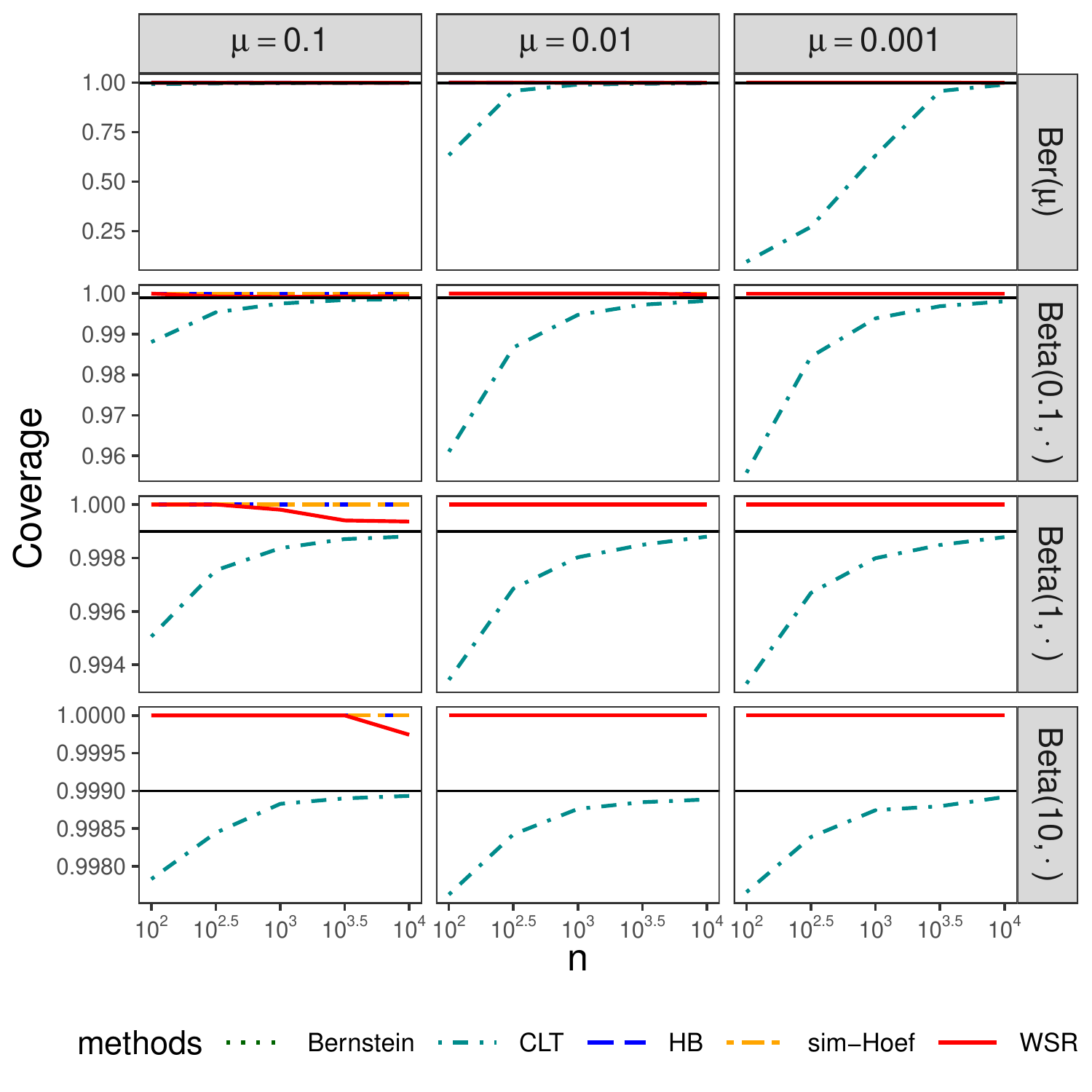}
\caption{Coverage $P(\Rhat(\lambda) \ge R(\lambda))$}\label{subfig:UCB_bounded_coverage_delta0.001}
\end{subfigure}
\begin{subfigure}{0.45\textwidth}
\includegraphics[width = 0.98\textwidth]{figures/UCB_bounded_medgap_delta0_1.pdf}
\caption{Median of $\Rhat(\lambda) - R(\lambda)$}\label{subfig:UCB_bounded_medgap_delta0.001}
\end{subfigure}
\caption{Numerical evaluations of the simple Hoeffding bound \eqref{eq:Rhat_sim_hoeffding}, HB bound \eqref{eq:Rhat_HB}, empirical Bernstein bound \eqref{eq:eBern}, CLT bound \eqref{eq:Rhat_CLT}, and WSR bound (Proposition \ref{prop:WSR}) on a million independent samples of size $n$ with $\delta = 0.001$. Each row corresponds to a type of distribution and each column corresponds to a value of the mean. The CLT bound is excluded in (b) because it does not achieve the target coverage in most of the cases.}\label{fig:UCB_bounded_delta0.001}
\end{figure}

\begin{figure}
\begin{subfigure}{0.49\textwidth}
\includegraphics[width = 0.98\textwidth]{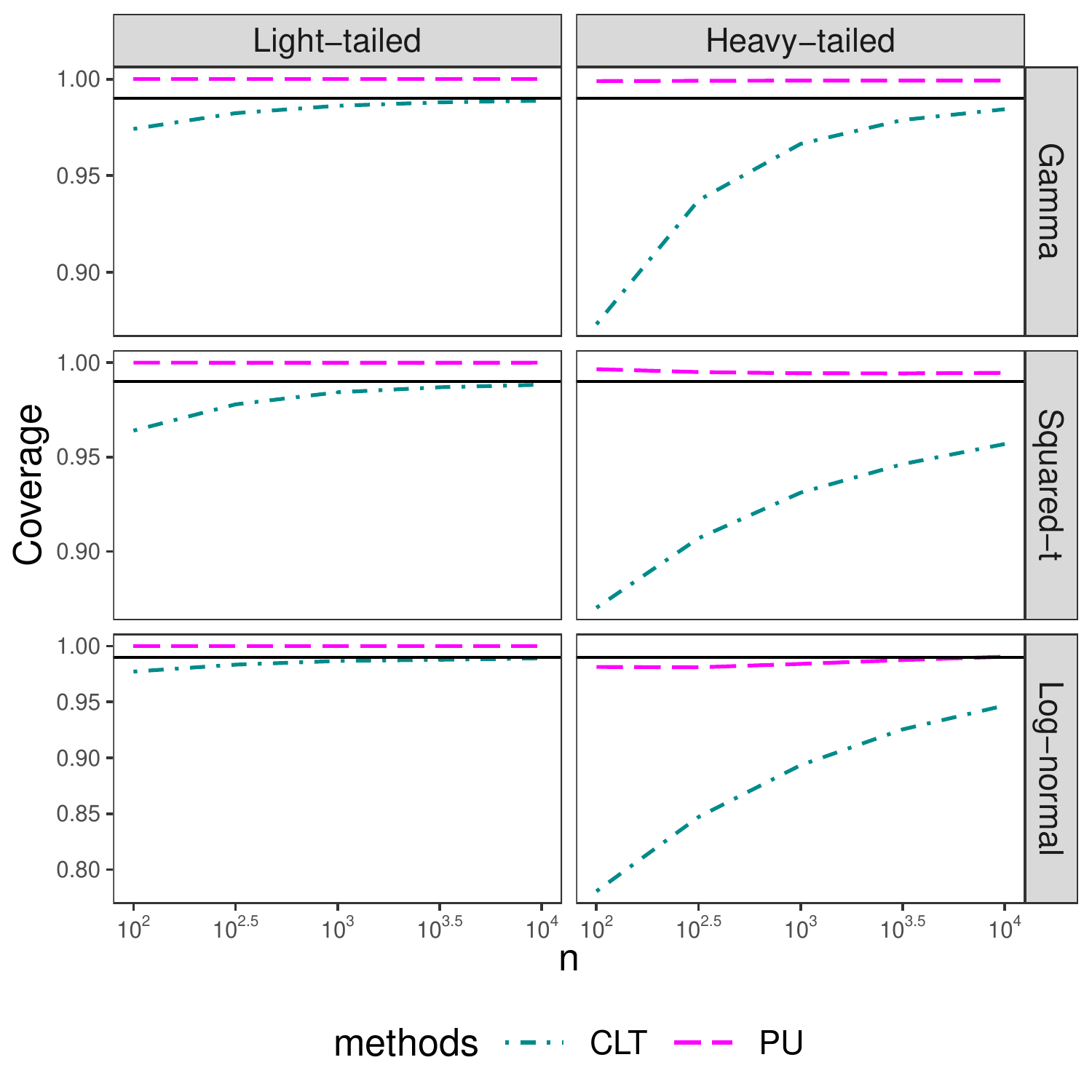}
\caption{Coverage $P(\Rhat(\lambda) \ge R(\lambda))$}\label{subfig:UCB_unbounded_coverage_delta0.01}
\end{subfigure}
\begin{subfigure}{0.49\textwidth}
\includegraphics[width = 0.98\textwidth]{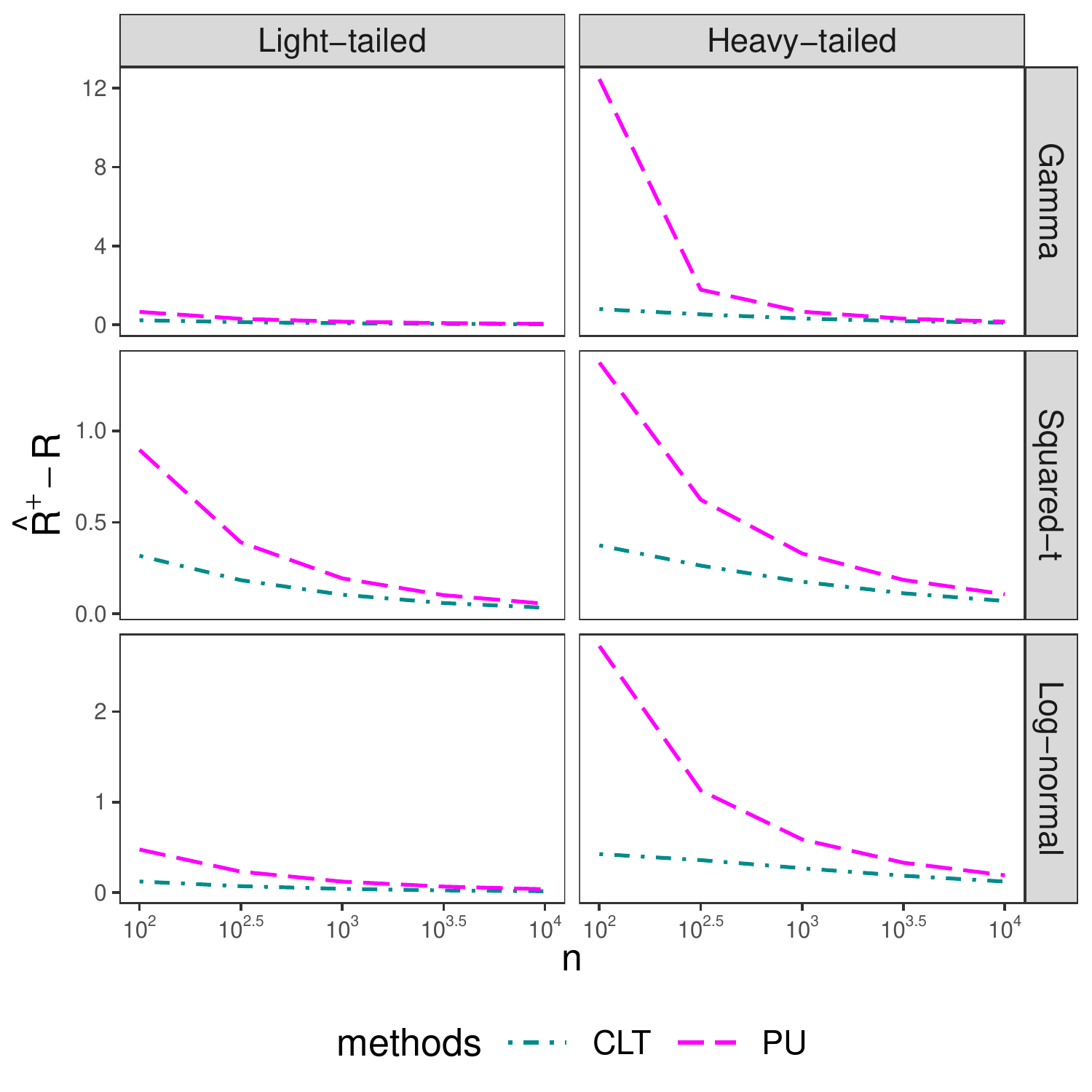}
\caption{Median of $\Rhat(\lambda) - R(\lambda)$}\label{subfig:UCB_unbounded_medgap_delta0.01}
\end{subfigure}
\caption{Numerical evaluations of the PU bound \eqref{eq:Rhat_PU} with the estimated coefficient of variation and the CLT bound \eqref{eq:Rhat_CLT}, on a million independent samples of size $n$ from each distribution in Table \ref{tab:unbounded} with $\delta = 0.01$. Each row corresponds to a type of distribution and each column corresponds to a value of the mean. }\label{fig:UCB_unbounded_delta0.01}
\end{figure}

\begin{figure}
\begin{subfigure}{0.49\textwidth}
\includegraphics[width = 0.98\textwidth]{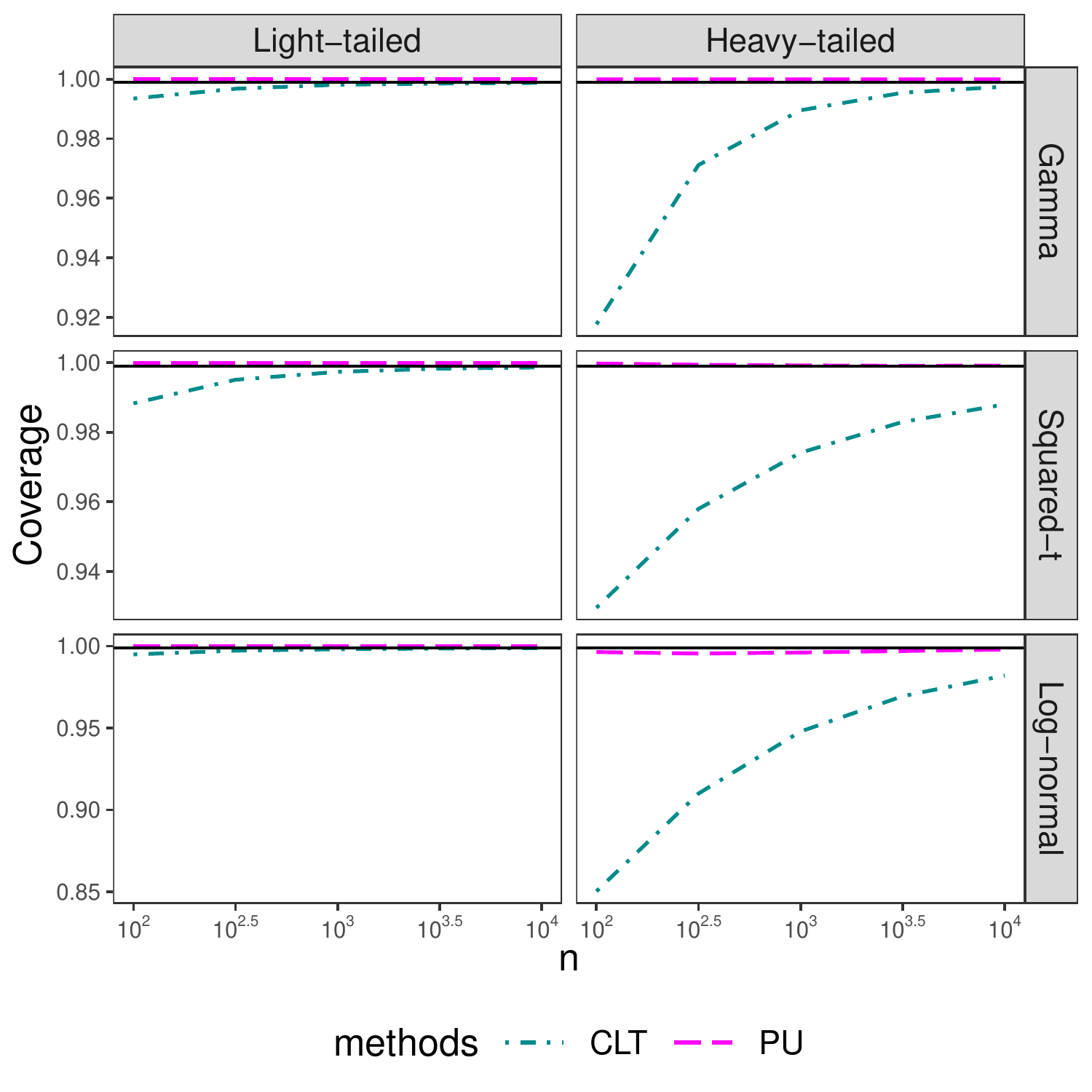}
\caption{Coverage $P(\Rhat(\lambda) \ge R(\lambda))$}\label{subfig:UCB_unbounded_coverage_delta0.001}
\end{subfigure}
\begin{subfigure}{0.49\textwidth}
\includegraphics[width = 0.98\textwidth]{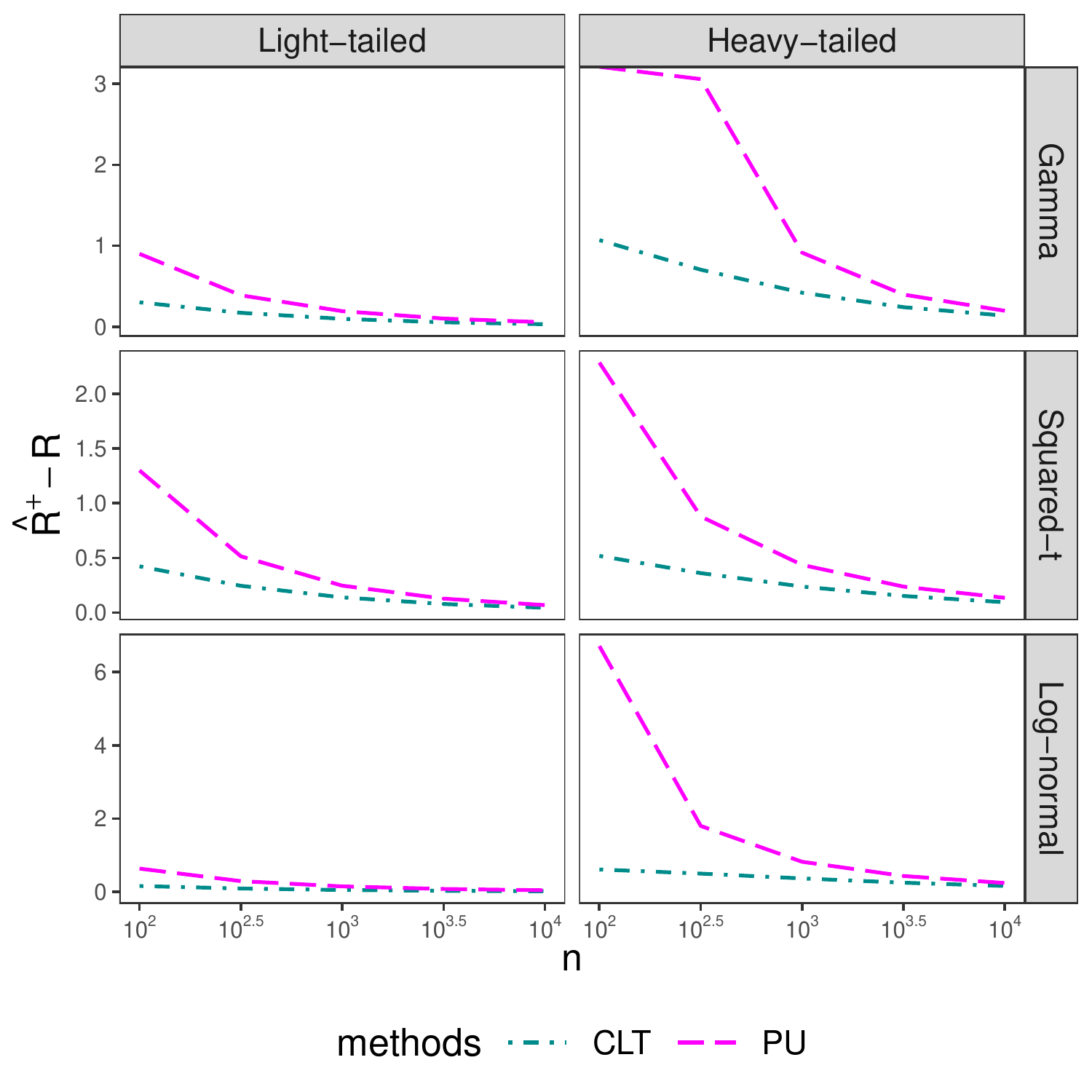}
\caption{Median of $\Rhat(\lambda) - R(\lambda)$}\label{subfig:UCB_unbounded_medgap_delta0.001}
\end{subfigure}
\caption{Numerical evaluations of the PU bound \eqref{eq:Rhat_PU} with the estimated coefficient of variation and the CLT bound \eqref{eq:Rhat_CLT},  on a million independent samples of size $n$ from each distribution in Table \ref{tab:unbounded} with $\delta = 0.001$. Each row corresponds to a type of distribution and each column corresponds to a value of the mean. }\label{fig:UCB_unbounded_delta0.001}
\end{figure}


\begin{figure}
    \centering
    \includegraphics[width = 0.5\textwidth]{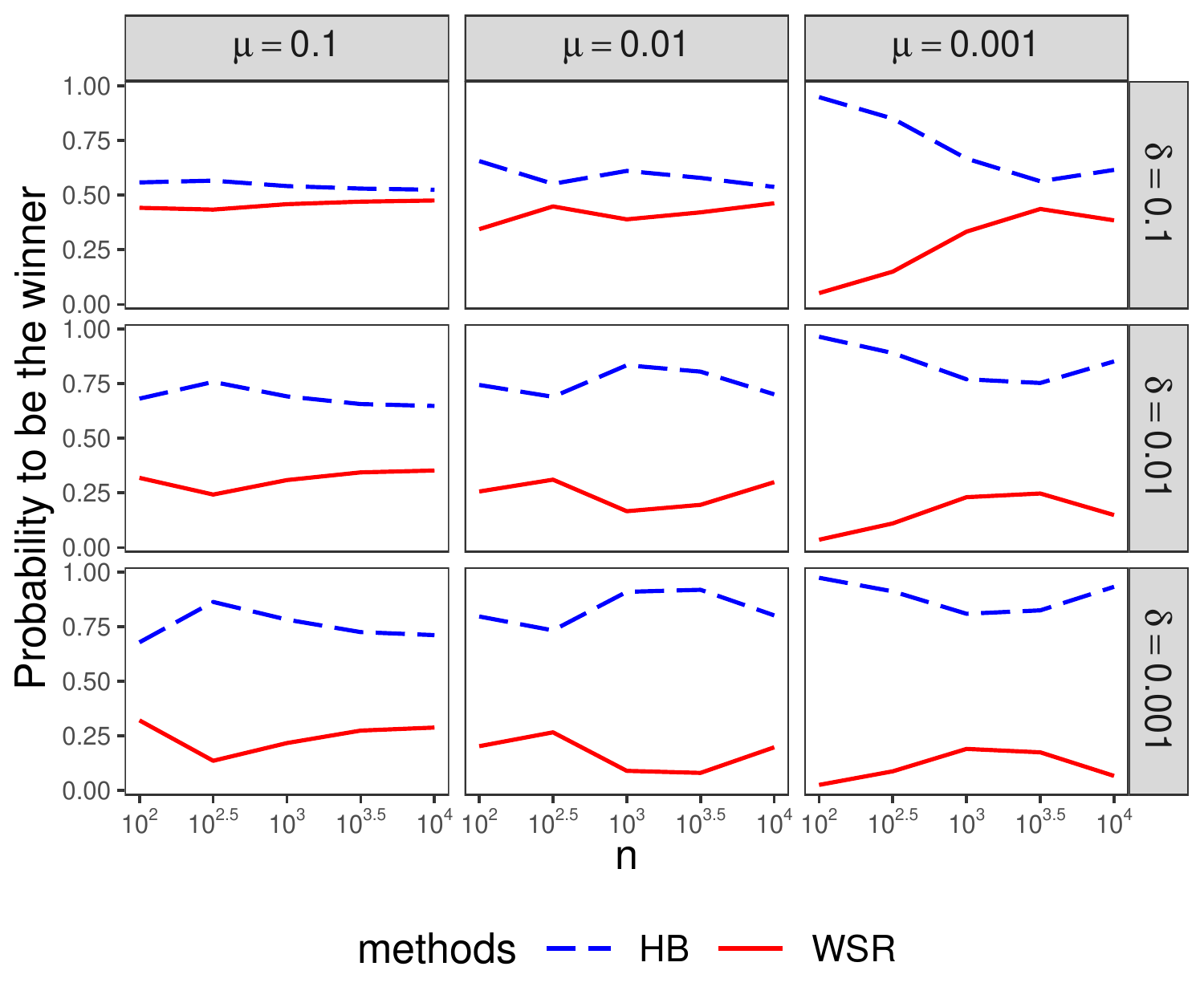}
    \caption{Fraction of samples on which the HB bound or the WSR bound is the winner among the four bounds, excluding the CLT bound, for Bernoulli distributions. Each row corresponds to a level and each column corresponds to a value of mean.}
    \label{fig:UCB_bounded_bestprob}
\end{figure}

\clearpage
\section{Adaptive Score Renormalization for Polyp Segmentation}
\label{app:polyp-renorm}
This section describes in detail the construction of our predictor in the polyp segmentation example in Section~\ref{sec:segmentation}. In order to construct a good set predictor from the raw predictor, we draw on techniques from the classical literature on image processing to detect and emphasize local peaks in the raw scores. In particular, we construct a renormalization function $r : [0,1]^{m \times n} \to [0,1]^{m \times n}$, which is a composition of a set of morphological operations.
We will now list a set of operations whose composition will define $r$.

Define the discrete Gaussian blur operator as $g: [0,1]^{m \times n} \times \R_{++} \times \mathbb{O}_+ \to [0,1]^{m \times n}$, where $\mathbb{O}_+$ is the set of odd numbers.
The second argument to $g$ is the standard deviation $\sigma$ of a Gaussian kernel in pixels and $k$ is the side length of the kernel in pixels.
The Gaussian kernel is then the matrix 
\begin{equation}
    K(\sigma,k)_{i,j}= C\exp\left\{-\frac{1}{2\sigma^2}\Big\|[i,j]-[\ceil{k/2},\ceil{k/2}]\Big\|^2\right\},
\end{equation}
where $C$ is chosen such that $\sum_{i,j}K_{i,j}=1$.
The function $g$ then becomes $g(S,\sigma,k)=S * K(\sigma,k)$, where $*$ denotes the 2D convolution operator.

We borrow a technique from mathematical morphology called {\em reconstruction by dilation} and use it to separate local score peaks from their background.
We point the reader to Robinson and Whelan \citep{robinson2004efficient} for an involved description of the algorithm we applied in our codebase.
For the purposes of this paper, we write the reconstruction by dilation algorithm as $dil : [0,1]^{m \times n} \to [0,1]^{m \times n}$.
The output of $dil$ is an array containing only the local peaks from the input, with all other areas set to zero.

Define the binarization function $bin_t: [0,1]^{m \times n} \to \{0,1\}^{m \times n}$ as $bin(x)_{i,j}=\ind{x_{i,j}>t}$.

In the next step, we binarize the local peaks and then split them into disjoint regions through the $2$-connected-components function $conn:\{0,1\}^{m \times n} \to 2^{\{0,1\}^{m \times n}}$.
Viewing a binary matrix $M$ as a graph, we can express it as an adjacency matrix $A \in mn \times mn$ where 
\begin{align}
    A(M)_{i,j} = \mathbbm{1}
    \Bigg\{
        \Big|\Big|\big[ \lfloor i/n \rfloor, \mod (i,n) \big]-\big[ \lfloor j/n \rfloor, \mod (j,n) \big]\Big|\Big| & < 2 \\
        M_{\lfloor i/n \rfloor,\mod (i,n)} & = 1  \\
        M_{\lfloor j/n \rfloor,\mod (j,n)} & = 1 \Bigg\}.
\end{align}
In words, each entry of $A$ corresponds to a pixel, and two pixels are connected by an edge if and only if they are adjacent with entry $1$ in the matrix $M$.
We can use $A$ to define a function $isconnected : \{0,1\}^{m \times n} \times m \times n \times m \times n \to \{0,1\}$ that takes a binary matrix $M$ and two coordinates $(i,j)$ and $(i', j')$ and returns $1$ if the coordinates are connected by a path.
Explicitly, $isconnected = \mathbbm{1}\big\{ \exists k \; : \; A^k_{ni+j,ni'+j'} = 1\big\}$.
Since $isconnected$ is reflexive, symmetric, and transitive, it defines an equivalence relation $\sim$.
We can formally define the set of all equivalence classes over indexes,
\begin{equation}
    \mathcal{E}(A) = \Big\{ \big\{ (i,j) \in m \times n : (i,j) \sim (i',j') \big\} : (i', j') \in m \times n\Big\}.
\end{equation}
Using $\mathcal{E}$, we can draw bounding boxes around each object as
\begin{align}
    bboxes(\mathcal{E}) = \bigg\{ & \Big[ \inf\big\{i : (i,j) \in E \textnormal{ for some } j\big\}, \sup\big\{i : (i,j) \in E \textnormal{ for some } j\big\}\Big] \times \\ 
    & \Big[ \inf\big\{j : (i,j) \in E \textnormal{ for some } i\big\}, \sup\big\{j : (i,j) \in E \textnormal{ for some } i\big\}\Big] : E \in \mathcal{E} \bigg\}
\end{align}
We can proceed to define a function $renorm$ that takes in a matrix of scores $M$ and a set of bounding boxes $bboxes$ and returns a renormalized matrix of scores:

\begin{equation}
    renorm(M,bboxes)_{i,j} = \frac{M_{i,j}}{\underset{b \in bboxes}{\min} \; \underset{\substack{(i',j')\in b \\ (i,j) \in b}}{\max} M_{i',j'}}.
\end{equation}

We can finally define $r$ as $r(M) = renorm(M,bboxes(\mathcal{E}(A(bin_t(g(M,\sigma,k))))))$ for use in Equation~\ref{eq:polyp-sets}.
\end{document}